\documentclass[11pt,fullpage]{article}
\usepackage{amsfonts,amsmath,amssymb,amsthm}
\usepackage{graphicx,caption,subcaption}
\usepackage{fullpage}
\usepackage{psfrag}
\usepackage{fancyhdr}
\usepackage{color,enumerate}
\usepackage{amsxtra,amscd}
\usepackage{setspace}
\usepackage{rotating}
\usepackage{placeins}
\usepackage{hyperref}
\hypersetup{colorlinks=true,breaklinks=true,linkcolor=blue,anchorcolor=red,citecolor=blue}
\usepackage{ragged2e}
\usepackage{multirow,rotating,makecell,longtable,threeparttable,booktabs,caption,lscape}
\usepackage{algorithm,algorithmicx,algpseudocode}
\usepackage{mathtools}
\usepackage{tikz,pgfplots}
\pgfplotsset{compat=newest}
\usepackage{datetime}

\ifdim\overfullrule>0pt 
  \usepackage{environ}

  \NewEnviron{tikzpicture}{%
    \begin{pgfpicture}
    \pgfpathrectanglecorners{\pgfpointorigin}{\pgfpoint{3cm}{3cm}}%
     \pgfusepath{stroke}\end{pgfpicture}%
  }
\fi

\newtheorem{theorem}{Theorem}

\newtheorem{lemma}{Lemma}

\newtheorem{assumption}{Assumption}
\usepackage{natbib}
 \bibpunct[, ]{(}{)}{,}{a}{}{,}%
 %
 %
 %
 %
 %

\newcommand{\comment}[1]{}

\newcommand{\field}[1]{\ensuremath{\mathbb{#1}}}
\newcommand{\N}{\ensuremath{\field{N}}} 
\newcommand{\R}{\ensuremath{\field{R}}} 
\newcommand{\PR}{\ensuremath{\mathsf{P}}} 
\newcommand{\E}{\ensuremath{\mathsf{E}}} 

\newcommand{\banditcount}[2]{C_{#1,#2}}
\newcommand{\banditcountoracle}[2]{\tilde{C}_{#1,#2}}

\newcommand{\banditmean}{m}
\newcommand{\banditmeanoracle}{\tilde{m}}
\newcommand{\banditwidth}{w}
\newcommand{\banditwidthoracle}{\tilde{w}}

\newcommand{\tpi}{\ensuremath{\tilde{\pi}}}
\newcommand{\setk}{\ensuremath{\mathcal{K}}}
\newcommand{\sett}{\ensuremath{\mathcal{T}}}
\newcommand{\sets}{\ensuremath{\mathcal{S}}}
\newcommand{\setd}{\ensuremath{\mathcal{D}}}
\newcommand{\setf}{\ensuremath{\mathcal{F}}}

\newcommand{\NB}{\sup_{v\in[0,1/2]}|\tilde{\epsilon}_k(v)|}
\newcommand{\PB}{\sup_{v\in[0,1/2]}|\tilde{y}_k(v)|}
\newcommand{\NBtight}{\sup_{v\in\Uscr_k}\abs{\tilde{\epsilon}_k{(v)}}}
\newcommand{\ProbNB}{1-\frac{48}{n^{H^2-1}}}
\newcommand{\ProbNBtight}{1-\frac{200}{n^{0.867 H^2-1}}-\frac{200}{n^{0.694  H^2 -1}}}
\newcommand{\ProbNBall}{1-\frac{48}{n^{H^2-1}}-\frac{200}{n^{0.867 H^2-1}}-\frac{200}{n^{0.694  H^2 -1}}}
\newcommand{\RegretBound}{\sqrt{Td\log^2(T)\log(T/d)}}

\newcommand{\banditset}{\ensuremath{\mathcal E}}

\newcommand{\Ascr}{\ensuremath{\mathcal A}}

\newcommand{\Escr}{\ensuremath{\mathcal E}}

\newcommand{\Gscr}{\ensuremath{\mathcal G}}
\newcommand{\Hscr}{\ensuremath{\mathcal H}}

\newcommand{\Kscr}{\ensuremath{\mathcal K}}

\newcommand{\Mscr}{\ensuremath{\mathcal M}}
\newcommand{\Nscr}{\ensuremath{\mathcal N}}

\newcommand{\Tscr}{\ensuremath{\mathcal T}}
\newcommand{\Uscr}{\ensuremath{\mathcal U}}
\newcommand{\Vscr}{\ensuremath{\mathcal V}}


\DeclareMathOperator*{\argmin}{\mathrm{argmin}}
\DeclareMathOperator*{\argmax}{\mathrm{argmax}}
\DeclarePairedDelimiter\abs{\lvert}{\rvert}
\DeclarePairedDelimiter\floor{\lfloor}{\rfloor}
\DeclarePairedDelimiter\ceil{\lceil}{\rceil}




\setcitestyle{authoryear,round,semicolon,aysep={,},yysep={,},notesep={, }}
\bibliographystyle{chicago}

\onehalfspacing

\begin{document}

\title{\Large \bf Learning and Optimization with Seasonal Patterns}
\author{
{\bf Ningyuan Chen} \\
{\small {\em Department of Management, University of Toronto Mississauga,}} \\
{\small {\em Rotman School of Management, University of Toronto, Canada, ningyuan.chen@utoronto.ca}} \vspace{3mm} \\
{\bf Chun Wang} \\
{\small {\em School of Economics and Management, Tsinghua University, Beijing, China, wangch5@sem.tsinghua.edu.cn}} \vspace{3mm}\\
{\bf Longlin Wang} \\
{\small {\em School of Economics and Management, Tsinghua University, Beijing, China, wangll3.16@sem.tsinghua.edu.cn}} \\ \\
}

\date{This version: August 2021. The first version: June 2020.}

\maketitle
\thispagestyle{empty}

\begin{abstract}
A standard assumption adopted in the multi-armed bandit (MAB) framework is that the mean rewards are constant over time.
This assumption can be restrictive in the business world as decision-makers often face an evolving environment where the mean rewards are time-varying.
In this paper, we consider a non-stationary MAB model with $K$ arms whose mean rewards vary over time in a periodic manner.
The unknown periods can be different across arms and scale with the length of the horizon $T$ polynomially.
We propose a two-stage policy that combines the Fourier analysis with a confidence-bound-based learning procedure to learn the periods and minimize the regret.
In stage one, the policy correctly estimates the periods of all arms with high probability.
In stage two, the policy explores the periodic mean rewards of arms using the periods estimated in stage one and exploits the optimal arm in the long run.
We show that our learning policy incurs a regret upper bound $\tilde{O}(\sqrt{T\sum_{k=1}^K T_k})$ where $T_k$ is the period of arm $k$.
Moreover, we establish a general lower bound $\Omega(\sqrt{T\max_{k}\{ T_k\}})$ for any policy.
Therefore, our policy is near-optimal up to a factor of $\sqrt{K}$.
\end{abstract}
\vspace{-15mm} ~\\

\begin{center}
\small \textbf{Keywords:} multi-armed bandit, non-stationary, periodicity, seasonality, spectral analysis
\end{center}
\clearpage

\section{Introduction}
\label{sec:introduction}

\subsection{Motivation}
Online learning, or more specifically, the multi-armed bandit (MAB) problem, focuses on the task of learning the reward distributions from an unknown environment while simultaneously optimizing cumulative rewards over a fixed time horizon $T$.
This problem has been studied extensively when the environment (reward distributions) is stationary over time, with numerous algorithms proposed to tackle the trade-off between exploration and exploitation when making decisions (see \citealt{bubeck2012regret} for a comprehensive review).

While the stationarity assumption about the reward distributions greatly simplifies the analysis, it does not hold in many decision problems in OR/MS and other fields when the environment is time-varying.
For example, when experimenting with different prices and learning the optimal one, a fashion retailer should take into account the seasonal demand shift when setting the prices for apparel.
Despite the practical relevance, it is difficult to develop a learning policy for non-stationary rewards, especially when the dynamics can change arbitrarily over time.
Recent studies (e.g., \citealt{besbes2015non}) have considered cases in which the environment does not change fast with respect to the length of the time horizon,
e.g., when a budget sublinear in $T$ is imposed on the total variation of the underlying reward distributions.
The restriction on the total variation plays a key role in keeping the MAB problem tractable, as a fast-changing environment would render any learned information obsolete immediately.
It seems hopeless to develop an effective learning policy in a fast-changing environment.

However, there is still a silver lining in spite of the challenge. We note that many non-stationary dynamics in practice display seasonality.
For example, the demand for winter apparel usually has a yearly cycle. If the fashion company manager correctly estimates the demand fluctuation within a year, she may set retail prices differently over seasons to maximize the total revenue.
Online advertising provides another motivating example. The advertisers bid on impressions in an ad exchange.
When the values of different types of impressions are unknown, the advertiser is facing an online learning problem. Moreover, the rewards (click-through rate) of the impressions from a website have seasonal patterns depending on the traffic and demographics of the visitors (see \citealt{villamediana2019destination}).
A periodicity assumption on the environment where the model parameters repeat values over cycles may capture the essential feature of real-world decision problems of this kind.
At the same time, this assumption could make it possible to design efficient learning algorithms while allowing the changes to occur quite rapidly
(Suppose that the magnitude of change over a cycle is a positive constant, then the total changes is $O(T)$ as $T$ grows).

In this paper, we study online learning for non-stationary environments with seasonal patterns.
Our research is motivated by the aforementioned practical examples and the fact that the current learning algorithms have limitations when handling rapid changes (linear in $T$).
Specifically, we study the problem under the MAB framework with $K$ arms (decisions) for the decision-maker (DM) to choose from at each epoch.
We assume that each arm $k$ generates a random reward, whose mean varies over time periodically with period $T_k$.
The DM does not know the length of the period or the mean reward of any arm initially, and her goal is to maximize the total expected reward over the horizon.

\subsection{Novelty and Contribution}
We propose a new formulation for the non-stationary MAB problem, and we contribute to the online learning literature in the domain of algorithm design and analysis.

\noindent
\textbf{Formulation.}
We impose few structural assumptions except for the periodic reward distributions.
Hence, our formulation is very general and could provide an adequate representation of many real-world applications.
Since the DM has little knowledge of the environment and we allow for sufficient flexibility in modeling the periodicity, the standard MAB techniques are not applicable due to the following challenges:
\begin{itemize}
\item
The lengths of the periods $T_k$'s are unknown.
Without first learning $T_k$'s, it is impossible for the DM to track the evolution of the rewards and estimate their means over cycles.
Although most periods in real-world applications are either daily, weekly, monthly or annual, it is typically unclear which one is the most prominent beforehand.
For example, \citet{chen2018can} detect the presence of a surprising weekly cycle in addition to the usually believed daily cycle, in the setting of healthcare management.

\item
The periods of arms may be asynchronous.
If all arms share a common period, i.e., $T_1=...=T_K$, the DM can treat the decision scenarios at the same phase of each cycle as an independent stationary MAB, and then the whole learning problem is simply decomposed into $T_1$ independent subproblems after estimating the value of $T_1$.
In our setting, arms may have different periods, and thus the learning and decision making are inevitably nested across arms when they are in different phases of their cycles.
This setting is motivated by asset allocation, when the DM may face different asset classes with distinct business cycles.

\item
The lengths of periods $T_k$'s may scale with the horizon $T$.
If $T_k$'s are negligible relative to $T$, one may hope to use the above decomposition idea by considering the least common multiplier (LCM) of $T_k$'s, which is a common period for all arms.
It turns out that this scheme is practically infeasible, because $T_k$'s are often not small relative to $T$, and their LCM can be much larger than $T$ for even a moderate number of arms.
For example, most studies (e.g., \citealt{brown2005statistical}) investigating the arrival process of service systems explore datasets that span at most a few years.
A monthly cycle is hardly negligible within the time frame of several years.
In our formulation, we allow $T_k$'s to scale with $T$ at a polynomial rate, and explore whether the problem is still learnable.
\end{itemize}

\noindent
\textbf{Algorithm.} To resolve the above challenges, we develop a two-stage learning policy that features the following novel designs.
\begin{itemize}
\item
In stage one, we estimate the lengths of periods $T_k$'s for all arms using the discrete Fourier transform and techniques from signal processing, specifically spectral analysis.
Our policy has an intuitive threshold structure, and is capable of correctly identifying all $T_k$'s with high probability, even when they scale with $T$.

\item
In stage two, we draw on the estimated lengths of periods from stage one and use a confidence- bound-based algorithm to make the trade-off between exploration and exploitation.
One notable feature of our algorithm is that each arm retains its own confidence bound for each phase of its period, which enables us to circumvent the intractable approach of LCM subproblems and achieve significantly better performance.

\item \label{pg:reuse1}
While we estimate the periods and learn the rewards separately in stages one and two, respectively,
we make efforts to efficiently utilize the observed data in an integrated way.
In particular, the samples collected for period estimation in stage one are reused in stage two for learning the reward distributions.
This allows the algorithm to use data more efficiently when the horizon $T$ is small or moderate.

\end{itemize}

\noindent
\textbf{Analysis.} We analyze the performance of our policy in terms of regret.
\begin{itemize}
\item
We provide a \emph{finite-sample bound} for the probability of the correct identification of the lengths of periods of all arms.
Although spectral analysis is a classic topic, such a theoretical guarantee is new in the literature to the best of our knowledge.
Our result implies that not knowing the periods doesn't impede learning, as the lengths of periods can be learned efficiently relative to the learning of the mean rewards in each phase.

\item \label{pg:reuse2}
Data reuse introduces dependency among the samples and thus complicates the regret analysis.
By carefully controlling the dependency structure, we still manage to prove the regret bound.
This technique might be applicable to other algorithms with a similar two-stage design.

\item
As the main result of this paper, we prove a regret upper bound $\tilde{O}(\sqrt{T\sum_{k=1}^K T_k})$ for the proposed learning policy, where $\tilde{O}$ denotes the asymptotic rate omitting logarithmic terms.
Moreover, we establish a general lower bound $\Omega(\sqrt{T\max_{k}\{ T_k\}})$ for any policy.
Therefore, our algorithm is near-optimal up to a factor of $\sqrt{K}$.
We also derive the optimal regret for various special cases in Appendix \ref{sec:appendix-extended-results}, including all arms sharing a common period and having the same/different seasonality.
\end{itemize}

\subsection{Literature Review}
\label{sec:literature}
The study of non-stationary reward distributions combined with the classic framework of MAB is receiving significant attention recently.
In their seminal work, \citet{auer2002nonstochastic} propose an upper-confidence-bound (UCB) algorithm EXP3.S that can handle the MAB problem with a \emph{finite} number of changes.
Recent works have generalized the framework by incorporating continuous changes \citep{besbes2014stochastic,besbes2015non,besbes2019optimal}.
Unlike the classic stationary MAB problem, in this stream of literature the mean rewards of all arms are allowed to vary continuously over time.
The objective is to minimize the regret compared to the benchmark of the optimal arm at each epoch in that changing environment.
It is clear that if the change is arbitrary (e.g., an unpredictable shift at each epoch), then no algorithm can achieve a regret sublinear in $T$.
Hence, a budget is imposed on the total variation of mean rewards over the horizon, and this budget appears in the regret consequently.
The budget is known to the DM in \citet{besbes2014stochastic,besbes2015non}.
In later papers, the changing budget can be unknown and learned \citep{karnin2016multi,luo2017efficient,auer2019adaptively,cheung2019hedging,mao2021near}.
Our paper differs from the above literature in that changes can be linear in $T$, and we show that the regret is still controllable due to the periodic structure.

Various other settings of non-stationary bandits are investigated recently \citep{allesiardo2015exp3,allesiardo2017non,levine2017rotting,raj2017taming,liu2018change}.
\citet{jaksch2010near,zhou2020regime} focus on problems with specific structures such as MDP or POMDP, which allow linear changing budget.
\citet{di2020linear} investigate linear bandits in a seasonal setting which however follows a different definition from ours.
In particular, they assume the non-stationary rewards with change points while the past stationary states of the environment may reoccur, and they do not study periodic rewards.
\cite{cai2021periodic} assume that all (continuous) arms have the same and known period, and they use Gaussian processes to learn the seasonal pattern by specifying a novel periodic kernel function.
\cite{traca2021regulating} study MAB problems with a simpler seasonal structure: there is a known periodic function modulating the otherwise stationary rewards, and they investigate the modification of existing algorithms such as UCB and $\epsilon$-greedy.
\cite{lykouris2020bandits} adopt a similar setting but the modulating function can be adversarial.
The literature doesn't address our question when the periods are unknown and asynchronous.

Online learning with non-stationary dynamics has been studied extensively in the context of dynamic pricing.
Early papers assume one or a few change points \citep{besbes2011minimax,besbes2014dynamic} upon which the objective function changes abruptly.
Recent papers have been focusing on specific structures of changes, including an additive term of time-varying price-independent components \citep{den2015tracking}, a privacy pricing setting \citep{xu2016dynamic}, varying parameters of linear demand \citep{keskin2017chasing}, a dynamic inventory system \citet{zhang2018perishable}, smooth or discontinuous linear changes \citep{chen2019dynamic},
growing market demand \citep{zhu2020demands}
and changing preferences for quality \citep{keskin2020selling}.
A comprehensive review
studying online learning in revenue management can be found in \citet{den2015dynamic}.
The rapid growth of literature reflects the importance of online learning in the business world, particularly in a non-stationary market environment.
The periodic pattern in this paper has not been studied before, but its application in dynamic pricing is highly relevant.
In fact, one of the motivating examples of this study is the seasonal demand patterns that are ubiquitous in retailing.

This paper is also related to the classic topic in statistics and signal processing, specifically how to estimate frequencies from a noisy signal.
For example, \citet{babtlett1948smoothing} suggests aggregating a few segments of the signal to reduce variance.
\citet{bartlett1963spectral,vere1982estimation,chen2019super} study the same problem for arrival data generated from point processes.
See standard textbooks such as \citet{stoica2005spectral} for a summary of the vast literature in this area.
In our problem, the observations are independent, non-stationary, and periodic.
There are many papers devoted to the asymptotic properties of the periodogram where the number of observations tends to infinity, such as \citet{olshen1967asymptotic,brillinger1969,shao2007asymptotic,shao2011modelling}.
However, to the best of our knowledge, no finite-sample analysis is available for the probability of correctly estimating the periods in our framework, which is essential for the regret analysis of online learning.
Some papers with finite-sample analysis either focus on stationary time series \citep{thomson1982spectrum} or point processes \citep{chen2019super}, which do not apply to our case.
In this paper, we develop a frequency identification algorithm, whose theoretical guarantee explicitly depends on the sample size and other parameters of the learning problem.

\smallskip
\noindent
\textbf{Paper Outline.} The remainder of this paper is organized as follows.
In Section \ref{sec:formulation}, we describe the formulation of the periodic MAB problem.
In Section \ref{sec:policy}, we propose our two-stage learning policy.
In Section \ref{sec:analysis}, we show that a regret upper bound $\tilde{O}(\sqrt{T\sum_{k=1}^K T_k})$ is achieved by our policy.
In Section \ref{sec:lower-bound}, we establish a general regret lower bound $\Omega(\sqrt{T\max_{k}\{ T_k\}})$ for any policy.
In Section \ref{sec:conclusion}, we present some concluding remarks.
The detailed proofs can be found in the appendices, and we also provide a study on the optimal regret for special cases where all arms have the same period.

\section{Problem Formulation}
\label{sec:formulation}
We consider an MAB problem over a finite-time horizon. Let $\sett = \{1,...,T\}$ denote the sequence of decision epochs, and let $\setk = \{1,...,K\}$ denote the set of arms (possible actions). At each epoch, the DM pulls one of the $K$ arms.
If arm $k \in \setk$ is chosen at epoch $t \in \sett$, the DM receives a random reward $Y_{k,t}$. We assume that the reward is specified as $Y_{k,t} \coloneqq \mu_{k,t} + \epsilon_{t}$ where the mean $\mu_{k,t} = \E[Y_{k,t}]$ is time-varying and the noise $\epsilon_{t}$ is an independent mean-zero random variable.

The DM's objective is to maximize the cumulative expected rewards over the horizon of $\sett$, but she has no information about any $\mu_{k,t}$ for all $k \in \setk$ and $t \in \sett$ initially.
Therefore, the DM needs to acquire the information of $\mu_{k,t}$ (exploration) and optimize immediate rewards by pulling the best arm $\argmax_{k}\{\mu_{k,t}\}$ at each epoch as often as possible (exploitation). It is well understood that this objective is not achievable when $\mu_{k,t}$ changes arbitrarily in $t$ since the knowledge learned in the past cannot be used to predict the future.
We study the case which assumes that the expected reward of each arm repeats its values after a period,
i.e., $\mu_{k,t+T_k} = \mu_{k,t}$ for all $k \in \setk$ where $T_k \in \N^{+}$ denotes the (minimum) period of arm $k$.
We also impose the following technical assumptions on the mean reward and the random noise, which are common in the MAB literature.
\begin{assumption}
\label{asp:mean-reward}
For all $k \in \setk$ and $t \in \sett$, the mean reward $\mu_{k,t} \in [0,1]$.
\end{assumption}
\begin{assumption}
\label{asp:sub-Gaussian}
The noise $\epsilon_t$ for $t \in \sett$ are independent sub-Gaussian random variables with parameter $\sigma$.
That is, $\E[\exp(\lambda \epsilon_t)] \leq \exp\left(\frac{1}{2} \sigma^2 \lambda^2\right)$ for all $\lambda\in \R$ and $ \PR(|\epsilon_t|>x)\le 2\exp(-\frac{x^2}{2\sigma^2})$ for all $x>0$.
\end{assumption}

The DM knows the values of $K$, $T$ and $\sigma$ and the fact that $\mu_{k,t}$ changes periodically, but she is not aware of the value of $\mu_{k,t}$ or $T_{k}$ for any arm $k \in \setk$ initially.
Let $\pi_{t} \in \setk$ denote the arm pulled by the DM at epoch $t$. With a little abuse of notation, we let $\pi\coloneqq\{\pi_{t}: t\in \sett\}$ denote an admissible policy which takes the action $\pi_{t}$ at epoch $t$ depending on the historical rewards observed and actions taken, i.e., $\{\pi_{1},Y_{\pi_{1},1},...,\pi_{t-1}, Y_{\pi_{t-1},t-1}\}$.
In the MAB literature, a policy $\pi$ is usually evaluated in terms of regret: the gap between the performance of pulling at each $t$ the arm which has the highest expected reward (optimal decisions made with full information) and the expected performance under the policy $\pi$.
That is, we define the pseudo-regret $R_{T}^{\pi} \coloneqq \sum_{t=1}^T \left(\max_{k \in \setk}\mu_{k,t}-\mu_{\pi_t,t}\right)$, and then the expected regret is given as
\begin{equation} \label{eq:regret-def}
\E[R_{T}^{\pi}] = \sum_{t=1}^T \left(\max_{k \in \setk}\mu_{k,t}-\E\left[\mu_{\pi_t,t}\right]\right),
\end{equation}
where the expectation $\E$ is taken with respect to the policy $\pi$ which is contingent on the (stochastic) history.
In the following sections, we propose a policy that helps the DM to learn and optimize the rewards, and we analyze the corresponding expected regret.

\section{The Proposed Two-Stage Learning Policy}
\label{sec:policy}
To learn the periodic pattern and the values of the expected rewards of each arm, our policy consists of two stages in sequence.
In stage one, we develop Algorithm \ref{alg:frequency-identification} based on spectral analysis to estimate the lengths of periods of all arms.
In stage two, we propose a confidence-bound-based learning Algorithm \ref{alg:learning} to further explore and exploit arm rewards simultaneously.

\subsection{Stage One: Period Estimation}
\label{sec:phaseone}
We adapt techniques in spectral analysis to identify the frequency components of the observed reward sequence for each arm, and thus estimate the corresponding period.
To motivate our frequency identification algorithm, we first briefly review some background in Section \ref{sec:phaseone-dft} (more related knowledge may be referred to \citealt{brigham1988fast}), and then describe the details of the algorithm in Section \ref{sec:phaseone-algorithm}.

\subsubsection{Discrete Fourier Transform and Periodogram}
\label{sec:phaseone-dft}
Fourier analysis implies that a periodic function $\mu_{k,t}$ can be represented as a sum of sinusoids. In this paper it is more convenient to work with the complex representation:
\begin{equation}
\label{eq:exp-rep}
\mu_{k,t} = \sum_{j=0}^{T_k-1} b_{k,j} \exp\left( 2\pi i\frac{j }{T_k}t\right),
\end{equation}
where $i=\sqrt{-1}$, $b_{0,k}\in \R$, and $(b_{k,j}, b_{k,T_k-j})$ is a pair of complex conjugates $b_{k,j}=\overline{b_{k,T_k-j}}$ for $1\le j < T_k$.
The decomposition of \eqref{eq:exp-rep} contains the components of the fundamental frequency $1/T_{k}$, the harmonics $j/T_{k}$ for $j=2,...,T_k-1$ and the constant term with $j=0$.
Note that $j \ge T_{k}$ are not needed because of the discrete sampling. For example, a frequency component $(T_k+1)/T_{k}$ is indistinguishable from $1/T_k$ since $\exp(2\pi i (T_k+1) t/T_k)=\exp(2\pi i  t/T_k)$ for $t\in \N^{+}$, which is referred to as ``aliasing'' in the language of signal processing.

Suppose that the DM observed a sequence of $n$ rewards $\{Y_{k,1},...,Y_{k,n}\}$ from arm $k$. We apply the discrete Fourier transform (DFT) to conduct analysis in the frequency domain. The DFT of the reward sequence $\tilde{y}_k(v)$ is a function which maps a frequency $v\in[0,1]$ to a complex value:
\begin{equation} \label{eq:dft-def}
\tilde{y}_k(v) \coloneqq \frac{1}{n} \sum_{t=1}^n Y_{k,t} \exp(-2\pi iv t).
\end{equation}
Note that we only need to consider the domain $v \in [0,1]$ because all frequency components $j/T_k\in [0,1]$.
Recall that $Y_{k,t} = \mu_{k,t} + \epsilon_{t}$ and thus we can decompose $\tilde{y}_k(v)$ as
\begin{equation}\label{eq:dft-decomp}
\tilde{y}_k(v) = \underbrace{\frac{1}{n} \sum_{t=1}^n \mu_{k,t} \exp(-2\pi ivt)}_{\tilde{\mu}_{k}(v)} + \underbrace{\frac{1}{n} \sum_{t=1}^n \epsilon_t \exp(-2\pi ivt)}_{\tilde{\epsilon}_{k}(v)}
\end{equation}
where $\tilde{\mu}_k(v)$ and $\tilde{\epsilon}_k(v)$ denote the DFT of the mean reward and the noise respectively.

To identify frequency components $j/T_k$, we inspect the \emph{periodogram} which estimates the spectral density by plotting the \emph{modulus} of a DFT against the frequency.
Since the noise $\epsilon_{t}$ is random, its DFT $\tilde{\epsilon}_k(v)$ is not expected to show any pattern in the frequency domain.
To illustrate, the periodogram of a possible realization of $\tilde{\epsilon}_k(v)$ is shown in the left panel of Figure~\ref{fig:DFT}.
On the other hand, using expression \eqref{eq:exp-rep}, the DFT of the mean reward can be rewritten as
\begin{align}
\label{eq:dft-deterministic}
\tilde{\mu}_k(v) &= \frac{1}{n} \sum_{t=1}^n \sum_{j=0}^{T_k-1} b_{k,j}\exp\left(2\pi i \left(\frac{j}{T_k}-v\right)t\right)= \sum_{j=0}^{T_k-1} \underbrace{\frac{1}{n} \sum_{t=1}^n b_{k,j}\exp\left(2\pi i \left(\frac{j}{T_k}-v\right)t\right)}_{\tilde{\mu}_{k,j}(v)},
\end{align}
where $\tilde{\mu}_{k,j}(v)$ denotes the DFT associated with the frequency component $j/T_{k}$.
As shown in Appendix \ref{sec:appendix-proof-phaseone}, we have
\begin{equation}\label{eq:dft-sinform}
\tilde{\mu}_{k,j}(v) = \frac{b_{k,j}}{n} \exp\left(2\pi i \left(\frac{j}{T_k}-v\right) \frac{n+1}{2}\right) \frac{\sin\left(\pi (j/T_k-v)n\right)}{\sin\left(\pi(j/T_k-v)\right)}.
\end{equation}
If $v = j/T_k$ and $|b_{k,j}|>0$, we have $|\tilde{\mu}_{k,j}(j/T_k)| = |b_{k,j}|$; otherwise if $v\neq j/T_k$, we have $\displaystyle{\lim_{n\to\infty} |\tilde{\mu}_{k,j}(v)| = 0}$. Therefore, when the sample size $n$ goes to infinity, we expect to see a ``spike'' appearing at $v=j/T_k$ in the periodogram of $\tilde{\mu}_{k,j}(v)$ which is illustrated in the middle panel of Figure~\ref{fig:DFT}.
However, since the sample size is limited in practice, $|\tilde{\mu}_{k,j}(v)|$ in general is not zero at $v\neq j/T_k$ as illustrated in the right panel of Figure~\ref{fig:DFT}.
This phenomenon of non-zero periodogram at $v \neq j/T_k$ due to finite sample size is referred to as \emph{spectral leakage}. We also note that the \emph{main lobe} surrounding the frequency component $v=j/T_k$ is of width $2/n$ and the each \emph{side lobe} is of width $ 1/n$.
\begin{figure}[t]
\centering
\begin{tikzpicture} [baseline]
    \begin{axis}[width=0.375\textwidth, height=0.375\textwidth,
    xmin=0, xmax=1, ymin=0, ymax=5,
    xtick={0,0.5,1}, xticklabels={0, 1/2, 1}, ytick={10}, yticklabels={},
    xlabel=$v$, x label style={at={(axis description cs:0.5,-0.12)},anchor=north},
    ylabel=$|\tilde{\epsilon}_k(v)|$, y label style={at={(axis description cs:-0.15,0.9)},rotate=0, anchor=north}],
    \addplot+[mark=none] table [x=freq, y=dft_noise, col sep=comma] {./data.csv};
    \end{axis}
\end{tikzpicture}
\begin{tikzpicture}[baseline]
    \begin{axis}[width=0.375\textwidth, height=0.375\textwidth,
    xmin=0, xmax=1, ymin=0, ymax=1.6,
    xtick={0.4}, xticklabels={$\frac{j}{T_k}$}, xticklabel style={rotate=0,anchor=near xticklabel}, ytick={1}, yticklabels={},
    xlabel=$v$, x label style={at={(axis description cs:0.5,-0.12)},anchor=north},
    ylabel=$|\tilde{\mu}_{k,j}(v)|$, y label style={at={(axis description cs:-0.15,.87)},rotate=0, anchor=north}],
    \addplot+[ycomb, mark=none] coordinates {(0.4,1)};
    \draw[] ([yshift = 3cm] axis description cs:0.12, 0) --  ([yshift = 3cm] axis description cs:0.12, 0) node [midway, rotate=0, fill=white, yshift=-1pt, inner sep=0.2ex] {$b_{k,j}$};
    \end{axis}
\end{tikzpicture}
\begin{tikzpicture}[baseline]
    \begin{axis}[width=0.375\textwidth,height=0.375\textwidth,
    xmin=0, xmax=1, ymin=0, ymax=1.6,
    xtick={0.4}, xticklabels={}, xticklabel style= {rotate=0,anchor=near xticklabel}, ytick={1},yticklabels={},
    xlabel=$v$, x label style={at={(axis description cs:0.5,-0.12)},anchor=north},
    ylabel=$|\tilde{\mu}_{k,j}(v)|$, y label style={at={(axis description cs:-0.15,.87)},rotate=0, anchor=north}, clip=false]
    \addplot+[mark=none] table[x=freq,y=dft_finite, col sep=comma] {./data.csv};
    \draw[|<->|] ([yshift = -0.3cm] axis description cs:0.1, 0) --  ([yshift = -0.3cm] axis description cs:0.2, 0) node [midway, rotate=0, fill=white, yshift=-1pt, inner sep=0.2ex] {$\frac{1}{n}$};
    \draw[|<->|] ([yshift = -0.3cm] axis description cs:0.6, 0) --  ([yshift = -0.3cm] axis description cs:0.7, 0) node [midway, rotate=0, fill=white, yshift=-1pt, inner sep=0.2ex] {$\frac{1}{n}$};
    \draw[|<->|] ([yshift = -0.3cm] axis description cs:0.3, 0) --  ([yshift = -0.3cm] axis description cs:0.5, 0) node [midway, rotate=0, fill=white, yshift=-1pt, inner sep=0.2ex] {$\frac{2}{n}$};
    \draw[] ([yshift = 0.5cm] axis description cs:0.4, 0) --  ([yshift = 0.5cm] axis description cs:0.4, 0) node [midway, rotate=0, fill=white, yshift=-1pt, inner sep=0.2ex] {$\frac{j}{T_k}$};
    \draw[] ([yshift = 3cm] axis description cs:0.12, 0) --  ([yshift = 3cm] axis description cs:0.12, 0) node [midway, rotate=0, fill=white, yshift=-1pt, inner sep=0.2ex] {$b_{k,j}$};
    \end{axis}
\end{tikzpicture}
\caption{The periodograms of the noise term $|\tilde{\epsilon}_k(v)|$ (left panel),
$|\tilde{\mu}_{k,j}(v)|$ for sample size $n\to\infty$ (middle panel)
and $|\tilde{\mu}_{k,j}(v)|$ with finite samples (right panel).}
\label{fig:DFT}
\end{figure}
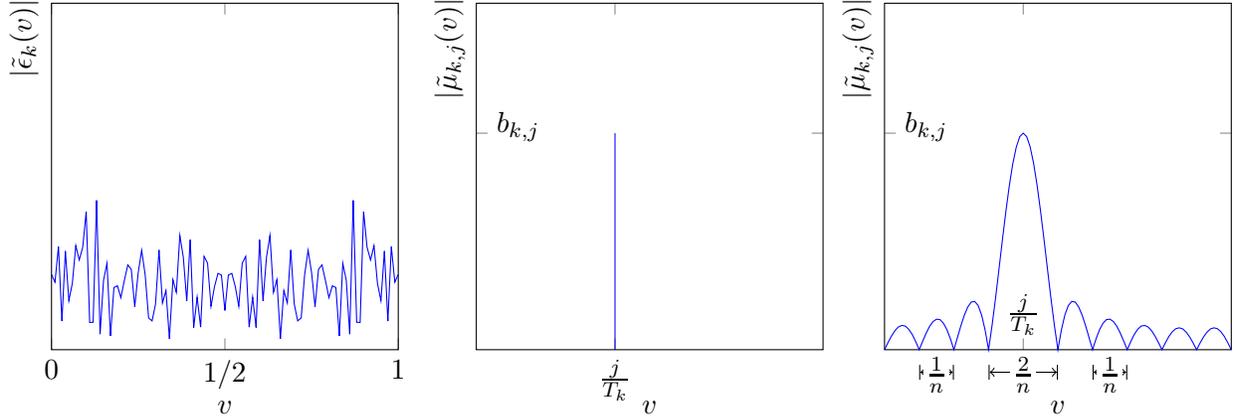

In order to estimate period $T_k$, we investigate the periodogram of $\tilde{y}_k(v)$ and expect to identify frequency components $j/T_k$ with $|b_{k,j}|>0$, which are referred to as \emph{present frequencies} in the following discussion. When $v$ is a present frequency, $|\tilde{y}_k(v)|$ is the aggregation of the spike at the main lobe of $\tilde{\mu}_{k,j}(v)$, the leakage from side lobes of $\tilde{\mu}_{k,j'}(v)$ for $j' \neq j$, and the noise $\tilde{\epsilon}_{k}(v)$. On the other hand, if $v$ is far apart from any $j/T_k$, $|\tilde{y}_k(v)|$ is the aggregation of the leakage and the noise.
Therefore, if a proper \emph{threshold} can be established to be both a lower bound of the spikes and an upper bound of the sum of  leakage and noise, then it will help screen out the spikes of main lobes from the floor of leakage and noise, according to the differences in their scales.
As a result, we are able to identify present frequencies and then to estimate $T_k$.
This is the main idea behind our threshold-based algorithm of frequency identification, with details  provided in the following Section \ref{sec:phaseone-algorithm}.
We also note that $|\tilde{y}_{k}(v)|=|\tilde{y}_{k}(1-v)|$ from the definition \eqref{eq:dft-def}, i.e., the periodogram is symmetric with respect to $v=1/2$ in the frequency domain $v \in [0,1]$. Hence, we only need to inspect the periodogram for the half domain of $v\in [0,1/2]$.

\subsubsection{The Frequency Identification Algorithm}
\label{sec:phaseone-algorithm}
We explain the intuition of Algorithm \ref{alg:frequency-identification}, in particular how to choose a threshold and then how to apply an adaptive neighborhood approach to frequency identification.
In the beginning (Step \ref{step:begin} - \ref{step:exploration end}), we conduct an exploration of $nK$ epochs where  each of the $K$ arms is pulled for $n$ times consecutively. The period of each arm is investigated individually. Given the reward sample sequence of arm $k$, we generate its periodogram in Step \ref{step:compute periodogram}, and initialize the set of candidate present frequencies $\setf$ by considering all possible integer values of $T_k$ in Step \ref{step:define candidate set}.

\begin{algorithm}[]
    \caption{stage one: period estimation}
    \label{alg:frequency-identification}
    \begin{algorithmic}[1]
        \State Input: $T$, $K$ and $\sigma$ \label{step:begin}
        \State Choose parameters: $n$ (length of exploration for each arm), $g \ge \max\{2, \sqrt{n}\}$ (the width of the neighborhood to be excluded is $\frac{2g}{n}$), and  $H>0$ (a constant in the threshold)\label{step:para-choice}
        \For{$t=1:nK$} \Comment{Explore each arm sequentially}\label{step:arm-sequential}
        \State $k\gets \lfloor \frac{t-1}{n}+1 \rfloor$
        \State Pull arm $k$, observe the reward $Y_{k,t}=\mu_{k,t}+\epsilon_t$
        \EndFor \label{step:exploration end}

        \For{$k=1:K$} \Comment{Estimate period $T_k$ for arm $k$ }\label{step:start-periodogram}

        \State \label{step:compute periodogram}
        Compute the periodogram $\abs{\tilde{y}_k(v)} = \abs{\frac{1}{n}\sum_{s=n(k-1)+1}^{nk}Y_{k,s}\exp(-2\pi i v s)}$ for $v \in [0,1/2]$

        \State \label{step:define candidate set}
        Initialize the set of candidate frequencies $\setf \gets \left\{\frac{j_1}{j_2}: j_1,j_2 \in \N^+, ~ 1\leq j_1 < j_2< \frac{n}{2g} \right\}$ \hspace{0cm}

        \State Compute the threshold $\tau_k$: \label{step:threshold}
        \begin{align*}
            A_j & \gets \sup \left\{ \frac{|\sin(\pi \nu)|}{\pi \nu}: \nu \in [j,j+1] \right\},\ j=1,2,\dots\\
            U_1 &\gets \sum_{j=0}^{\lfloor{\frac{n-2g -1}{4g }\rfloor}} A_{(2j+1)g}, ~\ U_2 \gets \sum_{j=1}^{\lfloor{\frac{n-1}{4g }\rfloor}} A_{2jg-1}\\
            \bar\epsilon_v&\gets\frac{2\sigma H}{1-\pi/24}\sqrt{\frac{\log(n)}{n}} \\
            \tau_k &\gets  \bar\epsilon_v + \frac{\pi U_1}{1-\pi U_2}\left(\bar\epsilon_v + \PB \right)
        \end{align*}

        \State\label{step:exclude0} Initialize the frequency domain of interest $\setd \gets \left\{v: \frac{g }{n} \leq v \leq \frac12, ~ \abs{\tilde{y}_k(v)}>\tau_{k} \right\}$  \hfill \Comment{The neighborhood of $v=0$ is excluded}

        \State $i\gets 0$  \label{step:search begin}
        \While{$\setd$ is not empty}
        \State $i\gets i+1$
        \State Find a global maximum of the periodogram in $\setd$ as $v^\ast_i = \argmax_{v \in \setd}|\tilde{y}_k(v)|$  \label{step:global-maximum}
        \State Find the frequency in $\setf$ closest to $v^\ast_i$ such that $\hat{v}_i = \argmin_{v \in \setf}\abs{v - v_i^\ast}$\label{step:pick-frequency}
        \State Exclude the neighborhood of $\hat{v}_i$ and update $\setd$: $\setd \gets \setd \setminus \left(\hat{v}_i-\frac{g }{n},\hat{v}_i+\frac{g }{n}\right)$ \label{step:exclude-neighborhood}
        \EndWhile \label{step:search end}
        \State Return the estimated period for arm $k$: $\hat T_k=\textsc{LCM}(\hat{v}_1^{-1},\hat{v}_2^{-1},...)$ 
        \label{step:LCM}
        \EndFor
        \label{step:end-periodogram}
    \end{algorithmic}
\end{algorithm}

\textbf{Choosing a Threshold.}
Now we proceed to Step \ref{step:threshold}, the key step of Algorithm~\ref{alg:frequency-identification} which determines the threshold.
Ideally, the periodogram consists of spikes located at each present frequency.
However, this is not exactly the case due to the noise and spectral leakage discussed in Section \ref{sec:phaseone-dft}.
To this end, we look for a threshold $\tau_{k}$ to filter out the noise and the leakage.
Recall that the magnitude of $\tilde{y}_{k}(v)\sim b_{k,j}$ at $v=j/T_k$ as $n\to\infty$, which is unknown a priori, so $\tau_{k}$ needs to be determined through a data-driven approach using observed rewards, i.e., calculated after inspecting the periodogram.
Otherwise a pre-specified threshold may leave out present frequencies if set too large, or include spurious frequencies if set too small.
To derive a proper $\tau_k$, we develop the following results in Section \ref{sec:analysis-phaseone}:
Lemma \ref{lem:noise-bound} establishes an upper bound $\bar\epsilon_v$ on the noise;
Lemma \ref{lem:bound-leakage} provides upper bounds $U_1$ and $U_2$ on the leakage;
and Lemma \ref{lem:Bound-B-Above} generates a data-driven upper bound for $|b_{k,j}|$.
Putting these pieces together, Lemma \ref{lem:threshold} guides us to the choice of $\tau_k$ that is large enough to exclude the periodogram not close to present frequencies.
We also expect that the periodogram of present frequencies $|\tilde{y}_k(j/T_k)|$ can emerge above $\tau_k$, and so we develop Lemma \ref{lem:no-false-negative} to obtain a data-driven lower bound on $|\tilde{y}_k(j/T_k)|$. Based on Lemma \ref{lem:noise-bound} - \ref{lem:no-false-negative}, we derive a suitable value for $\tau_k$ which ensures that local maxima near each present frequency will be selected.

\textbf{Neighborhood Approach.} In step \ref{step:exclude0}, we are interested in the sub-domains of $\setd$ where the periodogram is above $\tau_k$. One present frequency is not necessarily the local maximum in the periodogram due to leakage and the noise, and more importantly, the threshold may select local maxima created by side lobes near that present frequency. Therefore, we can not simply treat all the local maximum above $\tau_k$ as estimates of present frequencies.
To remedy this issue, we develop Lemma \ref{lem:local-noise-bound} and \ref{lem:narrow-down-neighborhood} in Section \ref{sec:analysis-phaseone} to guarantee that, under certain technical conditions, the present frequency can be recovered through matching the largest local maximum to the nearest candidate frequency in $\setf$. Then, we remove a neighborhood of width $2g/n$ from the selected present frequency (recall that the width of a side lobe is $1/n$).
If the parameter $g$ is well chosen, only one possible present frequency is located inside the neighborhood, and side lobes outside the neighborhood decay sufficiently so that they do not emerge above the threshold. We also note that the constant term $b_{0,k}$ might be large relative to the magnitude of cyclic components $|b_{j,k}|$ in many applications, and thus its leakage  may distort the present frequencies near $v=0$. Hence, we exclude the neighborhood of the end point $v=0$ in Step \ref{step:exclude0}.

From Step \ref{step:search begin} to \ref{step:search end}, the following procedure is repeated: searching for the global maximum, matching it to the corresponding present frequency, and removing the neighborhood adaptively.
The procedure terminates when the periodogram for the remaining frequencies is completely below the threshold.
Eventually we obtain an estimate of period $\hat T_k$ using the least common multiple (LCM) of the reciprocals of identified present frequencies in Step \ref{step:LCM}.

We assume that $T$ is large enough such that sufficient exploration can be conducted for each of these $K$ arms.
Algorithm~\ref{alg:frequency-identification} is also conditional on the requirement that $T_k$ cannot be too large relative to $n$. Otherwise the present frequencies might be too close to each other (the distance between two present frequencies in the periodogram can be as small as $1/T_k$) and the neighborhood approach may exclude other present frequencies. Note that these are fundamental requirements that are independent of the frequency identification approach. Therefore, we propose the following assumption, which guarantees $\hat{T_k}=T_k$ can be successfully estimated with high probability.
\begin{assumption}
\label{asp:period}
Assume that $T > 4K$ and for all $k \in \setk$, the period satisfies $\displaystyle{T_k < \frac{n}{2g}}$ where $g$ is an integer parameter satisfying $g \geq \max\{2, \sqrt{n}\}$.
\end{assumption}

Recommended choices for parameters that satisfy Assumption~\ref{asp:period} are
\begin{align}
\label{eq:parameter}
n= \lfloor \sqrt{T/K}\rfloor,
~g = \lceil \sqrt{n}~ \rceil
~ \text{and} ~ H=\sqrt{1 +\log(n)}.
\end{align}
We aim at a regret rate $O(\sqrt{TK})$ for stage one.
Specifically, we choose the length of stage one as $\sqrt{TK}$ by pulling each arm $n \sim \sqrt{T/K}$ times.
To estimate $T_k$'s accurately,
we need $T_k^2 \sim n$ due to the resolution of Algorithm \ref{alg:frequency-identification}, and hence $T_k \sim (T/K)^{1/4}$ is required for our approach.
We choose parameters in \eqref{eq:parameter} to impose $\displaystyle{T_k < \frac{T^{1/4}}{2K^{1/4}}}$ for all $k \in \setk$.
The subtlety of choosing these values is further discussed at the end of Appendix \ref{sec:appendix-proof-analysis-phaseone}.

To further explain Algorithm~\ref{alg:frequency-identification}, we demonstrate it with an example.
Since frequency identification is conducted for each arm independently, we focus on a representative case of an individual arm $k$. Suppose that the mean reward is $\mu_{k,t}=3+3\sin\left(\frac{1}{2}\pi t\right)+3\cos\left(\pi t\right)$ of period $T_k=4$, and thus the present frequencies  are $v_1=1/2$ and $v_2=1/4$. We also assume that the noise $\epsilon_t$ is normally distributed with mean 0 and standard deviation $\sigma=0.2$. We generate $n=50$ sample rewards, and the periodogram of these observations is shown in Figure~\ref{fig:alg1-example}a.
Using parameters $H=\sqrt{1+\log(n)}=2.21$ and $g =\lceil \sqrt{n} \rceil=8$, a data-driven threshold $\tau_k=0.885$ is computed according to Step~\ref{step:threshold}. We exclude the neighborhood of $v=0$.
In Figure~\ref{fig:alg1-example}b, we find the first maximum above the threshold, match it to the present frequency $\hat v_1 = 1/2$, and remove the neighbor around $\hat v_1$.
In Figure~\ref{fig:alg1-example}c, we repeat the above process to identify $\hat v_2=1/4$.
In Figure~\ref{fig:alg1-example}d,  when no local maximum above the threshold is left, the algorithm terminates with  $(\hat v_1=1/2, \hat v_2=1/4)$ and obtains an estimated period $\hat T_k=\textsc{LCM}(2, 4)=4$.
By the analysis in Section~\ref{sec:analysis-phaseone}, Algorithm~\ref{alg:frequency-identification} ensures that $\hat T_k=T_k$ is correctly estimated with a probability at least $0.983$ in this example.
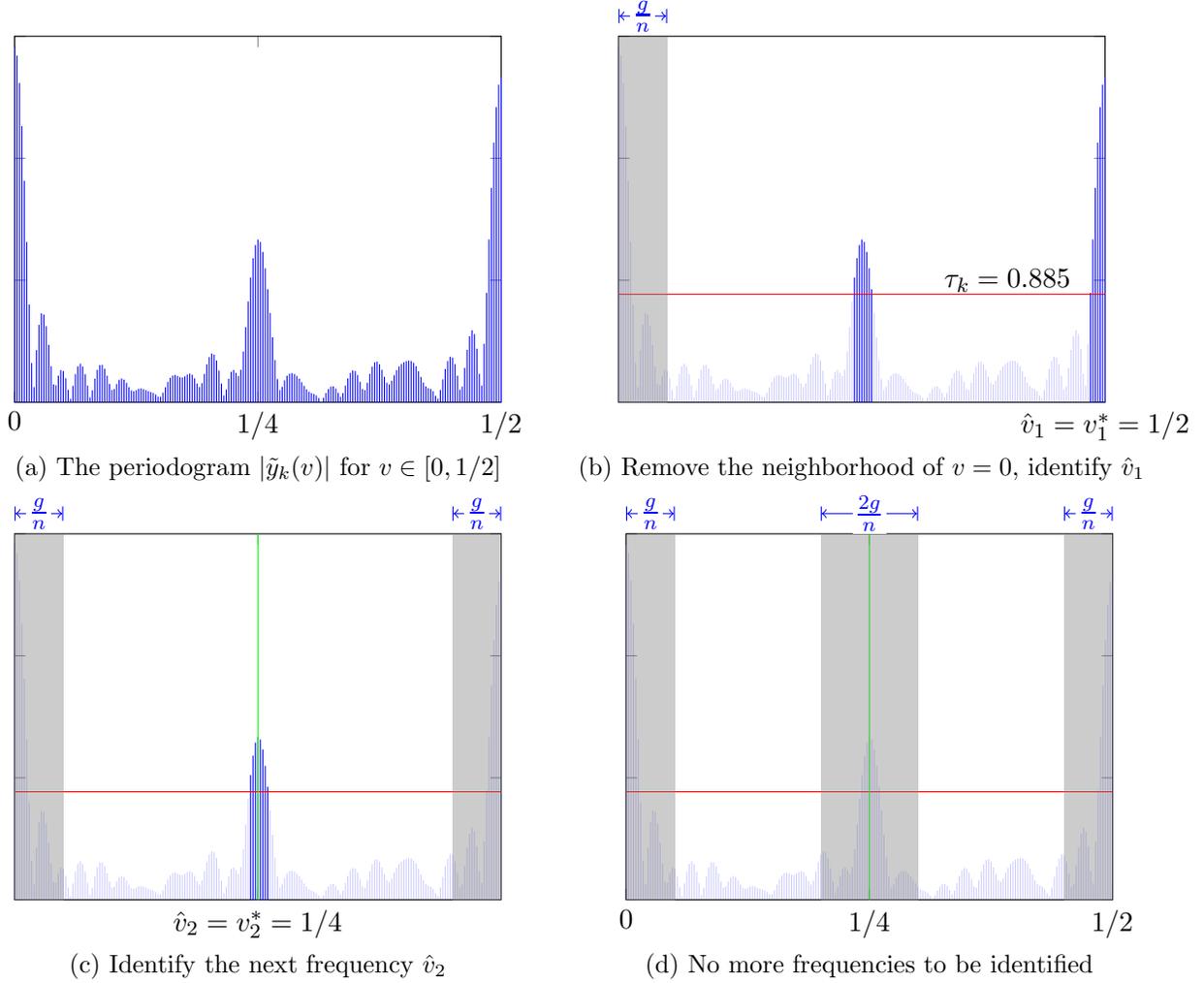
\begin{figure}[t]
\centering
\begin{tikzpicture} [baseline]
    \begin{axis}[width=0.5\textwidth, height=0.4\textwidth,
    xmin=0, xmax=0.5, ymin=0, ymax=3,
    xtick={0,0.25,0.5}, xticklabels={0, 1/4, 1/2},
    ytick={}, yticklabels={},
    xlabel={\small (a) The periodogram $|\tilde y_k(v)|$ for $v\in [0,1/2]$}, x label style={at={(axis description cs:0.5,-0.12)},anchor=north}],
    \addplot+[ycomb, mark=none] table[x=freq,y=dft, col sep=comma] {./data2.csv};
    \end{axis}
\end{tikzpicture} \hfill
\begin{tikzpicture} [baseline]
    \begin{axis}[width=0.5\textwidth, height=0.4\textwidth,
    xmin=0, xmax=0.5, ymin=0, ymax=3,
    xtick={0.5}, xticklabels={$\hat{v}_1={v}^*_1=1/2$}, xticklabel style= {rotate=0,anchor=near xticklabel},
    ytick={}, yticklabels={},
    xlabel={\small (b) Remove the neighborhood of $v=0$, identify $\hat v_1$}, x label style={at={(axis description cs:0.5,-0.12)},anchor=north},
    clip = false]
    \addplot+[blue, ycomb, mark=none]
        table[x=freq,
        y expr={\thisrow{dft} >= 0.8854 && \thisrow{freq} >= 0.05 ? \thisrow{dft} : NaN},
        col sep=comma] {./data2.csv};
    \addplot+[blue, ycomb, mark=none, opacity=0.2]
        table[x=freq,
        y expr={\thisrow{dft} <0.8854 || \thisrow{freq} <= 0.05? \thisrow{dft} : NaN},
        col sep=comma] {./data2.csv};
    \addplot [red, domain=0:0.5,] {0.8854};
    \node[] at (axis cs: 0.4, 1) {$\tau_k=0.885$};
    \fill [gray,opacity=0.4] (axis cs:0,0) rectangle (axis cs:0.05,3);
    \draw[blue, |<->|] ([yshift = +5.3cm] axis description cs:0, 0) --  ([yshift = +5.3cm] axis description cs:0.1, 0) node [midway, rotate=0, fill=white, yshift=0pt,inner sep=0.2ex] {$\frac{g}{n}$};
    \end{axis}
\end{tikzpicture}

\begin{tikzpicture}[baseline]
    \begin{axis}[width=0.5\textwidth, height=0.4\textwidth,
    xmin=0, xmax=0.5, ymin=0, ymax=3,
    xtick={0.25}, xticklabels={$\hat{v}_2=v^\ast_2=1/4$}, xticklabel style= {rotate=0,anchor=near xticklabel},
    ytick={}, yticklabels={},
    xlabel={\small (c) Identify the next frequency $\hat v_2$}, x label style={at={(axis description cs:0.5,-0.12)},anchor=north},
    clip = false]
    \addplot+[blue, ycomb, mark=none]
        table[x=freq,
        y expr={\thisrow{dft} >= 0.8854 && \thisrow{freq} >= 0.05 && \thisrow{freq} <= 0.45? \thisrow{dft} : NaN},
        col sep=comma] {./data2.csv};
    \addplot+[blue, ycomb, mark=none, opacity=0.2]
        table[x=freq,
        y expr={\thisrow{dft} <0.8854 || \thisrow{freq} <= 0.05 || \thisrow{freq} >= 0.45? \thisrow{dft} : NaN},
        col sep=comma] {./data2.csv};
    \addplot [red, domain=0:0.5,] {0.8854};
    \addplot+[green, ycomb, mark=none] coordinates
        {(0.25, 3)};
    \fill [gray,opacity=0.4] (axis cs:0,0) rectangle (axis cs:0.05,3);
    \draw[blue, |<->|] ([yshift = +5.3cm] axis description cs:0, 0) --  ([yshift = +5.3cm] axis description cs:0.1, 0) node [midway, rotate=0, fill=white, yshift=0pt,inner sep=0.2ex] {$\frac{g}{n}$};
    \fill [gray,opacity=0.4] (axis cs:0.45,0) rectangle (axis cs:0.5,3);
    \draw[blue, |<->|] ([yshift = +5.3cm] axis description cs:0.9, 0) --  ([yshift = +5.3cm] axis description cs:1, 0) node [midway, rotate=0, fill=white, yshift=0pt,inner sep=0.2ex] {$\frac{g}{n}$};
    \end{axis}
\end{tikzpicture} \hfill
\begin{tikzpicture}[baseline]
    \begin{axis}[width=0.5\textwidth, height=0.4\textwidth,
    xmin=0, xmax=0.5, ymin=0, ymax=3,
    xtick={0,0.25,0.5}, xticklabels={0, 1/4, $\quad\quad1/2\quad\quad$},
    ytick={}, yticklabels={},
    xlabel={\small (d) No more frequencies to be identified}, x label style={at={(axis description cs:0.5,-0.12)},anchor=north},
    clip = false]
    \addplot+[blue, ycomb, mark=none]
        table[x=freq,
        y expr={\thisrow{dft} >= 0.8854 && \thisrow{freq} >= 0.05 && \thisrow{freq} <= 0.45 && 0.2 <= \thisrow{freq} <= 0.3? \thisrow{dft} : NaN},
        col sep=comma] {./data2.csv};
    \addplot+[blue, ycomb, mark=none, opacity=0.2]
        table[x=freq,
        y expr={\thisrow{dft} <0.8854 || \thisrow{freq} <= 0.05 || \thisrow{freq} >= 0.45 || !(0.2 <= \thisrow{freq} <= 0.3)? \thisrow{dft} : NaN},
        col sep=comma] {./data2.csv};
    \addplot [red, domain=0:0.5,] {0.8854};
    \addplot+[green, ycomb, mark=none] coordinates
        {(0.25, 3)};
    \fill [gray,opacity=0.4] (axis cs:0,0) rectangle (axis cs:0.05,3);
    \draw[blue, |<->|] ([yshift = +5.3cm] axis description cs:0, 0) --  ([yshift = +5.3cm] axis description cs:0.1, 0) node [midway, rotate=0, fill=white, yshift=0pt,inner sep=0.2ex] {$\frac{g}{n}$};
    \fill [gray,opacity=0.4] (axis cs:0.45,0) rectangle (axis cs:0.5,3);
    \draw[blue, |<->|] ([yshift = +5.3cm] axis description cs:0.9, 0) --  ([yshift = +5.3cm] axis description cs:1, 0) node [midway, rotate=0, fill=white, yshift=0pt,inner sep=0.2ex] {$\frac{g}{n}$};
    \fill [gray,opacity=0.4] (axis cs:0.20,0) rectangle (axis cs:0.30,3);
    \draw[blue, |<->|] ([yshift = +5.3cm] axis description cs:0.4, 0) --  ([yshift = +5.3cm] axis description cs:0.6, 0) node [midway, rotate=0, fill=white, yshift=0pt,inner sep=0.2ex] {$\frac{2g}{n}$};
    \end{axis}
\end{tikzpicture}

\caption{A demonstration of Algorithm~\ref{alg:frequency-identification}.}
\label{fig:alg1-example}
\end{figure}

\subsection{Stage Two: Nested Confidence-Bound-Based Learning}
\label{sec:phasetwo}
We obtain an estimation $\hat{T}_k$ on the length of period for each arm $k \in \setk$, by spending $nK$ epochs totally in stage one.
In the remaining horizon, which is referred to as stage two, we need to learn the specific values of mean rewards $\mu_{k,t}$ while simultaneously optimizing immediate rewards.
Classic algorithms such as UCB are not immediately applicable due to the asynchronous periods.
As mentioned earlier, a naive approach would consider decomposing stage two into a number of $T_{\text{LCM}} = \text{LCM}(\hat{T}_1,...,\hat{T}_K)$ separate MAB subproblems. However, it works poorly in practice since the number of the subproblems, $T_{\text{LCM}}$, may grow too rapidly to contain the overall regret.
Therefore, the DM demands a better algorithm to solve the learning problem in stage two more efficiently.

Given the periodicity of arm rewards, we say that arm $k$ is at \emph{phase} $p$ when the epoch index $t$ divided by period $T_{k}$ yields a remainder $p$, i.e. $p \equiv t (\mathrm{mod}\, T_k)$.
Whenever an arm is at a particular phase, it has the same mean reward and thus can be regarded as the same ``\emph{effective arm}''.
While for an arm at different phases, the learning of the mean rewards has to be conducted separately, i.e., regarded as different effective arms.
Hence, the DM essentially faces $d \coloneqq \sum_{k=1}^K T_k$ effective arms (unique mean rewards) to learn.
Although $d$ is a much smaller number than $T_{\text{LCM}}$, it is still challenging to analyze the learning process since arms are nested due to their asynchronous periods. To this end, we propose a nested confidence-bound-based learning approach in Algorithm \ref{alg:learning} where the exploration and exploitation are carefully designed to maintain tractability.
In the remainder of this section, we focus on the main ideas driving our algorithm and show that the regret achieved is comparable to the optimal one of the classic stationary MAB problem. Related lemmas and detailed analysis are provided in Section \ref{sec:analysis-phasetwo}. 

We first introduce some notations. Let $\Psi(t) \subseteq \{1,\dots,t-1\}$ be a generic index set of historical epochs before $t$. The operator $|\cdot|$ returns the cardinality when applied to a set.
Given an estimator $\hat{T}_k$ for $T_k$, we define a function $\banditcount{k}{t}(\Psi)$ to count the number of epochs within an index set $\Psi$ that arm $k$ has been pulled at the same phase as $t$:
\begin{align}
\banditcount{k}{t}\left(\Psi\right) \coloneqq \left| \left\{j\in\Psi(t): ~\pi_j=k, ~j \equiv t (\mathrm{mod}\, \hat{T}_k) \right\} \right|.  \label{eq:def-countC}
\end{align}

Our algorithm follows the general principle of exploration and exploitation, which gradually estimates the mean rewards to the desired accuracy and then takes action by treating these estimates as if they are correct.
It is akin to \citet{auer2002using} in the way of examining arms. Specifically, at each epoch $t$ of stage two, Algorithm \ref{alg:learning} chooses an action $\pi_t$ by screening effective arms through a tournament of at most $S$ rounds. We maintain index sets $\Psi^{(s)}(t)$ for $s \in \sets \coloneqq \{1,...,S\}$ where each $\Psi^{(s)}(t)$ tracks epochs of trials made in round $s$ during stage two.
We let $\bar{\Psi}$ denote the set of epochs in stage one.

\begin{algorithm}
    \caption{stage two: nested confidence-bound-based exploration and exploitation}
    \label{alg:learning}

    \begin{algorithmic}[1]
        \State Input: $T$, $K$, $\sigma$, $n$, $\{\hat T_k\}_{k=1}^K$ (periods estimated from stage one) and a parameter $\delta \in (0,1)$
        \State Define $\bar\Psi = \{1,\dots,nK\}$, $S = \floor{\log_2 T}$ and $\hat{d} = \sum_{k=1}^K\hat T_k$
        \State Initialize $\Psi^{(s)}(nK+1)=\emptyset$ for $s=1,\dots,S$
        \For{$t=(nK+1):T$}
        \State $s\gets 1$, $\Ascr_1 \gets \left\{1,\dots,K\right\}$ \Comment{Start a screening tournament at each epoch} \label{step:reset-tournament}
        \Repeat
        \State Compute the estimated mean $\banditmean_{k,t}^{(s)}$ and the width of confidence interval $w_{k,t}^{(s)}$ based on sample rewards from $\Psi^{(s)}(t)$ and $\bar\Psi$ for all $k\in \Ascr_s$: \label{step:compute-ci}
        \begin{align}
        \banditmean_{k,t}^{(s)} &~=~ \frac{1}{\banditcount{k}{t}\left(\Psi^{(s)}(t) \cup \bar\Psi\right)}\sum_{\substack{j \in {\Psi^{(s)}(t) \cup \bar\Psi}: \\ \pi_j=k, ~ j \equiv t (\mathrm{mod}\, \hat{T}_k) }} Y_{k,j} \label{eq:alg2-s-mean}\\
        w_{k,t}^{(s)} &~=~ \frac{\banditcount{k}{t}\left(\bar\Psi\right)}{\banditcount{k}{t}\left(\Psi^{(s)}(t) \cup \bar\Psi\right)}\sqrt{\frac{4\sigma^2}{\banditcount{k}{t}\left(\bar\Psi\right)}\log\left( \frac{8\hat{d} \banditcount{k}{t}\left(\bar\Psi\right)}{\delta} \right)} \nonumber \\
                       &\quad +\frac{\banditcount{k}{t}\left(\Psi^{(s)}(t)\right)}{\banditcount{k}{t}\left(\Psi^{(s)}(t) \cup \bar\Psi\right)}\sqrt{\frac{4\sigma^2}{\banditcount{k}{t}\left( \Psi^{(s)}(t)\right)}\log\left( \frac{8\hat{d} \banditcount{k}{t}\left(\Psi^{(s)}(t)\right)}{\delta} \right)} \label{eq:alg2-s-ci}
        \end{align}

        \If{$w_{k,t}^{(s)} > 2^{-s}\sigma$ for some $k\in \Ascr_s$} \label{step:wide-ci-begin}
        \State Pull arm $\pi_t=k$ \label{step:wide-ci}
        \State Update $\Psi^{(s)}(t+1)\gets\Psi^{(s)}(t)\cup \{t\}$ and $\Psi^{(s')}(t+1)\gets\Psi^{(s')}(t)$ for $s' \neq s$ \label{step:wide-ci-end}
        \ElsIf{$w_{k,t}^{(s)} \le \frac{\sigma}{\sqrt{T}}$ for all $k\in \Ascr_s$} \label{step:narrow-ci-begin}
        \State Pull arm $\pi_t=\argmax_{k \in \Ascr_s} \banditmean_{k,t}^{(s)}$ \label{step:narrow-ci}
        \State 
        Update $\Psi^{(s')}(t+1)\gets\Psi^{(s')}(t)$ for $s' = 1,...,S$  \label{step:narrow-ci-end}
        \ElsIf{$w_{k,t}^{(s)} \le 2^{-s}\sigma$ for all $k\in \Ascr_s$} \label{step:eliminate-arms-begin}
        \State $\Ascr_{s+1}\gets \left\{k \in \Ascr_s: \banditmean_{k,t}^{(s)} \geq \max_{k' \in \Ascr_s} \banditmean_{k',t}^{(s)} - 2^{1-s}\sigma \right\}$ \label{step:eliminate-arms}
        \State $s \gets s+1$ \label{step:eliminate-arms-end}
        \EndIf
        \Until{$\pi_t$ is chosen.}
        \EndFor
    \end{algorithmic}
\end{algorithm}

\textbf{Learning the Best Effective Arms.}
Our exploration relies on the idea of using increasingly accurate \emph{confidence bounds} to eliminate clearly suboptimal effective arms.
At each epoch $t$, in Step \ref{step:reset-tournament},
we start an independent tournament ($s\gets1$ and $\Ascr_1\gets \setk$) to screen effective arms active at this particular moment.
In round $s$ of the tournament, we compute the estimated mean $\banditmean_{k,t}^{(s)}$ and the width of confidence interval $w_{k,t}^{(s)}$ based on samples from $\Psi^{(s)}(t)$ and $\bar\Psi$ for each candidate effective arm in Step \ref{step:compute-ci}. Given a target confidence level $2^{-s}\sigma$ of round $s$, Algorithm \ref{alg:learning} proceeds with one of the following three outcomes.
If there is a candidate effective arm with a confidence bound that is still too wide, i.e., $w_{k,t}^{(s)} > 2^{-s}\sigma$ for some $k \in \Ascr_s$, then more exploration is needed for that arm (Step \ref{step:wide-ci-begin} - \ref{step:wide-ci-end}); or if an accurate estimation with a narrow confidence bound is achieved for all candidates, i.e., $w_{k,t}^{(s)} \le \frac{\sigma}{\sqrt{T}}$ for all $k\in \Ascr_s$, then the arm with the highest estimated mean reward is pulled (Step \ref{step:narrow-ci-begin} - \ref{step:narrow-ci-end}); otherwise it enters the next round after eliminating effective arms with unfavorable estimated mean rewards, i.e., the difference is larger than $2^{1-s}\sigma$ or the confidence bound does not overlap with that of the current optimal one (Step \ref{step:eliminate-arms-begin} - \ref{step:eliminate-arms-end}). The tournament at $t$ terminates when an arm is chosen to be pulled.
Note that we directly choose the highest mean, instead of the upper confidence bound used by \citet{auer2002using}, in Step \ref{step:narrow-ci}. This modification leads to a cleaner regret analysis and is also applied in \citet{li2017provably}.

\textbf{Reducing Sample Dependence.} \label{pg:Reducing Sample Dependence}
An important complication in analyzing the regret is the \emph{nested} inter-temporal dependence over the horizon caused by asynchronous periods of the arms.
For example, to compare two effective arms at a given epoch, the DM needs to backtrack the historical epochs at which they were pulled. However, due to the different lengths of periods, these two effective arms may have never appeared in the same epoch simultaneously in the past. The tracking of individual phases of all arms introduces a great deal of statistical dependence, which significantly complicates the regret analysis when applying standard MAB methodologies.
To handle this issue, we maintain mutually exclusive index sets $\Psi^{(s)}(t)$ with the techniques developed by \cite{auer2002using} such that the observed rewards in $\Psi^{(s)}(t)$ are not used to eliminate arms in the same index set and thus the samples from the same $\Psi^{(s)}(t)$ are independent.
We also make efforts to efficiently utilize data.
In particular, rewards observed for frequency identification during stage one are reused in stage two for estimation purposes, i.e., means and confidence bounds are computed based on samples selected from $\Psi^{(s)}(t)$ and $\bar\Psi$.
This data reusing benefits the exploration by making it possible to reach the desired estimation accuracy more quickly.
However, a direct combination of $\Psi^{(s)}(t)$ and $\bar\Psi$ would contaminate the sample independence again because $\hat{T}_k$ estimated from observed samples in $\bar\Psi$ is used to identify the phases, i.e., samples from $\Psi^{(s)}(t)$ and $\bar\Psi$ are implicitly correlated.
To this end, we carefully design the computation scheme of the estimation to control the dependence structure, and eventually we still manage to obtain a valid regret bound.

In the next Sections \ref{sec:analysis} and \ref{sec:lower-bound}, we evaluate the performance of our policy in terms of regret.

\section{Upper Bound for the Regret}
\label{sec:analysis}
Our main result is the following upper bound on the expected regret of our two-stage policy.
\begin{theorem} \label{thm:upper_bound}
Given Assumptions \ref{asp:mean-reward} - \ref{asp:bBratio} and parameters chosen in \eqref{eq:parameter}, the expected regret of the two-staged policy $\pi$ proposed in Section \ref{sec:policy} is bounded as,
\begin{align}
\E[R_{T}^{\pi}] \le \textit{Constant} \cdot \RegretBound, \nonumber
\end{align}
where $d=\sum_{k=1}^{K}T_k$ and the Constant does not depend on $T$, $K$ or any $T_k$ for $k \in \setk$.
\end{theorem}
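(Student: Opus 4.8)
The plan is to decompose the regret according to whether stage one recovers every period, and then to analyze stage two as a near-classical confidence-bound bandit run over the $d=\sum_k T_k$ effective arms. Let $\Gscr=\{\hat T_k=T_k\text{ for all }k\in\setk\}$ be the event that all periods are correctly identified. Splitting \eqref{eq:regret-def} gives $\E[R_T^\pi]=\E[R_T^\pi\,\I{\Gscr}]+\E[R_T^\pi\,\I{\Gscr^c}]$. On $\Gscr^c$ I bound the regret trivially by $T$, so $\E[R_T^\pi\,\I{\Gscr^c}]\le T\,\PR(\Gscr^c)$; a union bound over the $K$ arms together with the finite-sample correctness guarantee of Algorithm~\ref{alg:frequency-identification} controls $\PR(\Gscr^c)$. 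With the choice $H=\sqrt{1+\log n}$ in \eqref{eq:parameter} the per-arm failure probability is of order $n^{-(H^2-1)}=n^{-\log n}=e^{-(\log n)^2}$ (and similar for the other terms), so since $n\sim\sqrt{T/K}$ the product $TK\,e^{-(\log n)^2}$ decays super-polynomially and contributes only an $O(1)$ term. This is precisely where the aggressive choice of $H$ pays off: it renders the cost of period misidentification asymptotically negligible.

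On the event $\Gscr$ I separate the two stages. Stage one occupies $nK$ epochs, each contributing at most $1$ to the regret by Assumption~\ref{asp:mean-reward}; hence its total contribution is at most $nK\sim\sqrt{TK}\le\sqrt{Td}$, because $d\ge K$. It therefore suffices to bound the stage-two regret $R_{T,\mathrm{II}}^\pi$ on $\Gscr$. For this I introduce the clean event $\Cscr$ on which $\abs{\banditmean_{k,t}^{(s)}-\mu_{k,t}}\le w_{k,t}^{(s)}$ simultaneously for all epochs $t$, rounds $s\le S=\floor{\log_2 T}$, and surviving arms $k$. Conditioning once more, $\E[R_{T,\mathrm{II}}^\pi\,\I{\Gscr}]\le\E[R_{T,\mathrm{II}}^\pi\,\I{\Gscr\cap\Cscr}]+T\,\PR(\Gscr\cap\Cscr^c)$, and an appropriate choice of the confidence parameter $\delta$ (of order $1/T$) forces the second term to be $O(\sqrt{Td})$ as well.

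The core of the argument is the deterministic regret bound valid on $\Gscr\cap\Cscr$. Standard arm-elimination reasoning shows that, on $\Cscr$, the optimal arm is never discarded and every arm surviving into round $s$ has instantaneous gap $O(2^{-s}\sigma)$; consequently each exploit step (Step~\ref{step:narrow-ci}) costs only $O(\sigma/\sqrt T)$ and the exploit epochs contribute at most $O(\sigma\sqrt T)$ in total. For the exploration steps (Step~\ref{step:wide-ci}) I count, for each of the $d$ effective arms, the number of round-$s$ pulls: since $w_{k,t}^{(s)}\lesssim\sigma\sqrt{L/\banditcount{k}{t}(\Psi^{(s)}(t)\cup\bar\Psi)}$ with $L=\log(8\hat d\,T/\delta)$, the inequality $w_{k,t}^{(s)}>2^{-s}\sigma$ forces this count below $O(4^s L)$, giving at most $O(4^s L)$ such pulls per effective arm and hence $N_s\le O(d\,4^s L)$ in total. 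Each round-$s$ explore pull costs $O(2^{-s}\sigma)$, so the exploration regret is $\sum_{s\le S} N_s\,O(2^{-s}\sigma)$. Writing $N_s 2^{-s}\le\sqrt{N_s}\sqrt{N_s 4^{-s}}\le 2\sqrt{dL}\,\sqrt{N_s}$ and applying Cauchy--Schwarz against the horizon budget $\sum_s N_s\le T$ yields $\sum_s N_s 2^{-s}=O(\sqrt{dL\,S\,T})$, i.e.\ exploration regret $O(\sigma\sqrt{Td\,S\,L})$, which produces the advertised rate after substituting $S=\floor{\log_2 T}$ and the chosen $\delta$. The key point is the interplay between the per-arm saturation cap $O(4^s L)$ and the budget $\sum_s N_s\le T$: saturating every round would be linear in $T$, but Cauchy--Schwarz against the budget recovers the square-root rate.

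I expect the main obstacle to be the validity of the clean event $\Cscr$, i.e.\ establishing $\abs{\banditmean_{k,t}^{(s)}-\mu_{k,t}}\le w_{k,t}^{(s)}$ with the required probability, since two sources of dependence must be neutralized. First, the asynchronous periods nest the phases of different arms, so the index sets $\Psi^{(s)}(t)$ must be constructed (following the device of \citet{auer2002using}) so that, within a fixed $\Psi^{(s)}$, the inclusion of an epoch is measurable with respect to information excluding that epoch's reward, rendering the retained rewards conditionally independent sub-Gaussians amenable to a union-bounded tail inequality. Second, the stage-one rewards $\bar\Psi$ are reused in stage two even though they were used to form $\hat T_k$; on $\Gscr$ the estimated phase equals the true phase, but conditioning on $\Gscr$ perturbs the noise distribution. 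The separate treatment of the $\bar\Psi$ and $\Psi^{(s)}$ contributions in the definition of $w_{k,t}^{(s)}$ is designed exactly to decouple these two sample pools so each can be concentrated on its own; making this decoupling rigorous---controlling the conditioning on $\Gscr$ and the correlation it induces between the period estimate and the phase-indexed samples---is the delicate step on which the whole regret bound rests.
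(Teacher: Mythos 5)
Your high-level plan coincides with the paper's: split on the event that stage one recovers all periods, kill the complement with $T\cdot\PR(\Gscr^c)$ (negligible by the choice $H=\sqrt{1+\log n}$), charge stage one at most $nK\sim\sqrt{TK}$, and run an elimination-style analysis of Algorithm~\ref{alg:learning} under a clean concentration event; your counting argument (per-effective-arm cap $O(4^sL)$ on round-$s$ pulls plus Cauchy--Schwarz against the budget $\sum_s N_s\le T$) is a standard and valid alternative to the paper's route of summing the confidence widths directly (Lemma~\ref{lem:intermediate-bound}) and yields the same rate.

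However, there is a genuine gap at exactly the point you flag as ``the delicate step on which the whole regret bound rests'': you never resolve it, and without a resolution the proof does not close. The obstacle is that for the \emph{actual} policy, the clean event $\Cscr$ is defined through averages whose sample-selection rule (the congruence $j\equiv t\ (\mathrm{mod}\ \hat T_k)$) is itself a function of the stage-one rewards being averaged, so the summands are not (conditionally) independent sub-Gaussians and Lemma~\ref{lem:subGaussian-inequality}-type concentration cannot be applied to bound $\PR(\Gscr\cap\Cscr^c)$; nor can you simply ``condition on $\Gscr$,'' since conditioning distorts the noise law. The paper's fix is the oracle-policy coupling: define a policy $\tpi$ that runs the identical two-stage procedure but plugs in the true $T_k$ (so its phase indexing is deterministic given the actions), prove the concentration and regret bounds \emph{unconditionally} for $\tpi$ (Lemmas~\ref{lem:independent}--\ref{lem:oracle-bound}, where Lemma~\ref{lem:independent} supplies the conditional independence you need), and then observe that on $\Lambda$ the two policies follow the same sample path, so $\E[R_T^\pi\mathbf{1}_{\{\Lambda\}}]=\E[R_T^{\tpi}\mathbf{1}_{\{\Lambda\}}]\le\E[R_T^{\tpi}]$. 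This device---transferring the whole stage-two analysis to a policy for which the period estimate and the phase-indexed samples are decoupled by construction, rather than trying to control the coupling for the actual policy---is the missing idea; some equivalent of it (e.g., defining $\Cscr$ via the true-period indexing and invoking the pathwise identity of the two policies on $\Gscr$) must be supplied before your clean-event decomposition is legitimate.
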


Theorem \ref{thm:upper_bound} shows that our algorithm achieves regret $\tilde{O}(\sqrt{T\sum_{k=1}^K T_k})$.
Recall that there are $\sum_{k=1}^K T_k$ effective arms in our setting if the phases of each arm are regarded as separate arms.
That is, there are $\sum_{k=1}^K T_k$ parameters to learn as opposed to $K$ parameters in the classic stationary MAB problem.
In this sense, the obtained regret rate matches the optimal regret rate $O(\sqrt{TK})$ of the classic stationary MAB problem.
In Section~\ref{sec:lower-bound}, we establish a lower bound $\Omega(\sqrt{T\max_k\{ T_k\}})$, which shows that our algorithm is optimal up to a factor of $\sqrt{K}$.
Moreover, in Appendix~\ref{sec:appendix-extended-results-SamePeriodDifferentSeasonality}, when $T_1=\dots=T_K$,
we prove a lower bound $\Omega(\sqrt{KTT_1})$ which matches the regret rate $\tilde O(\sqrt{T\sum_{k=1}^K T_k})$,
and hence our algorithm achieves optimal in that case.
It is encouraging to see that our algorithm performs remarkably well in terms of regret, although the non-stationarity introduces a significant complication.

Theorem \ref{thm:upper_bound} is based on certain technical conditions. Specifically, Assumption \ref{asp:mean-reward}, \ref{asp:sub-Gaussian} and \ref{asp:period} are imposed on the mean reward, the random noise and the period of each arm respectively. We also need a condition on the magnitude of present frequencies, which is formally introduced as Assumption \ref{asp:bBratio} in Section \ref{sec:analysis-phaseone} for the simplicity of notation.
These assumptions are quite mild.
We will provide justifications that imposing these conditions is not a limitation of our policy and generally does not constrain the practical application.

We summarize the outline of the regret analysis, which is composed of two parts.
In Section \ref{sec:analysis-phaseone}, we show that Algorithm \ref{alg:frequency-identification} is able to correctly identify periods of all arms with high probability in stage one. In Section \ref{sec:analysis-phasetwo}, we show that Algorithm \ref{alg:learning} achieves a regret as stated in Theorem~\ref{thm:upper_bound}.

\subsection{Estimate Periods Correctly in Stage One}
\label{sec:analysis-phaseone}
As introduced in Section \ref{sec:phaseone}, Algorithm~\ref{alg:frequency-identification} relies on a proper threshold for the periodogram. In this section, we elaborate on how the threshold works by developing a series of lemmas. The detailed proofs are provided in Appendix \ref{sec:appendix-proof-analysis-phaseone}.
These lemmas together with related assumptions guarantee that the lengths of periods of all arms can be correctly estimated with high probability, and this conclusion is applied in the regret analysis of Theorem \ref{thm:upper_bound}.

In Algorithm \ref{alg:frequency-identification}, we observe $n$ reward samples from arm $k$ to produce the DFT $\tilde{y}_k(v)$  which can be decomposed as $\tilde{y}_k(v)=\tilde{\mu}_k(v) + \tilde{\epsilon}_k(v)$ in \eqref{eq:dft-decomp}, and then we investigate the periodogram $|\tilde{y}_k(v)|$ in the frequency domain $v \in [0, 1/2]$.
To develop the threshold, we first provide an upper bound on the DFT of the noise $\tilde{\epsilon}_k(v)$ in Lemma \ref{lem:noise-bound}.

\begin{lemma}\label{lem:noise-bound}
For each arm $k \in \setk$ and any $\delta > 0$, $\displaystyle{ \PR\bigg( \NB \ge \delta \bigg) \leq 48n \exp\left(- \frac{n\delta^2}{4\sigma^2}\left( 1-\frac{\pi }{24} \right)^2 \right)}$.
\end{lemma}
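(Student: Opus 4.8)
The plan is to reduce the supremum over the continuum $v\in[0,1/2]$ to a maximum over a finite grid by combining a pointwise sub-Gaussian tail bound with a \emph{pathwise} (deterministic) control of how much $\tilde{\epsilon}_k(\cdot)$ can oscillate between grid points, and then to finish with a union bound. Concretely, I would fix the uniform grid $v_j=(j-\tfrac12)h$ for $j=1,\dots,12n$ with spacing $h=1/(24n)$, so that every grid point lies in $[0,1/2]$ and every $v\in[0,1/2]$ is within $h/2$ of some $v_j$. The count $12n$ is exactly what will produce the prefactor $48n$ once a factor of $4$ from the pointwise estimate is included.

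For the pointwise estimate I would split $\tilde{\epsilon}_k(v)=\frac1n\sum_{t=1}^n\epsilon_t\cos(2\pi vt)-\frac{i}{n}\sum_{t=1}^n\epsilon_t\sin(2\pi vt)$. Each of the real and imaginary parts is a weighted sum of independent mean-zero sub-Gaussian variables (Assumption \ref{asp:sub-Gaussian}) whose variance proxy is at most $\frac{\sigma^2}{n^2}\sum_{t=1}^n\cos^2(2\pi vt)\le\frac{\sigma^2}{n}$ (and identically for the sine weights), hence each is sub-Gaussian with parameter $\sigma/\sqrt n$. Since $\abs{\tilde{\epsilon}_k(v)}\ge a$ forces the real or the imaginary part to exceed $a/\sqrt2$ in modulus, the sub-Gaussian tail gives, for each fixed $v$, the bound $\PR(\abs{\tilde{\epsilon}_k(v)}\ge a)\le 4\exp(-na^2/(4\sigma^2))$.

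The crux is bounding $\sup_v\abs{\tilde{\epsilon}_k(v)}$ by the grid maximum. Writing $\theta=2\pi v$, the function $\tilde{\epsilon}_k$ is a trigonometric polynomial $\frac1n\sum_{t=1}^n\epsilon_t e^{-it\theta}$ of degree $n$, so the modulus form of Bernstein's inequality for trigonometric polynomials yields the pathwise estimate $\sup_v\abs{\tfrac{d}{dv}\tilde{\epsilon}_k(v)}\le 2\pi n\,\sup_v\abs{\tilde{\epsilon}_k(v)}$ (using the symmetry $\abs{\tilde{\epsilon}_k(1-v)}=\abs{\tilde{\epsilon}_k(v)}$ to pass between the half-period and the full period). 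Consequently the oscillation from any $v$ to its nearest grid point is at most $\tfrac{h}{2}\cdot 2\pi n\,\sup_v\abs{\tilde{\epsilon}_k(v)}=\tfrac{\pi}{24}\sup_v\abs{\tilde{\epsilon}_k(v)}$, which rearranges to $\sup_v\abs{\tilde{\epsilon}_k(v)}\le(1-\tfrac{\pi}{24})^{-1}\max_j\abs{\tilde{\epsilon}_k(v_j)}$. Hence the event $\{\NB\ge\delta\}$ implies $\max_j\abs{\tilde{\epsilon}_k(v_j)}\ge(1-\tfrac{\pi}{24})\delta$, and a union bound over the $12n$ grid points, using the pointwise estimate with $a=(1-\tfrac{\pi}{24})\delta$, delivers exactly $48n\exp\!\big(-\tfrac{n\delta^2}{4\sigma^2}(1-\tfrac{\pi}{24})^2\big)$.

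The main obstacle is precisely the passage from the finite grid to the full interval. The clean resolution is Bernstein's inequality, which bounds the derivative of the random trigonometric polynomial by its own sup-norm deterministically along every sample path; this avoids running a second, essentially circular concentration argument on the derivative process and is what produces the self-improving factor $(1-\pi/24)$ rather than a lossy covering-number term. In the write-up I would be careful to justify that the modulus version of Bernstein applies to complex-coefficient polynomials and to verify that the spacing $h=1/(24n)$ gives an integer number $12n$ of grid points matching the stated constant.
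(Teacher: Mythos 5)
Your proposal is correct and follows essentially the same route as the paper: a pointwise sub-Gaussian bound of $4\exp(-na^2/(4\sigma^2))$, a grid of $12n$ midpoints on $[0,1/2]$, a Bernstein-type derivative bound giving the self-improving oscillation factor $\frac{\pi}{24}\sup_v|\tilde{\epsilon}_k(v)|$, and a union bound yielding the prefactor $48n$ and the factor $\left(1-\frac{\pi}{24}\right)^2$. The only cosmetic difference is that you invoke the classical Bernstein inequality for degree-$n$ trigonometric polynomials directly, whereas the paper applies a Bernstein-type inequality for algebraic polynomials on the unit disk (its Lemma \ref{lem:berstein_generalized}, with $\omega=\pi$) together with the maximum modulus principle --- two equivalent formulations of the same tool.
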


Specifically, if choosing the upper bound $\delta$ in Lemma~\ref{lem:noise-bound} as $\bar\epsilon_v = \frac{2\sigma H}{1-\pi/24}\sqrt{\frac{\log(n)}{n}}$ defined in Step \ref{step:threshold} of Algorithm \ref{alg:frequency-identification}, then we have ${\PR \bigg(\NB \ge \bar\epsilon_v \bigg) \leq \frac{48}{n^{H^2-1}}}$. Note that Lemma \ref{lem:noise-bound} deviates from the standard concentration bounds in that we bound $\tilde{\epsilon}_k(v)$ for all $v \in [0,1/2]$ as a whole.
Although we can control $\tilde{\epsilon}_k(v)$ individually for each $v$, applying the union bound directly for an uncountable number of random variables does not work. To this end, we leverage the analytical structure of $\tilde{\epsilon}_k(v)$ to discretize the domain of $v$ first, and then control the bound of each sub-interval before applying the union bound.

Next we study the leakage caused by $\tilde{\mu}_k(v)$. Recall that $\tilde{\mu}_k(v)$ can be decomposed into frequency components in \eqref{eq:dft-deterministic}, and the present frequencies are components $j/T_k$ with $|b_{k,j}|>0$. Let $B_k$ and $b_k$ denote the magnitude of the strongest and the weakest present frequency components of $\tilde{\mu}_{k}(v)$ respectively as following
\begin{align}
B_{k} ~\coloneqq~ &\sup \left\{\abs{b_{k,j}}: |b_{k,j}|>0, j=0,\dots,T_k-1 \right\},  \label{eq:def-Bk} \\
b_{k} ~\coloneqq~ &\inf \left\{\abs{b_{k,j}}: \abs{b_{k,j}}>0,j=0,\dots,T_k-1 \right\}.   \label{eq:def-bk}
\end{align}
Algorithm \ref{alg:frequency-identification} considers a neighborhood of width $\frac{2g}{n}$ around each present frequency. Now, we let $\Vscr_k \coloneqq [0,1/2] \cap \{\cup_{j:|b_{k,j}|>0} ~[\frac{j}{T_k}-\frac{g}{n}, \frac{j}{T_k}+\frac{g}{n}]\}$ represent the union of neighborhoods of all present frequencies, and let $\overline{\Vscr}_k \coloneqq [0,1/2]\setminus \Vscr_k$.
When inspecting the periodogram in $\overline{\Vscr}_k$, we expect to bound the leakage caused by all present frequencies. On the other hand, when inspecting the periodogram in the neighborhood of one present frequency, we expect to bound the leakage produced by other present frequencies. We develop Lemma \ref{lem:bound-leakage} for these two purposes.
Recall that ${A_j = \sup \left\{ \frac{|\sin(\pi \nu)|}{\pi \nu}: \nu \in [j,j+1] \right\} }$ for $j\in \N^+$, $\displaystyle{U_1 = \sum_{j=0}^{\lfloor{\frac{n-2g -1}{4g }\rfloor}} A_{(2j+1)g}}$ and $\displaystyle{U_2 = \sum_{j=1}^{\lfloor{\frac{n-1}{4g }\rfloor}} A_{2jg-1}}$ are defined in Algorithm \ref{alg:frequency-identification}.

\begin{lemma}\label{lem:bound-leakage}
The following two bounds hold for each arm $k \in \setk$,
\begin{align}
\sup_{v\in\bar\Vscr_k}\Bigg|\sum_{j=0}^{T_{k}-1}\tilde{\mu}_{k,j}(v)\Bigg| & \le \pi B_k U_1 \text{ and } \sup_{v\in{\Vscr}_k}\Bigg|\sum_{j:|\frac{j}{T_k}-v|\geq \frac{g}{n}}\tilde{\mu}_{k,j}(v)\Bigg| \le \pi B_k U_2\le\pi B_k U_1. \nonumber
\end{align}
\end{lemma}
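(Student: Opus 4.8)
The plan is to bound the sum term by term using the closed form \eqref{eq:dft-sinform}, and then control the resulting Dirichlet-kernel-type envelope by exploiting that the present frequencies sit on the grid $\{j/T_k\}$ with spacing $1/T_k$. First I would take moduli in \eqref{eq:dft-sinform} and apply the elementary inequality $|\sin(\pi x)|\ge 2\rho$, where $\rho$ is the distance from $x$ to the nearest integer; this is valid because $\sin(\pi y)\ge 2y$ on $[0,1/2]$, and because both $|\sin(\pi(j/T_k-v))|$ and $|\sin(\pi(j/T_k-v)n)|$ depend on $j/T_k-v$ only modulo $1$, so the circular distance is the correct notion (this matters since $j/T_k-v$ can exceed $1/2$). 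Writing $\nu_j \coloneqq n\cdot\mathrm{dist}(j/T_k-v,\Z)$, this gives the per-term estimate
\[
|\tilde{\mu}_{k,j}(v)| \;\le\; \frac{\pi}{2}\,|b_{k,j}|\,\frac{|\sin(\pi\nu_j)|}{\pi\nu_j} \;\le\; \frac{\pi}{2}\,B_k\,\frac{|\sin(\pi\nu_j)|}{\pi\nu_j}.
\]
After the triangle inequality, both claimed bounds then reduce to controlling $\sum_{j:\,|b_{k,j}|>0}\frac{|\sin(\pi\nu_j)|}{\pi\nu_j}$, where only present frequencies contribute (terms with $b_{k,j}=0$ vanish), and the prefactor $\tfrac{\pi}{2}$ will combine with a factor of $2$ from summing over the two sides of $v$ to produce the stated $\pi B_k$.

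For the first bound, the key geometric input is that the present frequencies form a subset of a grid of spacing $1/T_k$, and Assumption~\ref{asp:period} gives $1/T_k>2g/n$, so any two present frequencies differ by more than $2g$ on the $\nu$-scale. When $v\in\bar\Vscr_k$ we have $\nu_j\ge g$ for every present $j$. Ordering the present frequencies lying on each side of $v$, the $m$-th nearest one therefore satisfies $\nu\ge g+2gm=(2m+1)g$. Since $A_j$ is decreasing in $j$ and $\frac{|\sin(\pi\nu)|}{\pi\nu}\le A_{\lfloor\nu\rfloor}$ (the sup over the unit interval containing $\nu$), each such term is at most $A_{(2m+1)g}$; summing over $m$ on both sides yields $\sum_j\frac{|\sin(\pi\nu_j)|}{\pi\nu_j}\le 2U_1$, hence $\pi B_k U_1$. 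The truncation index defining $U_1$ is exactly what is needed to cover all present frequencies, since $\nu_j\le n/2$ on $[0,1/2]$.

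For the second bound, $v\in\Vscr_k$ lies within $g/n$ of a single present frequency $j_0/T_k$ (uniqueness again from spacing $>2g/n$), which is precisely the frequency excluded from the sum. The remaining present frequencies are a full grid gap from $j_0/T_k$, so their $\nu$-values relative to $v$ are pushed up to roughly $2jg$, and carrying the offset $|n(v-j_0/T_k)|\le g$ through the ordering is what produces the shifted arguments $A_{2jg-1}$ and the tighter total $\le 2U_2$, giving $\pi B_k U_2$. The final inequality $U_2\le U_1$ I would get by a term-by-term comparison after reindexing: $A_{2(m+1)g-1}\le A_{(2m+1)g}$, because $2(m+1)g-1\ge(2m+1)g$ whenever $g\ge1$ and $A_j$ is decreasing.

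The routine part here is the per-term envelope. The delicate part, and the main obstacle, is the uniform-in-$v$ combinatorial packing that assigns each leakage term to the correct $A_{(2m+1)g}$ or $A_{2jg-1}$ slot: the present frequencies need not fill the grid and $v$ ranges over a continuum, so one must show the worst configuration is the evenly spaced one and verify the claimed $\nu$-lower bounds hold simultaneously for all present $j$. Extracting the sharp offset that yields the tighter constant $U_2$ for the in-neighborhood bound (rather than the cruder $U_1$ obtained by a naive two-sided split) is exactly where the careful accounting is required, together with tracking the sup-over-unit-interval definition of $A_j$ and the truncation that keeps both sums finite.
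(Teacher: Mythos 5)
Your overall strategy is the same as the paper's: bound each $|\tilde{\mu}_{k,j}(v)|$ by $\tfrac{\pi}{2}B_k\tfrac{|\sin(\pi\nu_j)|}{\pi\nu_j}$ using $\sin(\pi x)\ge 2x$ on $[0,1/2]$ (the paper phrases this through the kernel $R(\theta)=\bigl|\tfrac{\sin(\pi n\theta)}{n\sin(\pi\theta)}\bigr|$), then pack frequencies into unit intervals on the $\nu$-scale and dominate by the decreasing sequence $A_j$. Your first bound and your comparison $U_2\le U_1$ are correct and essentially identical to the paper's (the paper sums over \emph{all} grid points $j=0,\dots,T_k-1$ with the uniform bound $B_k$ and handles wrap-around via $R(\theta)=R(1-\theta)$, while you restrict to present frequencies and use circular distance; note that your claim that $v\in\bar\Vscr_k$ forces $\nu_j\ge g$ for every present $j$ is true only because present frequencies occur in conjugate pairs $b_{k,j}=\overline{b_{k,T_k-j}}$, hence are mirror-symmetric about $1/2$ --- $\Vscr_k$ is defined by real-line intervals, so this needs to be stated).

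The genuine gap is the second bound. You correctly diagnose that a naive one-sided packing places the $m$-th nearest remaining present frequency only at $\nu\ge(2m-1)g$, which after summing both sides yields $\pi B_k U_1$ rather than $\pi B_k U_2$, and that the tighter constant requires carrying the offset of $v$ within its neighborhood through both sides simultaneously --- but you then defer exactly this step as ``where the careful accounting is required'' without performing it. That accounting is the actual content of the second claim and it is not routine: the paper writes $v=j'/T_k+v'$ with $v'\in[j_0/n,(j_0+1)/n]$, $j_0\in\{-g,\dots,g-1\}$, obtains the two one-sided sums $\tfrac{\pi}{2}\sum_j A_{2jg-j_0-1}$ and $\tfrac{\pi}{2}\sum_j A_{2jg+j_0}$, and then needs \emph{both} the monotonicity and the convexity $A_{j-1}+A_{j+1}\ge 2A_j$ of the sequence $A_j$ to argue that the deficit on one side is compensated by the surplus on the other, giving $\tfrac{\pi}{2}\sum_j\bigl(A_{2jg-j_0-1}+A_{2jg+j_0}\bigr)\le\pi\sum_j A_{2jg-1}=\pi U_2$. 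Nothing in your proposal plays the role of this pairing/convexity mechanism, so as written your argument establishes only $\sup_{v\in\Vscr_k}\bigl|\sum_{j:|j/T_k-v|\ge g/n}\tilde{\mu}_{k,j}(v)\bigr|\le\pi B_k U_1$, and the inequality with $U_2$ --- which is what Lemmas \ref{lem:no-false-negative} and \ref{lem:narrow-down-neighborhood} actually consume --- remains unproven.
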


Note that the upper bounds provided in Lemma \ref{lem:bound-leakage} depend on $B_k$ which is unknown a priori. Hence, we develop a data-driven upper bound for $B_k$ in Lemma \ref{lem:Bound-B-Above}.

\begin{lemma}\label{lem:Bound-B-Above}
For each arm $k \in \setk$, $\displaystyle{B_k \leq  \frac{1}{1-\pi U_2}\left(\NB + \PB \right)}$.
\end{lemma}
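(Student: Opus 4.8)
We need to prove $B_k \leq \frac{1}{1-\pi U_2}\left(\NB + \PB\right)$, where $B_k = \sup\{|b_{k,j}| : |b_{k,j}| > 0\}$ is the magnitude of the strongest present frequency component.

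**Key relationships available:**
- $\tilde{y}_k(v) = \tilde{\mu}_k(v) + \tilde{\epsilon}_k(v)$ (decomposition)
- $\tilde{\mu}_k(v) = \sum_{j=0}^{T_k-1} \tilde{\mu}_{k,j}(v)$
- At $v = j/T_k$: $|\tilde{\mu}_{k,j}(j/T_k)| = |b_{k,j}|$
- Lemma on leakage: $\sup_{v\in\Vscr_k}|\sum_{j:|j/T_k - v| \geq g/n} \tilde{\mu}_{k,j}(v)| \leq \pi B_k U_2$

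**The strategy:** Let $j^*$ be the frequency achieving $B_k$, i.e., $|b_{k,j^*}| = B_k$. Evaluate at $v^* = j^*/T_k$.

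At this point:
$$|\tilde{\mu}_{k,j^*}(v^*)| = |b_{k,j^*}| = B_k$$

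Now $\tilde{\mu}_k(v^*) = \tilde{\mu}_{k,j^*}(v^*) + \sum_{j \neq j^*} \tilde{\mu}_{k,j}(v^*)$.

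The "other terms" sum: for $j \neq j^*$, we have $|j/T_k - v^*| = |j/T_k - j^*/T_k| = |j - j^*|/T_k \geq 1/T_k$. Hmm, but we need this to be $\geq g/n$ to use the leakage lemma. Let me check: $T_k < n/(2g)$ means $1/T_k > 2g/n > g/n$. Good, so indeed $|j/T_k - v^*| \geq 1/T_k > g/n$.

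So by the leakage lemma:
$$\left|\sum_{j \neq j^*} \tilde{\mu}_{k,j}(v^*)\right| \leq \pi B_k U_2$$

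Now the chain:
$$B_k = |\tilde{\mu}_{k,j^*}(v^*)| = |\tilde{\mu}_k(v^*) - \sum_{j\neq j^*}\tilde{\mu}_{k,j}(v^*)| \leq |\tilde{\mu}_k(v^*)| + \pi B_k U_2$$

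Then $|\tilde{\mu}_k(v^*)| = |\tilde{y}_k(v^*) - \tilde{\epsilon}_k(v^*)| \leq |\tilde{y}_k(v^*)| + |\tilde{\epsilon}_k(v^*)| \leq \PB + \NB$.

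So $B_k \leq \PB + \NB + \pi B_k U_2$, giving $B_k(1 - \pi U_2) \leq \NB + \PB$, hence the result (assuming $\pi U_2 < 1$).

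**Main obstacle:** Verifying $v^* \in \Vscr_k$ (it's the center of a neighborhood, so yes) and confirming the distance condition $1/T_k > g/n$ via Assumption 3.

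---

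The plan is to evaluate the periodogram at the present frequency that achieves the supremum $B_k$, and then isolate that single component by treating all other present frequencies as leakage. First, let $j^\ast$ be an index with $\abs{b_{k,j^\ast}}=B_k$ (such an index exists by the definition \eqref{eq:def-Bk} and finiteness of the sum in \eqref{eq:dft-deterministic}), and set $v^\ast = j^\ast/T_k$. At this frequency, formula \eqref{eq:dft-sinform} gives the exact identity $\abs{\tilde{\mu}_{k,j^\ast}(v^\ast)} = \abs{b_{k,j^\ast}} = B_k$, since the sinusoidal kernel evaluates to $n$ at its center. This pins the magnitude of the target component to exactly $B_k$, which is the quantity I want to bound.

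Next I would apply the triangle inequality to the decomposition $\tilde{\mu}_k(v^\ast) = \tilde{\mu}_{k,j^\ast}(v^\ast) + \sum_{j\neq j^\ast}\tilde{\mu}_{k,j}(v^\ast)$ from \eqref{eq:dft-deterministic}, rearranged to isolate the target term:
\begin{align}
B_k = \abs{\tilde{\mu}_{k,j^\ast}(v^\ast)} \le \abs{\tilde{\mu}_k(v^\ast)} + \Bigg|\sum_{j\neq j^\ast}\tilde{\mu}_{k,j}(v^\ast)\Bigg|. \nonumber
\end{align}
The key observation is that $v^\ast$ is the center of one of the neighborhoods defining $\Vscr_k$, so $v^\ast\in\Vscr_k$, and every other present frequency $j/T_k$ with $j\neq j^\ast$ satisfies $\abs{j/T_k - v^\ast} = \abs{j-j^\ast}/T_k \ge 1/T_k$. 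By Assumption \ref{asp:period} we have $T_k < n/(2g)$, hence $1/T_k > 2g/n > g/n$, so the entire second sum falls into the regime covered by the second bound of Lemma \ref{lem:bound-leakage}, yielding $\big|\sum_{j\neq j^\ast}\tilde{\mu}_{k,j}(v^\ast)\big| \le \pi B_k U_2$.

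It remains to control $\abs{\tilde{\mu}_k(v^\ast)}$ using observable quantities. Here I would invoke the reward decomposition \eqref{eq:dft-decomp}, $\tilde{\mu}_k(v^\ast) = \tilde{y}_k(v^\ast) - \tilde{\epsilon}_k(v^\ast)$, and bound each piece by its supremum over $[0,1/2]$, giving $\abs{\tilde{\mu}_k(v^\ast)} \le \PB + \NB$. Combining the three displays produces $B_k \le \NB + \PB + \pi B_k U_2$, and rearranging (using $\pi U_2 < 1$, which the algorithm's parameter choices guarantee so that the threshold $\tau_k$ in Step \ref{step:threshold} is well-defined) delivers the claimed bound $B_k \le \frac{1}{1-\pi U_2}(\NB + \PB)$.

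The main subtlety I anticipate is the justification that $v^\ast\in\Vscr_k$ together with the separation estimate $1/T_k > g/n$ places every competing component strictly outside the excluded neighborhood, so that Lemma \ref{lem:bound-leakage} applies verbatim; this is exactly where Assumption \ref{asp:period} on the period length is indispensable. The rest is a clean three-line chain of triangle inequalities, with no genuine analytic difficulty once the leakage lemma is in hand.
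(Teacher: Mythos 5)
Your proof is correct and follows essentially the same argument as the paper: evaluate at the present frequency attaining $B_k$, isolate that component via the triangle inequality, bound the remaining components by $\pi B_k U_2$ using the second estimate of Lemma \ref{lem:bound-leakage}, bound the rest by $\PB + \NB$, and rearrange. Your additional verification that $v^\ast \in \Vscr_k$ and that the separation $1/T_k > 2g/n$ (from Assumption \ref{asp:period}) makes the leakage bound applicable is a detail the paper leaves implicit, but it is the same proof.
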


Putting all the above pieces together, we establish Lemma~\ref{lem:threshold} which shows that the threshold $\tau_k$ used in Algorithm \ref{alg:frequency-identification} is sufficiently large to filter out the noise and leakage with high probability. 
\begin{lemma}\label{lem:threshold}
For each arm $k \in \setk$, $\displaystyle{\sup_{v\in {\overline{\Vscr}_k}}|\tilde{y}_k(v)|} \le \tau_{k}$ holds with probability at least $\ProbNB$.
\end{lemma}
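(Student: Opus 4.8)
The plan is to bound $\sup_{v\in\overline{\Vscr}_k}|\tilde{y}_k(v)|$ by assembling the four preceding lemmas and then conditioning on the high-probability noise event. First I would invoke the decomposition $\tilde{y}_k(v) = \tilde{\mu}_k(v) + \tilde{\epsilon}_k(v)$ from \eqref{eq:dft-decomp} together with the triangle inequality, so that for every $v \in \overline{\Vscr}_k$,
\[
|\tilde{y}_k(v)| \le |\tilde{\mu}_k(v)| + |\tilde{\epsilon}_k(v)| = \Big|\sum_{j=0}^{T_k-1}\tilde{\mu}_{k,j}(v)\Big| + |\tilde{\epsilon}_k(v)|.
\]
Since $\overline{\Vscr}_k$ is precisely the complement $\bar\Vscr_k$ of the union of present-frequency neighborhoods, the leakage term is governed by the first bound of Lemma \ref{lem:bound-leakage}, namely $\sup_{v\in\overline{\Vscr}_k}|\tilde{\mu}_k(v)| \le \pi B_k U_1$, while the noise term is dominated uniformly by $\NB$. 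Taking suprema over $\overline{\Vscr}_k$ then yields $\sup_{v\in\overline{\Vscr}_k}|\tilde{y}_k(v)| \le \pi B_k U_1 + \NB$.

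Next I would remove the unknown amplitude $B_k$ by substituting the data-driven bound of Lemma \ref{lem:Bound-B-Above}, $B_k \le \frac{1}{1-\pi U_2}(\NB + \PB)$, which is legitimate as long as $1 - \pi U_2 > 0$ (already guaranteed under the parameter choice and implicitly used in Lemma \ref{lem:Bound-B-Above}). This gives
\[
\sup_{v\in\overline{\Vscr}_k}|\tilde{y}_k(v)| \le \frac{\pi U_1}{1-\pi U_2}\big(\NB + \PB\big) + \NB,
\]
a deterministic upper bound that is monotonically increasing in the single random quantity $\NB$ (note that $\PB$, $U_1$, $U_2$ are all treated as data-dependent constants at this point).

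The final step is probabilistic. Lemma \ref{lem:noise-bound} applied with $\delta = \bar\epsilon_v$ shows that the event $\Escr \coloneqq \{\NB < \bar\epsilon_v\}$ holds with probability at least $\ProbNB$. On $\Escr$ I would replace each occurrence of $\NB$ by the larger value $\bar\epsilon_v$ — valid by the monotonicity just observed — to obtain
\[
\sup_{v\in\overline{\Vscr}_k}|\tilde{y}_k(v)| \le \frac{\pi U_1}{1-\pi U_2}\big(\bar\epsilon_v + \PB\big) + \bar\epsilon_v = \tau_k,
\]
which matches the threshold defined in Step \ref{step:threshold} exactly, completing the argument.

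I expect no genuine analytical obstacle here, since all the substantive work — the uniform noise concentration, the summation of side-lobe leakage, and the self-referential bound on $B_k$ — has been front-loaded into Lemmas \ref{lem:noise-bound}--\ref{lem:Bound-B-Above}. The only points requiring care are bookkeeping ones: \emph{(i)} verifying that $\overline{\Vscr}_k$ coincides with the domain $\bar\Vscr_k$ on which the leakage bound of Lemma \ref{lem:bound-leakage} is stated, so that bound applies without modification; and \emph{(ii)} ensuring that the substitution step exploits monotonicity in $\NB$ correctly, so that the deterministic algebra collapses precisely to the definition of $\tau_k$ rather than to a looser constant.
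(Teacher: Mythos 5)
Your proposal is correct and follows essentially the same route as the paper's proof: decompose $\tilde{y}_k(v)$ into mean and noise, bound the leakage on $\overline{\Vscr}_k$ by $\pi B_k U_1$ via Lemma \ref{lem:bound-leakage}, eliminate $B_k$ via Lemma \ref{lem:Bound-B-Above}, and then invoke Lemma \ref{lem:noise-bound} with $\delta=\bar\epsilon_v$ so that on the high-probability event the bound collapses to the definition of $\tau_k$. Your two bookkeeping remarks (the sign condition $1-\pi U_2>0$ and the monotonicity in $\NB$) are points the paper leaves implicit, but they do hold and do not change the argument.
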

We also expect that the threshold $\tau_k$ is not chosen too large, otherwise it would suppress the periodogram of present frequencies as well.
Next, we investigate the lower bound of the periodogram at present frequencies
in Lemma \ref{lem:no-false-negative}.

\begin{lemma}\label{lem:no-false-negative}
For each arm $k \in \setk$, given the condition
\begin{align}\label{eq:guarantee}
b_k\geq \left(\frac{2\pi U_1}{1 - \pi U_2}+2\right)\bar\epsilon_v + \left(\frac{\pi U_1}{1-\pi U_2}\max\left\{\pi U_1, \pi U_2+1\right\}+\pi U_2\right)B_k,
\end{align}
then $|\tilde{y}_k(j/T_k)| > \tau_k$ holds for all present frequencies $j/T_k$ such that $|b_{k,j}|>0$ with probability no less than $\ProbNB$.
\end{lemma}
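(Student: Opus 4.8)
The plan is to lower-bound $|\tilde y_k(j/T_k)|$ at a present frequency by isolating the spike of its own main lobe and treating everything else as error. Using the decomposition \eqref{eq:dft-deterministic} and applying the reverse triangle inequality at $v=j/T_k$ gives
\begin{equation*}
|\tilde y_k(j/T_k)| \;\ge\; |\tilde\mu_{k,j}(j/T_k)| \;-\; \Bigl|\sum_{j'\neq j}\tilde\mu_{k,j'}(j/T_k)\Bigr| \;-\; |\tilde\epsilon_k(j/T_k)|.
\end{equation*}
By \eqref{eq:dft-sinform} the first term equals $|b_{k,j}|\ge b_k$, so the task reduces to controlling the leakage from the other present frequencies and the noise, and then comparing the resulting lower bound against the definition of $\tau_k$.

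For the leakage term, the key observation is that Assumption \ref{asp:period} forces $1/T_k > 2g/n$, so any two distinct present frequencies are separated by at least $1/T_k > g/n$. Consequently, at $v=j/T_k$ every other present frequency $j'/T_k$ (including the constant term, if present, since the fundamental and harmonics satisfy $j/T_k\ge 1/T_k>g/n$) lies at distance at least $g/n$, and the second bound of Lemma \ref{lem:bound-leakage} applies verbatim to give $\bigl|\sum_{j'\neq j}\tilde\mu_{k,j'}(j/T_k)\bigr|\le \pi B_k U_2$. For the noise, Lemma \ref{lem:noise-bound} with $\delta=\bar\epsilon_v$ yields that the event $\{\NB\le\bar\epsilon_v\}$ holds with probability at least $\ProbNB$; on this event $|\tilde\epsilon_k(j/T_k)|\le\bar\epsilon_v$. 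Combining the three pieces, on this single event we obtain $|\tilde y_k(j/T_k)|\ge b_k-\pi B_k U_2-\bar\epsilon_v$, and no further union bound is needed.

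It then remains to show that condition \eqref{eq:guarantee} guarantees $b_k-\pi B_k U_2-\bar\epsilon_v>\tau_k$. The main subtlety is that $\tau_k$ contains the \emph{data-driven} quantity $\PB$, which is random; I would bound it on the same event $\{\NB\le\bar\epsilon_v\}$ so that the probability does not compound. Splitting $[0,1/2]$ into $\overline{\Vscr}_k$ and $\Vscr_k$ and using the Dirichlet-kernel bound $|\tilde\mu_{k,j}(v)|\le|b_{k,j}|\le B_k$ (from \eqref{eq:dft-sinform}) together with the two bounds of Lemma \ref{lem:bound-leakage} gives $\sup_v|\tilde\mu_k(v)|\le B_k\max\{\pi U_1,\,1+\pi U_2\}$, whence $\PB\le B_k\max\{\pi U_1,\,1+\pi U_2\}+\bar\epsilon_v$. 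Substituting this into the definition $\tau_k=\bar\epsilon_v+\frac{\pi U_1}{1-\pi U_2}(\bar\epsilon_v+\PB)$ and collecting the coefficients of $\bar\epsilon_v$ and $B_k$ reproduces exactly the right-hand side of \eqref{eq:guarantee}; hence \eqref{eq:guarantee} is precisely the inequality $b_k>\pi B_k U_2+\bar\epsilon_v+\tau_k$, completing the argument. I expect the bookkeeping around $\PB$---matching the algebra of the substituted bound to the stated coefficients and ensuring the $\max\{\pi U_1,\,\pi U_2+1\}$ term arises from the correct sub-domain---to be the only genuinely delicate part; the rest is a direct application of the earlier lemmas.
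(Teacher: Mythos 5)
Your proposal is correct and follows essentially the same route as the paper's proof: lower-bound the periodogram at a present frequency by $b_k - \pi B_k U_2 - \bar\epsilon_v$ via the triangle inequality with Lemmas \ref{lem:noise-bound} and \ref{lem:bound-leakage}, bound the data-driven quantity $\PB$ by $\max\{\pi U_1,\pi U_2+1\}B_k+\bar\epsilon_v$ on the same noise event, substitute into the definition of $\tau_k$, and observe that \eqref{eq:guarantee} closes the gap. Your explicit remarks on the frequency-separation argument (so the $U_2$ bound applies at $v=j/T_k$) and on reusing the single event $\{\NB\le\bar\epsilon_v\}$ make precise two points the paper treats implicitly.
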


Lemma \ref{lem:no-false-negative} states that the periodogram at each present frequency $j/T_k$ is above $\tau_k$ with high probability, so Step \ref{step:global-maximum} of Algorithm \ref{alg:frequency-identification} will select a local maximum $v_{j}^{*}$ from the neighborhood $[\frac{j}{T_k}-\frac{g}{n}, \frac{j}{T_k}+\frac{g}{n}]$ because $|\tilde{y}_k(v_{j}^{*})| \ge |\tilde{y}_k(j/T_k)| > \tau_k$.
However, $v_{j}^{*}$ is not necessarily at $j/T_k$, and thus we still need to identify the correct $j/T_k$ after locating $v_{j}^{*}$. We do so by searching the candidate set $\setf$ and matching $v_{j}^{*}$ to the nearest possible frequency.
The difficulty is that $v_{j}^{*}$ we found is possibly closer to some $j'/T_{k}{'}$ where $T_{k}{'} \neq T_{k}$ other than the correct $j/T_k$, and thus it will lead to a wrong estimation of the present frequency. To resolve this issue, we continue to work on improving the resolution of frequency identification. We revisit the upper bound on the noise in Lemma \ref{lem:local-noise-bound}.

\begin{lemma}\label{lem:local-noise-bound}
For  each arm $k \in \setk$ and any $\delta > 0$, let ${\Uscr_k \coloneqq [0,1/2] \cap \{\cup_{j:}[\frac{j}{T_k}-\frac{g}{n}, \frac{j}{T_k}+\frac{g}{n}]\}}$,
\begin{align}
\PR\bigg( \sup_{v\in\Uscr_k} |\tilde{\epsilon}_k{(v)}| > \delta \bigg) \leq 200n \exp\left(  -\frac{0.233n\delta^2}{\sigma^2}  \right) + 200n\exp\left( -\frac{0.291n\delta^2}{\sigma^2}  \right). \nonumber
\end{align}
\end{lemma}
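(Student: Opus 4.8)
The plan is to re-run the discretization argument behind the proof of Lemma~\ref{lem:noise-bound}, but carried out on the restricted set $\Uscr_k$ and with sharper bookkeeping, so that the single crude constant $\tfrac14(1-\pi/24)^2$ is replaced by the two improved constants $0.233$ and $0.291$. First I would fix a frequency $v$ and split the DFT of the noise into its real and imaginary parts,
\[
\mathrm{Re}\,\tilde\epsilon_k(v)=\frac1n\sum_{t=1}^n\epsilon_t\cos(2\pi v t),\qquad \mathrm{Im}\,\tilde\epsilon_k(v)=-\frac1n\sum_{t=1}^n\epsilon_t\sin(2\pi v t).
\]
Each is a weighted sum of independent, mean-zero, sub-Gaussian variables, hence itself sub-Gaussian with variance proxies $\tfrac{\sigma^2}{n^2}\sum_t\cos^2(2\pi v t)$ and $\tfrac{\sigma^2}{n^2}\sum_t\sin^2(2\pi v t)$ respectively. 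Using the identity $\sum_t\cos^2(2\pi vt)=\tfrac n2+\tfrac12\sum_t\cos(4\pi vt)$ (and its $\sin^2$ analogue) together with the closed-form Dirichlet-sum estimate for $\sum_t\cos(4\pi vt)$, I would bound the two variance proxies by explicit multiples of $\sigma^2/n$, uniformly over $v\in\Uscr_k$. The two resulting sub-Gaussian tails are exactly the two exponential terms in the statement: one coming from the cosine-weighted (real) component and one from the sine-weighted (imaginary) component, each carrying its own constant because the two variance proxies are not bounded by the same multiple of $\sigma^2/n$.

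Next I would pass from a single $v$ to the supremum over $\Uscr_k$. Since $\Uscr_k$ is a union of the narrow neighborhoods $[\tfrac{j}{T_k}-\tfrac gn,\tfrac{j}{T_k}+\tfrac gn]$, I would cover it by a finite grid whose spacing is taken finer than in Lemma~\ref{lem:noise-bound}, and on each grid cell control the oscillation of $v\mapsto|\tilde\epsilon_k(v)|$. This can be done either through the derivative $\tilde\epsilon_k'(v)=\tfrac1n\sum_t(-2\pi i t)\epsilon_t e^{-2\pi i v t}$ or, equivalently, through the Taylor expansion of $e^{-2\pi i(v-v_\ell)t}$ about each grid point $v_\ell$, so that $\sup_{v\in\text{cell}}|\tilde\epsilon_k(v)|$ exceeds its value at $v_\ell$ by at most a factor $(1-\eta)^{-1}$ for a small oscillation loss $\eta$ fixed by the grid spacing. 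Absorbing this factor replaces $\delta$ by $(1-\eta)\delta$ in the single-point exponents; the finer grid is precisely what both drives the prefactor up to $200n$ and drives $\eta$ down far enough that, combined with the sharper variance bounds of the first step, the per-component constants sharpen to the values $0.233$ and $0.291$ recorded in the statement. A union bound over the grid points, applied separately to the two exponential terms, then yields the claimed inequality.

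The main obstacle I anticipate is the coupled optimization hidden in the last step. The grid spacing moves the oscillation loss $\eta$ and the number of grid points (the prefactor) in opposite directions, so the spacing must be tuned to keep $\eta$ small while holding the prefactor at the clean value $200n$. Simultaneously, the variance constants must be controlled uniformly over the possibly many disjoint neighborhoods making up $\Uscr_k$, including any abutting $v=0$ or $v=1/2$, where $\sum_t\cos(4\pi vt)$ is largest and the cosine variance proxy is closest to $\sigma^2/n$; making the Dirichlet-sum estimate tight on those worst-case neighborhoods is what pins down the separation between the two constants. Once these two ingredients are in place the remainder is a routine repetition of the argument behind Lemma~\ref{lem:noise-bound}.
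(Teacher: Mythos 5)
Your proposal has a genuine gap, and it is concentrated in exactly the two places where you claim the constants $0.233$ and $0.291$ come from. First, the attribution of the two exponential terms to the real (cosine) and imaginary (sine) components is not how the bound arises, and it cannot reproduce the statement on its own. At a fixed $v$ the two components must share the threshold in quadrature: if $|\tilde{\epsilon}_k(v)|>\delta$ then $|\mathrm{Re}\,\tilde{\epsilon}_k(v)|>c_1\delta$ or $|\mathrm{Im}\,\tilde{\epsilon}_k(v)|>c_2\delta$ with $c_1^2+c_2^2\le 1$, and over $\Uscr_k$ (which may contain neighborhoods of $v=0$ or $v=1/2$, where $\sum_t\cos^2(2\pi vt)\approx n$) the cosine variance proxy is as large as $\sigma^2/n$. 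In the paper the two components are treated symmetrically, each absorbing $\delta^2/2$; this only produces the prefactor $4$ in the pointwise bound $\PR(|\tilde{\epsilon}_k(v)|>\delta)\le 4\exp(-n\delta^2/(4\sigma^2))$. The two \emph{different} constants come instead from splitting the threshold between two different failure events: a grid-maximum event at level $w_1\delta$ and an oscillation event at level $w_2\delta$, with $w_1=0.965$, $w_2=0.035$, $L_1=200g$ grid cells per neighborhood, and a second discretization level $L_2=50n$; indeed $w_1^2/4\approx 0.233$, $\frac{w_2^2L_1^2}{16\pi^2g^2}\left(1-\frac{\pi n}{2L_2}\right)^2\approx 0.291$, and the prefactors are $nL_1/g=200n$ and $4L_2=200n$.

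The second and more fundamental flaw is your claimed absorption of the oscillation loss into a factor $(1-\eta)^{-1}$. That self-bounding trick works in Lemma \ref{lem:noise-bound} only because the grid there covers all of $[0,1/2]$ and the Bernstein-type inequality (Lemma \ref{lem:berstein_generalized}, used with $\omega=\pi$) bounds the derivative of $\tilde{\epsilon}_k$ by its supremum over the \emph{whole} circle, i.e., by the very quantity being bounded. Here your grid covers only $\Uscr_k$, while the oscillation inside each cell is still controlled by the global supremum: one only gets
\begin{align}
\sup_{v\in\Uscr_k}|\tilde{\epsilon}_k(v)| ~\le~ \max_{v\in\Gscr_k}|\tilde{\epsilon}_k(v)| + \frac{2\pi g}{L_1}\sup_{v\in[0,1/2]}|\tilde{\epsilon}_k(v)|, \nonumber
\end{align}
and the rightmost term dominates (rather than is dominated by) the local supremum, so it cannot be moved to the left-hand side. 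A genuinely local Bernstein inequality on arcs exists (the general $\omega$ case of Lemma \ref{lem:berstein_generalized}) but its constant blows up at the arc endpoints, so it does not rescue the absorption either. The paper closes this step by bounding the global-supremum term \emph{separately}, reusing the Lemma \ref{lem:noise-bound} machinery at the rescaled threshold $\frac{w_2L_1}{2\pi g}\delta$ — and that separate bound is precisely the second exponential term with constant $0.291$. Without this ingredient your argument does not close; with it, the real/imaginary asymmetry you propose becomes unnecessary for obtaining the stated bound.
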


Note that $\Uscr_k$ represents the union of neighborhoods around all possible locations of present frequencies. Comparing to that Lemma~\ref{lem:noise-bound} bounds the noise in the frequency interval of $[0,1/2]$, Lemma~\ref{lem:local-noise-bound} focuses on a smaller area $\Uscr_k$ and thus we can control the union bound of the noise into a tighter one.
In particular, we have ${\PR\left( \sup_{v\in\Uscr_k} \lvert \tilde{\epsilon}{(v)} \rvert >\frac34\bar\epsilon_v  \right) \leq \frac{200}{n^{0.867 H^2-1}}+\frac{200}{n^{0.694  H^2 -1}}}$ if choosing the upper bound $\delta$ in Lemma \ref{lem:local-noise-bound} as $\frac{3}{4}\bar\epsilon_v$.
To make a comparison, we consider the example discussed in the end of Section \ref{sec:phaseone-algorithm}. In that case, Lemma~\ref{lem:local-noise-bound} provides $ \PR\left( \sup_{v\in\Uscr_k} \lvert \tilde{\epsilon}_k{(v)} \rvert \geq\frac34\bar\epsilon_v \right) \leq 0.0169$ in contrast to that Lemma~\ref{lem:noise-bound} provides $\PR\left(   \sup_{v\in[0,1/2]} \lvert \tilde{\epsilon}_k{(v)} \rvert  \geq \frac34\bar\epsilon_v \right) \leq 0.0430$. With the better controlled noise bound given in Lemma~\ref{lem:local-noise-bound}, we can further develop the following.

\begin{lemma}\label{lem:narrow-down-neighborhood}
For each arm $k \in \setk$, if $b_k \geq 2 \bar\epsilon_v + \frac{8\pi}{3}U_2B_k$, then all maxima $\displaystyle{v_{j}^{*} \in \mathop{\arg\sup}_{v \in [\frac{j}{T_k}-\frac{g}{n}, \frac{j}{T_k}+\frac{g}{n}]} |\tilde{y}_k(v)|}$ are attained in $[\frac{j}{T_k}-\frac{1}{n},\frac{j}{T_k}+\frac{1}{n}]$ for all present frequencies $j/T_k$ such that $|b_{k,j}|>0$ with probability no less than $\ProbNBtight$.
\end{lemma}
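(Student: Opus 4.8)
The plan is to show that, on a single high-probability event controlling the noise, the periodogram value at each present frequency $j/T_k$ strictly exceeds its value everywhere on the \emph{annulus} $[\frac{j}{T_k}-\frac gn,\frac{j}{T_k}+\frac gn]\setminus[\frac{j}{T_k}-\frac1n,\frac{j}{T_k}+\frac1n]$. Since $j/T_k$ itself lies in the main lobe $[\frac{j}{T_k}-\frac1n,\frac{j}{T_k}+\frac1n]$, this forces every maximizer over the full neighborhood into the main lobe. First I would fix the event $\Escr\coloneqq\{\NBtight\le\tfrac34\bar\epsilon_v\}$, which by Lemma~\ref{lem:local-noise-bound} with $\delta=\tfrac34\bar\epsilon_v$ has probability at least $\ProbNBtight$. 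The key point is that $\Escr$ controls the noise \emph{simultaneously} over all neighborhoods of present frequencies (which sit inside $\Uscr_k$), so no extra union bound over $j$ is needed and the stated probability follows verbatim.

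Working on $\Escr$, I would fix a present frequency $j/T_k$ and first lower-bound the center value. Because $T_k<n/(2g)$ (Assumption~\ref{asp:period}), distinct present frequencies are separated by at least $1/T_k>2g/n$, so for every $v$ in the neighborhood each other present frequency satisfies $\abs{\tfrac{j'}{T_k}-v}\ge\tfrac gn$; hence the combined leakage $\abs{\sum_{j'\ne j}\tilde\mu_{k,j'}(v)}\le\pi B_kU_2$ by Lemma~\ref{lem:bound-leakage}. Using $\tilde y_k=\sum_{j'}\tilde\mu_{k,j'}+\tilde\epsilon_k$ and $\abs{\tilde\mu_{k,j}(j/T_k)}=\abs{b_{k,j}}$ then gives
\begin{align*}
\abs{\tilde y_k(j/T_k)}\ \ge\ \abs{b_{k,j}}-\pi B_kU_2-\tfrac34\bar\epsilon_v .
\end{align*}

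Next I would bound $\abs{\tilde y_k}$ on the annulus. The leakage and noise are controlled exactly as above, so the only new ingredient is a \emph{side-lobe estimate} for the single term $\tilde\mu_{k,j}$: by \eqref{eq:dft-sinform}, for $v$ with $u\coloneqq j/T_k-v$ and $\tfrac1n\le\abs u\le\tfrac gn$,
\begin{align*}
\abs{\tilde\mu_{k,j}(v)}=\abs{b_{k,j}}\,\frac{\abs{\sin(\pi u n)}}{n\,\abs{\sin(\pi u)}}\ \le\ \tfrac14\abs{b_{k,j}} .
\end{align*}
Writing $\nu=n\abs u\ge1$, this rests on the fact that the first side-lobe height $\sup_{\nu\ge1}\frac{\abs{\sin\pi\nu}}{\pi\nu}=A_1<\tfrac14$ (the lobe envelope $1/(\pi\nu)$ is decreasing, so the global maximum sits in $[1,2]$), together with $\sin(\pi u)$ differing from $\pi u$ only by a factor $1-O((g/n)^2)$ since $g/n$ is small under the parameter choice. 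Combining gives, for all $v$ in the annulus,
\begin{align*}
\abs{\tilde y_k(v)}\ \le\ \tfrac14\abs{b_{k,j}}+\pi B_kU_2+\tfrac34\bar\epsilon_v .
\end{align*}

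Finally, subtracting the two displays, $\abs{\tilde y_k(j/T_k)}>\abs{\tilde y_k(v)}$ holds on the annulus whenever $\tfrac34\abs{b_{k,j}}>2\pi B_kU_2+\tfrac32\bar\epsilon_v$, i.e. $\abs{b_{k,j}}>\tfrac{8\pi}{3}U_2B_k+2\bar\epsilon_v$; since $\abs{b_{k,j}}\ge b_k$ this is precisely the hypothesis, so every maximizer lies in the main lobe, for all present frequencies at once on $\Escr$. I expect the main obstacle to be making the $\tfrac14$ side-lobe bound rigorous---pinning down $A_1<\tfrac14$ and quantifying the $\sin(\pi u)\approx\pi u$ error uniformly over $\tfrac1n\le\abs u\le\tfrac gn$ so that the product stays below $\tfrac14$---whereas the leakage and noise bookkeeping is a direct application of Lemmas~\ref{lem:bound-leakage} and~\ref{lem:local-noise-bound}.
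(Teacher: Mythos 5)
Your proposal follows essentially the same route as the paper's own proof: the same high-probability event $\bigl\{\NBtight\le\tfrac34\bar\epsilon_v\bigr\}$ obtained from Lemma~\ref{lem:local-noise-bound} with $\delta=\tfrac34\bar\epsilon_v$, the same lower bound $|b_{k,j}|-\pi B_kU_2-\tfrac34\bar\epsilon_v$ at the present frequency, the same upper bound $\tfrac14|b_{k,j}|+\pi B_kU_2+\tfrac34\bar\epsilon_v$ on the annulus $[\tfrac{j}{T_k}-\tfrac gn,\tfrac{j}{T_k}+\tfrac gn]\setminus[\tfrac{j}{T_k}-\tfrac1n,\tfrac{j}{T_k}+\tfrac1n]$, and the same concluding algebra reducing the comparison to the hypothesis $b_k\ge 2\bar\epsilon_v+\tfrac{8\pi}{3}U_2B_k$. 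The only divergence is cosmetic---the justification of the $\tfrac14$ side-lobe factor: you bound it by the envelope $A_1<\tfrac14$ times a $\sin(\pi u)\approx\pi u$ correction, while the paper restricts to the first side lobe $[\tfrac1n,\tfrac2n]$ via monotonicity of $R(\theta)$ and bounds it there directly; these are minor variants of the same estimate, and both require the same small amount of care (which you explicitly flag) to make rigorous under Assumption~\ref{asp:period}.
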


Lemma \ref{lem:narrow-down-neighborhood} shows that the distance between one present frequency $j/T_k$ and its nearby local maximum $v_{j}^{*}$ is no more than $\frac{1}{n}$ with high probability.
Given $T_k < \frac{n}{2g} < \frac{\sqrt{n}}{2}$ imposed by Assumption \ref{asp:period}, the distance between any two candidate frequencies in set $\setf$ is no less than $\left|\frac{j}{i}- \frac{j'}{i'}\right| \ge \frac{1}{i'i}> \frac{2}{n}$. Hence, the present frequency $j/T_k$ is indeed closest to $v_{j}^{*}$ among all candidate frequencies in $\setf$, and it will be matched correctly in Step \ref{step:pick-frequency} of Algorithm \ref{alg:frequency-identification} after $|\tilde{y}_k(v_{j}^{*})| > \tau_k$ being located.

We combine the conditions required by Lemma \ref{lem:no-false-negative} and Lemma \ref{lem:narrow-down-neighborhood} together, in particular inequality \eqref{eq:guarantee} and $b_k \geq 2 \bar\epsilon_v + \frac{8\pi}{3}U_2B_k$, to impose the following technical assumption, which requires that the magnitude of the weakest present frequency $b_k$ cannot be too small compared to that of the noise and the strongest one $B_k$.
Finally, we conclude with Theorem \ref{thm:freq-identification}.

\begin{assumption}\label{asp:bBratio}
For each arm $k \in \setk$,
\begin{align}
b_k& \geq \left(\frac{2\pi U_1}{1 - \pi U_2}+2\right)\bar\epsilon_v +\left(\frac{8\pi}{3}U_2\right) \lor \left(\frac{\pi U_1}{1-\pi U_2}\max\left\{\pi U_1, \pi U_2+1\right\}+\pi U_2\right)B_k. \label{eq:assumptionbBratio}
\end{align}
\end{assumption}

\begin{theorem}\label{thm:freq-identification}
Let $\Lambda \coloneqq \{ \hat{T}_k = T_k, ~ \forall k \in \setk \}$ denote the event that periods of all arms are correctly estimated. Under Assumption \ref{asp:sub-Gaussian} - \ref{asp:bBratio}, Algorithm \ref{alg:frequency-identification} ensures $\PR(\Lambda)\ge 1-\frac{48K}{n^{H^2-1}}-\frac{200K}{n^{0.867H^2-1}}-\frac{200K}{n^{0.694 H^2 -1}}$.
\end{theorem}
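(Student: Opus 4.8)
The plan is to establish $\hat T_k = T_k$ for a single fixed arm $k$ on a high-probability ``success event'', and then union-bound over $k \in \setk$. First I would record the three favorable events furnished by the preceding lemmas: $E_1 = \{\sup_{v\in\overline{\Vscr}_k}|\tilde y_k(v)| \le \tau_k\}$ from Lemma~\ref{lem:threshold}, $E_2 = \{|\tilde y_k(j/T_k)| > \tau_k \text{ for every present frequency } j/T_k\}$ from Lemma~\ref{lem:no-false-negative}, and $E_3 = \{\text{every maximizer } v_j^\ast \in [j/T_k - 1/n,\, j/T_k + 1/n]\}$ from Lemma~\ref{lem:narrow-down-neighborhood}. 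I would note that Assumption~\ref{asp:bBratio} was engineered precisely so that its right-hand side dominates both the hypothesis \eqref{eq:guarantee} of Lemma~\ref{lem:no-false-negative} and the hypothesis $b_k \ge 2\bar\epsilon_v + \frac{8\pi}{3}U_2 B_k$ of Lemma~\ref{lem:narrow-down-neighborhood} (its $B_k$-coefficient being the maximum of the two), so all three lemmas apply under the stated assumptions.

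Next I would prove the deterministic inclusion $E_1 \cap E_2 \cap E_3 \subseteq \{\hat T_k = T_k\}$. On $E_1$ the working set $\setd$ is contained in the union of neighborhoods $\Vscr_k$ restricted to $[g/n, 1/2]$, so the loop never selects a maximum away from a present frequency (no false positives); Assumption~\ref{asp:period}, which gives $T_k < n/(2g)$ and hence $1/T_k > 2g/n$, makes the windows $[j/T_k - g/n, j/T_k + g/n]$ pairwise disjoint and confines each connected piece of $\setd$ to a single present frequency. On $E_2$ each such window contains the point $j/T_k$ whose periodogram exceeds $\tau_k$, so every present frequency in $(g/n,1/2]$ is eventually visited (no false negatives); since the smallest positive present frequency is $\ge 1/T_k > 2g/n$, excluding the neighborhood of $v=0$ discards only the constant term $j=0$. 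On $E_3$ the selected maximizer $v_j^\ast$ lies within $1/n$ of $j/T_k$, while $T_k < \sqrt n/2$ forces distinct elements of $\setf$ to be more than $2/n$ apart and places the reduced form of $j/T_k$ (whose denominator divides $T_k < n/(2g)$) inside $\setf$; hence Step~\ref{step:pick-frequency} matches $v_j^\ast$ to exactly $\hat v_i = j/T_k$, after which deleting $(\hat v_i - g/n, \hat v_i + g/n)$ removes precisely that window and leaves the others untouched.

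It then remains to verify that Step~\ref{step:LCM} reconstructs $T_k$. The reciprocals of the identified reduced frequencies are the denominators $q_j = T_k/\gcd(j,T_k)$ ranging over present $j$, and linear independence of the exponentials $\{e^{2\pi i j t/T_k}\}$ shows that the minimal period of $\mu_{k,\cdot}$ equals $\operatorname{lcm}_{j} q_j$; because $T_k$ is defined to be that minimal period, the returned $\operatorname{lcm}$ is $T_k$. The symmetry $|\tilde y_k(v)| = |\tilde y_k(1-v)|$ ensures no denominator is lost by inspecting only $[0,1/2]$, since $\gcd(T_k - j, T_k) = \gcd(j, T_k)$.

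Finally I would assemble the probabilities, where the only subtlety is not to double-count: $E_1$ and $E_2$ both hold on the single noise event $\{\NB \le \bar\epsilon_v\}$, whose complement has probability at most $48/n^{H^2-1}$ by Lemma~\ref{lem:noise-bound}, whereas $E_3$ relies on the sharper bound of Lemma~\ref{lem:local-noise-bound} and fails with probability at most $\frac{200}{n^{0.867 H^2-1}} + \frac{200}{n^{0.694 H^2-1}}$. A union bound gives per-arm failure at most $\frac{48}{n^{H^2-1}} + \frac{200}{n^{0.867 H^2-1}} + \frac{200}{n^{0.694 H^2-1}}$, and a final union bound over the $K$ arms yields the stated lower bound on $\PR(\Lambda)$. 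I expect the main obstacle to be the deterministic identification argument of the second and third paragraphs rather than the probability accounting: one must simultaneously guarantee that each removed window isolates exactly one present frequency, that the nearest-candidate match is unambiguous, and that the reduced denominators recombine through the LCM to the minimal period, all three resting on the quantitative spacings $1/T_k > 2g/n$ and candidate-gap $> 2/n$ that Assumption~\ref{asp:period} is designed to supply.
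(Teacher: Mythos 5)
Your proposal is correct and follows essentially the same route as the paper's proof: intersect the high-probability events of Lemmas \ref{lem:threshold}, \ref{lem:no-false-negative} and \ref{lem:narrow-down-neighborhood} under Assumptions \ref{asp:sub-Gaussian}--\ref{asp:bBratio}, argue that on this intersection the algorithm deterministically recovers every present frequency and hence $T_k$ through the LCM step, and finish with a union bound over the $K$ arms. You additionally make explicit two points the paper leaves implicit, namely that Lemmas \ref{lem:threshold} and \ref{lem:no-false-negative} fail only on the single noise event $\{\NB > \bar\epsilon_v\}$ (without which a naive three-way union bound would give $96/n^{H^2-1}$ rather than the stated $48/n^{H^2-1}$), and the number-theoretic fact that the LCM of the reduced denominators equals the minimal period.
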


Theorem \ref{thm:freq-identification} relies on Assumption \ref{asp:sub-Gaussian} - \ref{asp:bBratio}. Next we argue that they are not restrictive in practice.
As explained earlier, Assumption \ref{asp:mean-reward} and \ref{asp:sub-Gaussian} are commonly used in the MAB literature, and Assumption \ref{asp:period} is a fundamental requirement in spectral analysis to identify frequencies.
It is also worth noting that Assumption~\ref{asp:bBratio} holds automatically for a sufficiently large $n$ and it may be easily satisfied even for small $n$ when using parameters chosen in \eqref{eq:parameter}.
We demonstrate in Table~\ref{tab:asp4} that constraint \eqref{eq:assumptionbBratio} imposed by Assumption~\ref{asp:bBratio} gradually relaxes as $n$ increases and the probability of failure in correctly estimating periods of all $K=5$ arms reduces rapidly as well.
\begin{table}[!h]
\centering
\begin{tabular}{@{\extracolsep{7pt}} rcccc}
\toprule
 $n$ & $U_1$ & $U_2$ & Assumption \ref{asp:bBratio}  & $1- \PR(\Lambda)$ in Theorem \ref{thm:freq-identification}\\
\midrule
$ 50 $  & 0.05047  & 0.02054 & $b_k \geq 3.337 \sigma + 0.2450 B_k$ & \( 0.08365 \)   \\
$ 100 $  & 0.04077 & 0.02438 & $b_k \geq 2.663 \sigma + 0.2259 B_k$ & \( 1.679 \cdot 10^{ - 3} \)   \\
$ 200 $  & 0.03175 & 0.01970 & $b_k \geq 2.080 \sigma + 0.1748 B_k$ & \( 1.756 \cdot 10^{ - 5} \)   \\
$ 500 $  & 0.02439 & 0.01591 & $b_k \geq 1.489 \sigma + 0.1347 B_k$ & \( 1.533 \cdot 10^{ - 8} \)  \\
\bottomrule
\end{tabular}
\vspace{3mm}
\caption{A demonstration of Assumption \ref{asp:bBratio} and other constants when $K=5$.}
\label{tab:asp4}
\end{table}

\subsection{Bound the Regret in Stage Two}
\label{sec:analysis-phasetwo}
In this section, we evaluate the regret incurred by Algorithm \ref{alg:learning}. As discussed in Section \ref{sec:phasetwo}, the main difficulty in applying regret analysis techniques is the dependence caused by reusing data of stage one.
In particular, we use $\hat{T}_k$ to identify phase when selecting samples of a particular effective arm from $\Psi^{(s)}(t) \cup \bar\Psi$ in Step \ref{step:compute-ci} of Algorithm \ref{alg:learning}.
Because $\hat{T}_k$ is calculated from rewards observed in stage one, namely $\bar\Psi$, these samples selected in Step \ref{step:compute-ci} are implicitly correlated.

To resolve this issue, we consider a scenario that the bandit problem is played by a weak oracle who is aware of the lengths of periods of all arms but not any value of the mean reward, i.e., $T_k$ for all $k \in \setk$ are known but any $\mu_{k,t}$ is not.
Note that we deviate from the standard terminology as we do not use the oracle to refer to the policy that knows both the periods and the mean rewards.
We assume that this oracle follows exactly the same two-stage policy where she pulls each arm $n$ times in stage one and implements Algorithm \ref{alg:learning} in stage two with the only exception that she directly uses the true values of $T_k$ in Algorithm \ref{alg:learning}.
Since the oracle does not estimate $T_k$ from rewards observed in stage one, we are able to disentangle the aforementioned dependence which occurred in sample selection from $\Psi^{(s)}(t) \cup \bar\Psi$. It tremendously simplifies the analysis and thus we can develop a regret bound for the oracle policy.
Conditional on the event when our policy correctly estimates periods of all arms in stage one, we then show that it performs the same as the oracle policy.  Drawing on this observation, we eventually prove the regret bound stated in Theorem~\ref{thm:upper_bound} by carefully examining the connections between these two policies.

We start the analysis for the oracle policy by introducing some notations.
Since the oracle policy is identical to our policy in many aspects, we use $\tilde{\cdot}$ to differentiate a term of the oracle policy only when necessary. For example, we let $\tpi_k$ denote the action taken by the oracle policy $\tpi$, and in particular, we need to pay attention to Step \ref{step:compute-ci} of Algorithm \ref{alg:learning} where the oracle policy directly applies the true period information $T_k$ comparing to the estimation $\hat{T}_k$ used by our policy. Hence, the following  quantities originally defined in \eqref{eq:def-countC}, \eqref{eq:alg2-s-mean} and \eqref{eq:alg2-s-ci} need to be revised accordingly as
\begin{align}
\banditcountoracle{k}{t}\left(\Psi\right) &~=~ \left| \left\{j\in\Psi: ~\tpi_j=k, ~j \equiv t (\mathrm{mod}\, T_k) \right\} \right|, \label{eq:def-countC-oracle} \\
\banditmeanoracle_{k,t}^{(s)} &~=~ \frac{1}{\banditcountoracle{k}{t}\left(\Psi^{(s)}(t) \cup \bar\Psi\right)}\sum_{\substack{j \in {\Psi^{(s)}(t) \cup \bar\Psi}: \\ \tpi_j=k, ~ j \equiv t (\mathrm{mod}\, T_k) }} Y_{k,j}, \label{eq:alg2-s-mean-oracle} \\
\banditwidthoracle_{k,t}^{(s)} &~=~ \frac{\banditcountoracle{k}{t}\left(\bar\Psi\right)}{\banditcountoracle{k}{t}\left(\Psi^{(s)}(t) \cup \bar\Psi\right)}\sqrt{\frac{4\sigma^2}{\banditcountoracle{k}{t}\left(\bar\Psi\right)}\log\left( \frac{8{d} \banditcountoracle{k}{t}\left(\bar\Psi\right)}{\delta} \right)} \nonumber \\
  &\quad +\frac{\banditcountoracle{k}{t}\left(\Psi^{(s)}(t)\right)}{\banditcountoracle{k}{t}\left(\Psi^{(s)}(t) \cup \bar\Psi\right)}\sqrt{\frac{4\sigma^2}{\banditcountoracle{k}{t}\left( \Psi^{(s)}(t)\right)}\log\left( \frac{8{d} \banditcountoracle{k}{t}\left(\Psi^{(s)}(t)\right)}{\delta} \right)},\label{eq:alg2-s-ci-oracle}
\end{align}
where $\hat{T}_k$ and $\hat{d}$ are replaced by $T_k$ and $d= \sum_{k=1}^K T_k$ respectively.
The derivation of the regret of the oracle policy is done by a series of lemmas, and we summarize the road map as following. Lemma \ref{lem:independent} formally states the conditional independence of the samples for the oracle policy; Lemma \ref{lem:confidence_set} shows that $\banditmeanoracle_{k,t}^{(s)}$ estimated by the oracle policy is close to the true mean $\mu_{k,t}$ with high probability; Lemma \ref{lem:single-step-regret} examines the screening process of Algorithm \ref{alg:learning} conducted by the oracle policy; and these results lead to Lemma \ref{lem:oracle-bound} which bounds the expected regret of the oracle policy. The detailed proofs of these lemmas are deferred to Appendix \ref{sec:appendix-proof-analysis-phasetwo}, and the techniques used are adapted from \citet{auer2002using} and \citet{li2017provably}.

\begin{lemma}\label{lem:independent}
For all $t \in \{nK+1,...,T\}$ and $s \in \sets$, conditional on the stage-one rewards $\{Y_{\tpi_\tau,\tau}: \tau \in \bar\Psi\}$, the set $\Psi^{(s)}(t)$ and the arms being pulled by the oracle policy $\{ \tpi_{\tau}: \tau \in \Psi^{(s)}(t) \cup \bar\Psi \}$, the rewards $\{ Y_{\tpi_\tau,\tau}: \tau \in \Psi^{(s)}(t) \cup \bar\Psi \}$ are independent random variables with mean $\mu_{ \tpi_\tau,\tau}$.
\end{lemma}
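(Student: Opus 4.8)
The plan is to reduce the claim to a statement about the stage-two noises and then exploit the fact that, for the oracle, the round-$s$ index set is assembled \emph{without ever reading a round-$s$ reward value}. Since the oracle uses the true periods $T_k$, the phase membership $j\equiv t\,(\mathrm{mod}\,T_k)$ is deterministic, so the only randomness is the noise sequence $(\epsilon_\tau)$. Writing $Y_{\tpi_\tau,\tau}=\mu_{\tpi_\tau,\tau}+\epsilon_\tau$, and recalling that we condition on the arm identities (so each $\mu_{\tpi_\tau,\tau}$ is a known constant) and on the stage-one rewards (which are then fixed), it suffices to show that the stage-two noises $\{\epsilon_\tau:\tau\in\Psi^{(s)}(t)\}$ are, conditionally on the stated data, mutually independent and mean-zero; restoring the deterministic means then yields the conditional means $\mu_{\tpi_\tau,\tau}$, and the stage-one rewards in the displayed collection are fixed by the conditioning, hence trivially independent of everything.

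The key structural step is a \emph{decoupling} fact: the set $\Psi^{(s)}(t)$ and the arms $\{\tpi_\tau:\tau\in\Psi^{(s)}(t)\}$ are determined by the stage-one rewards and the stage-two rewards of rounds $s'<s$ only, and in particular depend on no round-$s$ reward. I would establish this by a forward induction over the stage-two epochs $\tau=nK+1,\dots,t-1$, inspecting the three branches of the round-$s$ tournament in Algorithm~\ref{alg:learning}. By \eqref{eq:alg2-s-ci-oracle}, the width $\banditwidthoracle_{k,\tau}^{(s)}$ is a function of the \emph{counts} $\banditcountoracle{k}{\tau}(\bar\Psi)$ and $\banditcountoracle{k}{\tau}(\Psi^{(s)}(\tau))$ alone, not of the reward values; hence the branch selected in round $s$ and, in the exploration branch (Step~\ref{step:wide-ci-begin}--\ref{step:wide-ci-end}), the arm pulled are determined by these counts together with the active set $\Ascr_s$. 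In turn, $\Ascr_s$ at epoch $\tau$ is produced by the eliminations of rounds $1,\dots,s-1$ (Step~\ref{step:eliminate-arms}), which read the estimates $\banditmeanoracle_{\cdot,\tau}^{(s')}$ for $s'<s$, i.e.\ rewards from rounds $s'<s$ and from stage one, never from round $s$. Finally the counts themselves depend only on earlier round-$s$ memberships and arm identities, which are $\mathcal B_\tau$-measurable by the inductive hypothesis, where $\mathcal B_\tau$ is generated by the stage-one rewards and the round-$s'$ ($s'<s$) rewards at epochs before $\tau$. This closes the induction and shows that the indicator $\mathbf{1}\{\tau\in\Psi^{(s)}(t)\}$ and $\tpi_\tau$ never read $\epsilon_\tau$, nor $\epsilon_{\tau''}$ for any other round-$s$ epoch $\tau''$.

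With the decoupling in hand, I would conclude as follows. At each epoch the tournament pulls a single arm and appends $\tau$ to at most one index set, so the sets $\{\Psi^{(s')}(t)\}_{s'}$ and $\bar\Psi$ are pairwise disjoint; consequently the epochs carrying the noises $\{\epsilon_\tau:\tau\in\Psi^{(s)}(t)\}$ are disjoint from all epochs whose noise generates the conditioning data. Since $(\epsilon_\tau)$ is independent across epochs (Assumption~\ref{asp:sub-Gaussian}), the block $\{\epsilon_\tau:\tau\in\Psi^{(s)}(t)\}$ is independent of the conditioning $\sigma$-algebra and its coordinates are mutually independent; conditioning therefore preserves their joint law, so each remains mean-zero and they remain independent. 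Adding back the constants $\mu_{\tpi_\tau,\tau}$ gives the stated conditional independence and conditional means.

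I expect the main obstacle to be making the decoupling fully rigorous, because we condition on the \emph{entire} realized set $\Psi^{(s)}(t)$, which encodes, for every epoch up to $t$, whether or not it lies in round $s$, and one must verify that this information does not covertly reveal $\epsilon_\tau$ for $\tau\in\Psi^{(s)}(t)$. The resolution is exactly the forward induction above: each inclusion decision is made strictly before the corresponding reward is observed and uses only counts and lower-round estimates, so $\{\Psi^{(s)}(t)=P\}$ is an event generated by noises outside $P$. This is the point where the design of Algorithm~\ref{alg:learning}---widths driven by counts and mutually exclusive index sets, following \citet{auer2002using}---is essential; absent it, inclusion in round $s$ could depend on round-$s$ rewards and the independence would break down.
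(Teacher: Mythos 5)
Your proposal is correct and follows essentially the same route as the paper's proof: the paper's (much terser) argument rests on exactly the two observations you make rigorous — that an epoch is added to $\Psi^{(s)}(t)$ only via the exploration branch, whose trigger $\banditwidthoracle_{k,t}^{(s)}$ is a function of counts rather than reward values, and that reaching round $s$ (i.e., the active set $\Ascr_s$) depends only on rewards from $\bar\Psi$ and the lower-round sets $\cup_{s'<s}\Psi^{(s')}(t)$. Your forward induction and the final disjointness-plus-independent-noise step simply spell out what the paper leaves implicit, so there is no substantive difference in approach.
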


\begin{lemma}\label{lem:confidence_set}
Define event $\mathcal{E} \coloneqq \left\{ \left| \banditmeanoracle_{k,t}^{(s)} -\mu_{k,t} \right| \leq \banditwidthoracle_{k,t}^{(s)}, ~ \forall k \in \setk, t \in \{nK+1,...,T\}, s \in \sets \right\}$.
Then, $\PR(\mathcal{E}) \ge 1- \delta S$ holds for the oracle policy.
\end{lemma}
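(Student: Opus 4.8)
The plan is to exploit the explicit convex-combination structure of the oracle's estimator and reduce the claim to two one-dimensional sub-Gaussian concentration inequalities glued together by a union bound. First I would observe that $\Psi^{(s)}(t)$ and $\bar\Psi$ are disjoint, so $\banditcountoracle{k}{t}\!\left(\Psi^{(s)}(t)\cup\bar\Psi\right)=\banditcountoracle{k}{t}(\bar\Psi)+\banditcountoracle{k}{t}\!\left(\Psi^{(s)}(t)\right)$ and the estimator in \eqref{eq:alg2-s-mean-oracle} is a convex combination
\[
\banditmeanoracle_{k,t}^{(s)}
=\frac{\banditcountoracle{k}{t}(\bar\Psi)}{\banditcountoracle{k}{t}\!\left(\Psi^{(s)}(t)\cup\bar\Psi\right)}\,\bar Y_k^{\,\mathrm{one}}
+\frac{\banditcountoracle{k}{t}\!\left(\Psi^{(s)}(t)\right)}{\banditcountoracle{k}{t}\!\left(\Psi^{(s)}(t)\cup\bar\Psi\right)}\,\bar Y_k^{\,\mathrm{two}},
\]
where $\bar Y_k^{\,\mathrm{one}}$ and $\bar Y_k^{\,\mathrm{two}}$ are the averages of the selected rewards in $\bar\Psi$ and $\Psi^{(s)}(t)$ respectively. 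Since $j\equiv t\ (\mathrm{mod}\,T_k)$ forces $\mu_{k,j}=\mu_{k,t}$ by periodicity, both sub-averages have mean $\mu_{k,t}$. Matching the two summands of $\banditwidthoracle_{k,t}^{(s)}$ in \eqref{eq:alg2-s-ci-oracle} against these two terms and applying the triangle inequality, it suffices to show that each sub-average lies within its own radius with high probability.

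Second, for the stage-one term I would use that, for the oracle, the count $\banditcountoracle{k}{t}(\bar\Psi)$ is deterministic: arm $k$ is pulled exactly at the epochs $\{n(k-1)+1,\dots,nk\}$, and since the oracle knows $T_k$, the number of those epochs in phase $t\ (\mathrm{mod}\,T_k)$ is a fixed integer (at least $\lfloor n/T_k\rfloor\ge1$ by Assumption \ref{asp:period}). The selected rewards are then independent sub-Gaussian variables with common mean $\mu_{k,t}$, so the tail bound from Assumption \ref{asp:sub-Gaussian} applied with the radius $\sqrt{4\sigma^2\,\banditcountoracle{k}{t}(\bar\Psi)^{-1}\log\!\big(8d\,\banditcountoracle{k}{t}(\bar\Psi)/\delta\big)}$ yields a failure probability at most $2\big(\delta/(8d\,\banditcountoracle{k}{t}(\bar\Psi))\big)^2$. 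Because this sub-average depends only on the effective arm (the pair of $k$ and its phase) and not on $t$ or $s$, a single union bound over the $d$ effective arms controls the stage-one part.

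Third, and this is where I expect the real work, the stage-two count $\banditcountoracle{k}{t}\!\left(\Psi^{(s)}(t)\right)$ is random and grows as the policy adaptively pulls arm $k$, so a naive application of the concentration bound is invalid. Here I would invoke Lemma \ref{lem:independent}: conditioning on the stage-one rewards, on $\Psi^{(s)}(t)$, and on the arms pulled, the selected rewards are independent with the correct means, while the count itself is measurable with respect to this conditioning and may therefore be treated as fixed. The quantities $\banditmeanoracle_{k,t}^{(s)}$ and $\banditwidthoracle_{k,t}^{(s)}$ change only when a new sample enters $\Psi^{(s)}(t)$, so the requirement over all $t$ collapses to a union over the integer count values $m=1,2,\dots$; for each fixed $m$ the conditional concentration gives failure at most $2\big(\delta/(8dm)\big)^2$, and the series $\sum_m m^{-2}$ converges, which is precisely why the factor $m$ is placed inside the logarithm.

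Finally, I would assemble the union bound over the two parts, the $d$ effective arms, the count values, and the $S$ rounds. Summing the per-event failures gives a total of order $S\delta^2/d$, which sits comfortably below the stated $\delta S$ for $\delta\in(0,1)$ and $d\ge1$. The genuinely delicate point is the stage-two step: the mutually exclusive index sets $\Psi^{(s)}(t)$ were introduced exactly so that Lemma \ref{lem:independent} holds, letting us recover conditional independence for an estimator built from adaptively chosen samples; once that is in place, the remainder is bookkeeping of sub-Gaussian tails and a convergent union bound.
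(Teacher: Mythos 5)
Your proposal is correct, and its overall architecture is the same as the paper's: the convex-combination/triangle-inequality split of $\banditmeanoracle_{k,t}^{(s)}$ and $\banditwidthoracle_{k,t}^{(s)}$ into a stage-one and a stage-two deviation event, Lemma \ref{lem:independent} to restore conditional independence for the adaptively collected stage-two samples, and a final union bound over the $d$ effective arms and $S$ rounds. The one genuine difference is how the union over the growing sample count is handled. The paper routes both parts through its Lemma \ref{lem:subGaussian-inequality}, an anytime maximal inequality $\PR\left(\exists\, t\ge 1,\ \abs{q_t} \ge \sqrt{\tfrac{4\sigma^2}{t}\log\left(\tfrac{4t}{\delta_0}\right)}\right)\le \delta_0$ proved via Doob's submartingale inequality and a doubling/peeling argument, applied with $\delta_0=\delta/(2d)$; this gives failure $\delta/d$ per effective arm per round and hence exactly the stated $\delta S$. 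You instead observe that the oracle's stage-one counts are deterministic (stage one is non-adaptive and the oracle knows $T_k$), so a single two-sided sub-Gaussian tail per effective arm suffices there, and for stage two you replace the maximal inequality by a direct union bound over count values $m$, exploiting the $m$ inside the logarithm to get per-$m$ failure $2\left(\delta/(8dm)\right)^2$ and a convergent $\sum_m m^{-2}$. Your route is more elementary (no martingale machinery) and yields a quantitatively stronger bound of order $S\delta^2/d \le \delta S$; what the paper's route buys is a reusable anytime lemma whose radius matches the width formula for every prefix simultaneously, rather than an argument tied to the quadratic tail decay produced by the particular factor $8d/\delta$ inside the logarithm. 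Either way, the two facts you correctly isolated — that the count sits inside the logarithm of $\banditwidthoracle_{k,t}^{(s)}$, and that the mutually exclusive sets $\Psi^{(s)}(t)$ make Lemma \ref{lem:independent} available — are exactly what make the proof go through.
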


\begin{lemma} \label{lem:single-step-regret}
Let $\pi_t^* \coloneqq \argmax_{k \in \setk} \mu_{k,t}$ denote the optimal arm. Suppose that event $\mathcal{E}$ holds, and that the oracle policy chooses arm $\tpi_{t}$ in round $s$ by Algorithm \ref{alg:learning} at epoch $t$, then
\begin{enumerate}
\item The optimal arm is never excluded during screening: $\pi_t^* \in \Ascr_{s'}, \forall \,s'\leq s$;
\item If $\tpi_{t}$ is chosen in Step~\ref{step:wide-ci} when $s = 1$, then $\mu_{\pi_t^*,t}-\mu_{\tpi_t,t} \leq 1$;
\item If $\tpi_{t}$ is chosen in Step~\ref{step:wide-ci} when $s \geq 2$, then $\mu_{\pi_t^*,t}-\mu_{\tpi_t,t} \leq \frac{8\sigma}{2^{s}}$;
\item If $\tpi_{t}$ is chosen in Step~\ref{step:narrow-ci}, then $\mu_{\pi_t^*,t}-\mu_{\tpi_t,t} \leq \frac{2\sigma}{\sqrt{T}}$.
\end{enumerate}
\end{lemma}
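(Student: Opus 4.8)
The proof rests entirely on event $\mathcal{E}$ together with the branching logic of Algorithm \ref{alg:learning}, and its backbone is Part 1, which every subsequent part invokes. The plan is to first establish Part 1 by induction on the round index, then dispatch Part 2 as an immediate consequence of bounded means, and finally derive Parts 3 and 4 by combining Part 1 with whichever confidence-width guarantee is in force in the round where the relevant decision is made. No probabilistic input beyond $\mathcal{E}$ is needed; everything is deterministic once we condition on that event.

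For Part 1 I would induct on $s'$. The base case $\pi_t^* \in \Ascr_1 = \setk$ is immediate. For the inductive step, observe that $\Ascr_{s'+1}$ is formed from $\Ascr_{s'}$ only inside the elimination branch (Step \ref{step:eliminate-arms}), which is reached only after the wide-interval branch fails, so $w_{k,t}^{(s')} \le 2^{-s'}\sigma$ holds for every $k \in \Ascr_{s'}$. Writing $k^\dagger \coloneqq \argmax_{k'\in\Ascr_{s'}} \banditmean_{k',t}^{(s')}$ and using the inductive hypothesis $\pi_t^* \in \Ascr_{s'}$, event $\mathcal{E}$ gives
\begin{align*}
\banditmean_{\pi_t^*,t}^{(s')} \ge \mu_{\pi_t^*,t} - 2^{-s'}\sigma \ge \mu_{k^\dagger,t} - 2^{-s'}\sigma \ge \banditmean_{k^\dagger,t}^{(s')} - 2^{1-s'}\sigma,
\end{align*}
where the middle inequality is optimality of $\pi_t^*$. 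This is exactly the survival criterion of Step \ref{step:eliminate-arms}, so $\pi_t^* \in \Ascr_{s'+1}$, closing the induction. Part 2 then follows with no further work: Assumption \ref{asp:mean-reward} forces $\mu_{k,t}\in[0,1]$, so any gap $\mu_{\pi_t^*,t}-\mu_{\tpi_t,t}\le 1$.

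For Part 4 the argument is short: reaching Step \ref{step:narrow-ci} forces $\banditwidthoracle_{k,t}^{(s)}\le \sigma/\sqrt{T}$ for all $k\in\Ascr_s$, and $\tpi_t$ maximizes $\banditmean_{k,t}^{(s)}$ over $\Ascr_s$, which contains $\pi_t^*$ by Part 1; a two-sided application of $\mathcal{E}$ telescopes $\mu_{\pi_t^*,t}-\mu_{\tpi_t,t}$ into $2\sigma/\sqrt T$. Part 3 is structurally identical but demands care with the round index: choosing $\tpi_t$ in Step \ref{step:wide-ci} with $s\ge 2$ means the tournament passed through the elimination branch in round $s-1$, so $\banditwidthoracle_{k,t}^{(s-1)}\le 2^{1-s}\sigma$ held for all $k\in\Ascr_{s-1}$, and $\tpi_t\in\Ascr_s\subseteq\Ascr_{s-1}$ survived the cut, giving $\banditmean_{\tpi_t,t}^{(s-1)}\ge \banditmean_{\pi_t^*,t}^{(s-1)}-2^{2-s}\sigma$ (here $\pi_t^*\in\Ascr_{s-1}$ again by Part 1, since the threshold used to form $\Ascr_s$ is $2^{1-(s-1)}\sigma=2^{2-s}\sigma$). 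Applying $\mathcal{E}$ to both $\tpi_t$ and $\pi_t^*$ at round $s-1$ and assembling the three error terms $2^{1-s}\sigma$, $2^{2-s}\sigma$, $2^{1-s}\sigma$ yields $\mu_{\pi_t^*,t}-\mu_{\tpi_t,t}\le 2^{3-s}\sigma = 8\sigma/2^{s}$.

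The only genuinely delicate point is the constant bookkeeping in Part 3: one must invoke the confidence widths and the elimination threshold from round $s-1$ rather than $s$, verify that both $\tpi_t$ and the optimal arm lie in $\Ascr_{s-1}$ so that $\mathcal{E}$ applies to each, and check that the three accumulated $2^{-s}$-scale error terms sum to exactly $2^{3-s}$. I expect this indexing and constant-tracking to be where a careless argument would slip; everything else reduces to Part 1 plus a direct unwinding of the relevant branch condition.
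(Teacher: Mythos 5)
Your proposal is correct and follows essentially the same route as the paper's proof: induction on the round index for Part 1 using the elimination-branch width bound $2^{-s'}\sigma$ under $\mathcal{E}$, Assumption \ref{asp:mean-reward} for Part 2, and the round-$(s-1)$ width and elimination-threshold conditions combined with $\mathcal{E}$ to assemble the $2^{1-s}\sigma + 2^{2-s}\sigma + 2^{1-s}\sigma = 8\sigma/2^{s}$ bound in Part 3 and the $2\sigma/\sqrt{T}$ bound in Part 4. The constant bookkeeping you flagged as delicate matches the paper's chain of inequalities exactly.
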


\begin{lemma}\label{lem:oracle-bound}
Let $R_{T}^{\tilde{\pi}}$ denote the pseudo-regret of the oracle policy $\tilde{\pi}$. Then, under Assumption \ref{asp:mean-reward} and \ref{asp:sub-Gaussian}, the expected regret of the oracle policy is bounded as,
\begin{align}
\E[R_{T}^{\tpi}] \leq \textit{Constant} \cdot \RegretBound ~. \nonumber
\end{align}
where the \textit{Constant} is not related to $T$, $K$ or any $T_k$ for $k \in \setk$.
\end{lemma}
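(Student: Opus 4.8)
The plan is to condition on the high-probability event $\mathcal{E}$ from Lemma~\ref{lem:confidence_set} and to bound the regret contributed by each stage-two epoch according to which branch of Algorithm~\ref{alg:learning} selects $\tpi_t$. On the complement $\mathcal{E}^c$, which has probability at most $\delta S$, I would crudely bound the per-epoch regret by $1$ via Assumption~\ref{asp:mean-reward}, so that $\E[R_T^{\tpi}\I{\mathcal{E}^c}] \le T\delta S$; choosing $\delta$ of order $1/T$ makes this term $O(\log T)$, dominated by the target bound. It then remains to control the regret on $\mathcal{E}$, where Lemma~\ref{lem:single-step-regret} translates the per-epoch regret into the branch and round in which $\tpi_t$ is decided.

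First I would dispose of the cheap branches. Every epoch resolved by the exploitation branch (Step~\ref{step:narrow-ci}) contributes at most $2\sigma/\sqrt{T}$, and there are at most $T$ such epochs, for a total of at most $2\sigma\sqrt{T}$. Every epoch resolved by exploration in round $s=1$ (Step~\ref{step:wide-ci}) contributes at most $1$ by Lemma~\ref{lem:single-step-regret}; since each effective arm needs only $O(\log(8dT/\delta))$ round-one pulls before its width falls below $\sigma/2$, and there are $d=\sum_{k=1}^K T_k$ effective arms, this sums to $\tilde{O}(d)$, which is dominated by $\sqrt{Td}$ because $d\le T$ under the parameter choice \eqref{eq:parameter}. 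The heart of the argument is the exploration regret from rounds $s\ge 2$.

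For rounds $s\ge 2$, Lemma~\ref{lem:single-step-regret} bounds the regret of each exploration pull by $8\sigma/2^s$, so it suffices to count pulls. Fixing an effective arm (a real arm $k$ together with a phase $p\equiv t \pmod{T_k}$), every round-$s$ exploration pull adds one sample to $\Psi^{(s)}$ and shrinks $\banditwidthoracle_{k,t}^{(s)}$. Inverting the width formula \eqref{eq:alg2-s-ci-oracle}, the exploration condition $\banditwidthoracle_{k,t}^{(s)}>2^{-s}\sigma$ can persist only while $\banditcountoracle{k}{t}(\Psi^{(s)}(t)) \lesssim 4^s\log(8dT/\delta)$, giving at most $N_s=O(4^s\log(8dT/\delta))$ round-$s$ exploration pulls per effective arm. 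Summing over the $d$ effective arms, and also invoking the trivial constraint that the number of exploration pulls in a single round cannot exceed $T$, the round-$s$ exploration regret is at most $\min\{c\,d\,4^s\log(8dT/\delta),\,T\}\cdot 8\sigma/2^s$. This quantity increases geometrically in $s$ below the balance point $2^{s^\ast}\sim\sqrt{T/(d\log(8dT/\delta))}$ and decreases geometrically above it, so the sum over $s$ is governed by its value near $s^\ast$, which is of order $\sigma\sqrt{Td\log(8dT/\delta)}$; folding in the residual logarithmic factors from the round count $S=\floor{\log_2 T}$ and from the confidence widths yields the claimed bound $\textit{Constant}\cdot\RegretBound$.

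The main obstacle will be the pull-counting step, because the oracle width \eqref{eq:alg2-s-ci-oracle} is a \emph{two-part} quantity reflecting data reuse: it blends the stage-one count $\banditcountoracle{k}{t}(\bar\Psi)$ with the round-$s$ count $\banditcountoracle{k}{t}(\Psi^{(s)}(t))$ through the convex weights $\banditcountoracle{k}{t}(\bar\Psi)/\banditcountoracle{k}{t}(\Psi^{(s)}(t)\cup\bar\Psi)$ and $\banditcountoracle{k}{t}(\Psi^{(s)}(t))/\banditcountoracle{k}{t}(\Psi^{(s)}(t)\cup\bar\Psi)$. To invert it cleanly I would bound the two additive terms separately, arguing that if the total width exceeds $2^{-s}\sigma$ then the round-$s$ term alone exceeds a constant multiple of it (reused data can only tighten the estimate), thereby reducing the problem to the standard single-source inversion $\banditcountoracle{k}{t}(\Psi^{(s)}(t))\gtrsim 4^s\log(\cdot)$ with the count capped by $T$ inside the logarithm. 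A secondary bookkeeping difficulty is that each effective arm is active only at the epochs sharing its phase, so I must verify that the per-arm pull counts aggregate across the $d$ effective arms without double counting; here Lemma~\ref{lem:independent} guarantees that the samples entering each $\Psi^{(s)}$ behave as independent draws, which is precisely what legitimizes both the width inversion and the application of Lemma~\ref{lem:confidence_set}.
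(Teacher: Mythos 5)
Your route to the bound is sound in outline but genuinely different from the paper's. The paper never counts pulls: it notes that every round-$s$ exploration pull satisfies $\banditwidthoracle_{\tpi_t,t}^{(s)} \ge 2^{-s}\sigma$, so the exploration regret is at most a constant times $\sum_{s}\sum_{t\in\Psi^{(s)}(T)}\banditwidthoracle_{\tpi_t,t}^{(s)}$, and it then bounds this sum of widths directly (Lemma~\ref{lem:intermediate-bound}) via Jensen's inequality and a harmonic-series bound over the per-effective-arm counts, followed by one more Jensen step over the $S=\floor{\log_2 T}$ rounds, which is where the extra logarithmic factors in $\RegretBound$ enter. Your argument is instead the classical elimination-style count: invert the width formula to cap the number of round-$s$ pulls of each effective arm at $O(4^s\log(8dT/\delta))$, cap each round's total at $T$, and balance the two geometric regimes around $2^{s^*}\sim\sqrt{T/(d\log(8dT/\delta))}$. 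Both yield the lemma; your version even avoids the $\sqrt{S}$ Jensen factor, and the branch-by-branch bookkeeping (exploitation $\le 2\sigma\sqrt{T}$, round-one $\le\tilde O(d)$, failure event $\le T\delta S$ with $\delta\sim 1/T$) matches the paper's.

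The genuine gap is the justification you give for the inversion, which is the step you yourself flag as the heart of the argument. Writing $a=\banditcountoracle{k}{t}(\bar\Psi)$ and $b=\banditcountoracle{k}{t}(\Psi^{(s)}(t))$, the oracle width is $\banditwidthoracle_{k,t}^{(s)}=\frac{2\sigma}{a+b}\bigl(\sqrt{a\log(8da/\delta)}+\sqrt{b\log(8db/\delta)}\bigr)$. Your claim that ``if the total width exceeds $2^{-s}\sigma$ then the round-$s$ term alone exceeds a constant multiple of it'' is false: whenever $b\ll a$ (for instance at the very first round-$s$ pull, when $b=1$ and $a$ is moderate), the stage-one term dominates and the round-$s$ term can be an arbitrarily small fraction of the total, so the proposed reduction to a single-source inversion breaks exactly where it is needed. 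The parenthetical ``reused data can only tighten the estimate'' is also not a property of this formula: when $a<b$, increasing $a$ can increase the width. The repair is to argue the contrapositive with no splitting of cases: if $b \ge C\,4^s\log(8dT/\delta)$ for a suitable absolute constant $C$, then the round-$s$ term is at most $2\sigma\sqrt{\log(8db/\delta)/b}\le \tfrac14\,2^{-s}\sigma$, and the stage-one term is at most $\frac{2\sigma\sqrt{a\log(8da/\delta)}}{a+b}\le \frac{\sigma\sqrt{\log(8da/\delta)}}{\sqrt{b}}\le \tfrac14\,2^{-s}\sigma$ by AM--GM ($\frac{\sqrt a}{a+b}\le\frac{1}{2\sqrt b}$) together with $a\le T$, $b\le T$ inside the logarithms. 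Hence $\banditwidthoracle_{k,t}^{(s)}<2^{-s}\sigma$ regardless of $a$, and no further round-$s$ exploration pull of that effective arm can occur; this gives exactly the per-arm cap $O(4^s\log(8dT/\delta))$ your balance argument requires, and the rest of your proof then goes through.

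Two smaller omissions: the lemma bounds the regret over the whole horizon, so you must also account for the $nK$ stage-one epochs that the oracle policy spends (each of regret at most $1$ by Assumption~\ref{asp:mean-reward}, totaling $nK\le\sqrt{TK}\le\sqrt{Td}$, as in the first term of the paper's decomposition); and when quoting per-epoch regrets you should state that they hold on $\mathcal{E}$ via Lemma~\ref{lem:single-step-regret}, whose validity for the oracle policy is what Lemma~\ref{lem:independent} and Lemma~\ref{lem:confidence_set} are for.
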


Finally, we proceed to the regret analysis of our two-staged policy.
\proof{{Proof of Theorem \ref{thm:upper_bound}}:}
By Theorem \ref{thm:freq-identification} we showed that periods of all arms can be correctly estimated through Algorithm \ref{alg:frequency-identification}, i.e., event $\Lambda$ holds, with high probability. Conditional on $\Lambda$, our policy and the oracle policy can be coupled to have exactly the same sample path such that they pull the same arms in the same order, i.e., $\pi_t = \tpi_t$ for $t=1,...,T$ and observe identical rewards sequentially as well.
Therefore, we have $\E[R_{T}^{\pi} \mathbf{1}_{\{\Lambda\}}] = \E[R_{T}^{\tpi} \mathbf{1}_{\{\Lambda\}}] \le \E[R_{T}^{\tpi}]$ where $\mathbf{1}$ denotes the indicator function.
On the other hand, on event $\Lambda^{c}$ where not all periods are correctly estimated,  Assumption~\ref{asp:mean-reward} implies a regret bound $T$ for our policy.
Given the probability $\PR(\Lambda)$ derived in Theorem \ref{thm:freq-identification}, we have
\begin{align}
\E[R_{T}^{\pi}] &~=~ \E[R_{T}^{\pi} \mathbf{1}_{\{\Lambda\}}] + \E[R_{T}^{\pi} \mathbf{1}_{\{\Lambda^{c}\}}] \nonumber \\
&~\leq~ \E[R_{T}^{\tpi}] + T \left( \frac{48K}{n^{H^2-1}}+\frac{200K}{n^{0.867 H^2-1}}+\frac{200K}{n^{0.694 n H^2 -1}} \right) \nonumber \\
&~\leq~ \textit{Constant} \cdot \RegretBound ~. \nonumber
\end{align}
where the \textit{Constant} is not related to $T$, $K$ or any $T_k$ for $k \in \setk$.
The above result applies because that if parameters $n= \lfloor \sqrt{T/K}\rfloor$ and $H=\sqrt{1 +\log(n)}$ are chosen as \eqref{eq:parameter}, then the second term in the first inequality goes to 0 as $T$ becomes large. Finally, Theorem \ref{thm:upper_bound} is established.
\endproof

\section{Lower Bound for the Regret}
\label{sec:lower-bound}
Given the fact that the classic stationary MAB problem has a regret lower bound $\Omega(\sqrt{TK})$ and the DM essentially faces $\sum_{k=1}^K T_k$ effective arms to learn in our setting, it is reasonable to conjecture that our MAB model incurs a regret lower bound $\Omega(\sqrt{T\sum_{k=1}^K T_k})$, which ideally matches the regret upper bound of our learning policy $\tilde{O}(\sqrt{T\sum_{k=1}^K T_k})$ proved in Theorem \ref{thm:upper_bound}.
It is easy to verify this conjecture in certain special cases.
In the aforementioned example where all arms share a common period, i.e., $T_1=...=T_K$, the learning problem can be decomposed into $T_1$ independent subproblems, each with a horizon $T/T_1$ and $K$ arms,
and hence the optimal rate of regret is $O(T_1\sqrt{KT/T_1 })=O(\sqrt{KTT_1})$, matching the upper bound.
However, after careful investigation, we find that the regret depends on the number-theoretical relationship among $T_k$'s in the general case, and hence the exact lower bound is not straightforward to establish.
In this section, we present a slightly weaker lower bound that holds generally.
The detailed proofs are provided in Appendix~\ref{sec:proof-lower-bound}.

We first consider a simple case.
Let $\banditset_1$ be the class of $K$-armed unit-variance Gaussian bandits, where the arms are stationary with mean $\mu_k$, i.e., $\mu_{k,t}\equiv \mu_k \in [0,1]$ for $k\in \setk$.
Given an instance $\nu$ from $\banditset_1$, we use $\E_{\nu}[R_{T}^{\pi}]$ to denote the expected regret incurred by a policy $\pi$ in this MAB instance.
Note that $\banditset_1$ belongs to the classic stationary MAB model.
Using common techniques, we have the following regret lower bound of $\banditset_1$ for any policy that knows $\mu_k$ for $k \ge 2$ but doesn't know $\mu_1$.
\begin{lemma}
\label{lem:lowerbound-no-arm-periodic}
For any such policy $\pi$, $\displaystyle{\sup_{\nu\in\banditset_1}\E_{\nu}[R_T^\pi] \geq \frac{1}{8 \sqrt{2} e}\sqrt{T}}$.
\end{lemma}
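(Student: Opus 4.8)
The plan is to reduce the problem to distinguishing two bandit instances that differ only in the mean of arm $1$, and then run a standard two-point information-theoretic argument. Because the policy is told $\mu_k$ for $k\ge 2$, only arm $1$ carries information, so a one-dimensional hypothesis test governs the regret. Concretely, fix $\Delta = 1/\sqrt{2T}$ and set $\mu_k = 1/2$ for all $k\ge 2$ in both instances. In instance $\nu_1$ let $\mu_1 = 1/2+\Delta$, so that arm $1$ is optimal, and in instance $\nu_2$ let $\mu_1 = 1/2-\Delta$, so that arm $1$ is strictly suboptimal; both lie in $\banditset_1$ as soon as $\Delta\le 1/2$, i.e. $T\ge 2$. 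Write $\PR_1,\E_1$ and $\PR_2,\E_2$ for the laws and expectations of the length-$T$ interaction under $\nu_1$ and $\nu_2$, and let $N_1$ be the (random) number of times arm $1$ is pulled.

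Next I would express the regret through $N_1$. Under $\nu_1$ every pull of an arm other than $1$ costs $\Delta$, so $R_T^\pi = \Delta(T-N_1)$; under $\nu_2$ every pull of arm $1$ costs $\Delta$, so $R_T^\pi = \Delta N_1$. Taking the event $A = \{N_1 < T/2\}$, this gives $\E_1[R_T^\pi]\ge \tfrac{\Delta T}{2}\PR_1(A)$ and $\E_2[R_T^\pi]\ge \tfrac{\Delta T}{2}\PR_2(A^c)$, hence $\E_1[R_T^\pi]+\E_2[R_T^\pi]\ge \tfrac{\Delta T}{2}\big(\PR_1(A)+\PR_2(A^c)\big)$.

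The heart of the argument is to lower bound $\PR_1(A)+\PR_2(A^c)$, and this is the step needing the most care. I would invoke the Bretagnolle--Huber inequality $\PR_1(A)+\PR_2(A^c)\ge \tfrac12\exp\!\big(-\mathrm{KL}(\PR_1\,\|\,\PR_2)\big)$, and then control the divergence between the full observation sequences via the divergence-decomposition identity for bandits. The key observation is that both instances induce the \emph{same} reward distribution on every arm $k\ge 2$, so those rounds contribute nothing to the divergence; only pulls of arm $1$ matter, yielding $\mathrm{KL}(\PR_1\,\|\,\PR_2) = \E_1[N_1]\,\mathrm{KL}\!\big(\mathcal N(1/2+\Delta,1)\,\|\,\mathcal N(1/2-\Delta,1)\big) = 2\Delta^2\,\E_1[N_1]\le 2T\Delta^2$. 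Making this decomposition rigorous for an \emph{adaptive} policy, where $N_1$ and the pulled arms are random and history-dependent, is the main obstacle; I would handle it by the usual chain-rule expansion of the log-likelihood ratio across rounds, using that conditional on the observed history the pulled arm is determined, so the incoming reward is the only new source of randomness and contributes the per-pull Gaussian KL precisely on rounds where arm $1$ is chosen.

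Finally I would substitute and exploit the convenient calibration of $\Delta$. With $\Delta = 1/\sqrt{2T}$ we have $2T\Delta^2 = 1$, so $\exp(-\mathrm{KL})\ge e^{-1}$ and $\E_1[R_T^\pi]+\E_2[R_T^\pi]\ge \tfrac{\Delta T}{2}\cdot\tfrac12 e^{-1} = \tfrac{\Delta T}{4e} = \tfrac{\sqrt T}{4\sqrt2\,e}$. Since the maximum of two numbers is at least their average, $\max\{\E_1[R_T^\pi],\E_2[R_T^\pi]\}\ge \tfrac{1}{8\sqrt2\,e}\sqrt T$, and as $\nu_1,\nu_2\in\banditset_1$ this lower bounds $\sup_{\nu\in\banditset_1}\E_\nu[R_T^\pi]$, which proves the lemma. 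The choice $\Delta = 1/\sqrt{2T}$, which makes the KL exactly $1$, is what produces the stated constant $\tfrac{1}{8\sqrt2\,e}$; optimizing $\Delta$ instead would yield a modestly better constant, but this clean choice suffices.
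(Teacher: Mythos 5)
Your proof is correct and follows essentially the same route as the paper: the same two instances with arm $1$'s mean at $1/2\pm\Delta$, the Bretagnolle--Huber inequality, the divergence decomposition giving $\mathrm{KL}=2\Delta^2\E_1[N_1]\le 2T\Delta^2$, and the calibration $\Delta=1/\sqrt{2T}$ yielding the identical constant. The only (cosmetic) difference is that you handle all $K$ arms in one stroke by noting that arms $k\ge 2$ contribute zero divergence, whereas the paper proves the two-armed case and then reduces $K$ arms to it via the known means; both are valid.
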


We next consider an advanced case.
Let $\banditset_2(T_1)$ be the class of MAB instances similar to $\banditset_1$ except that the mean reward of its first arm $\mu_{1,t}$ is periodic with minimum period $T_1$.
Note that $\banditset_2(T_1)$ is different from $\banditset_1$ when $T_1\ge 2$.
Using the idea of decomposing the learning problem into $T_1$ independent subproblems and then applying Lemma \ref{lem:lowerbound-no-arm-periodic}, we derive the following regret lower bound of $\banditset_2(T_1)$ for policies that know $T_1$ and $\mu_k$ for $k\ge2$ but not $\mu_{1,t}$ for any $t$.
\begin{lemma}
\label{lem:lowerbound-one-arm-periodic}
For any such policy $\pi$, $\displaystyle{\sup_{\nu\in\banditset_2(T_1)}\E_{\nu}[R_T^\pi] \geq \frac{1}{16e}\sqrt{TT_1}}$.
\end{lemma}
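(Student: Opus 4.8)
The plan is to decompose the horizon into the $T_1$ residue classes induced by the period of arm~$1$ and apply Lemma~\ref{lem:lowerbound-no-arm-periodic} separately on each class. For $p \in \{0,\dots,T_1-1\}$ let the \emph{phase-$p$ subproblem} consist of the epochs $\{t \in \sett : t \equiv p \ (\mathrm{mod}\ T_1)\}$, and set $m_p \coloneqq |\{t\in\sett: t\equiv p \ (\mathrm{mod}\ T_1)\}|$, so that $\sum_{p} m_p = T$ and $m_p \ge \floor{T/T_1}$. Restricted to phase $p$, arm~$1$ has the \emph{constant} mean $\mu_{1,p}$ while every arm $k\ge 2$ keeps its stationary mean $\mu_k$; hence the phase-$p$ subproblem is exactly an instance of the stationary family $\banditset_1$ on a horizon of $m_p$ epochs, in which all arms but the first are known to the policy. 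Writing the pseudo-regret additively over phases, $R_T^\pi = \sum_{p=0}^{T_1-1} R^{(p)}$ with $R^{(p)} \coloneqq \sum_{t\equiv p}(\max_k \mu_{k,t} - \mu_{\pi_t,t})$, it suffices to lower bound each $\E[R^{(p)}]$ by the single-phase rate $\tfrac{1}{8\sqrt2 e}\sqrt{m_p}$ and then sum.

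To exhibit hard instances of $\banditset_2(T_1)$, fix $\mu^* = \max_{k\ge2}\mu_k$ and, for each phase, use the two-point alternative of Lemma~\ref{lem:lowerbound-no-arm-periodic}: put $\mu_{1,p} = \mu^* + \sigma_p\Delta_p$ with $\sigma_p\in\{-1,+1\}$ and $\Delta_p \asymp 1/\sqrt{m_p}$ the gap from that lemma's construction. This yields $2^{T_1}$ instances indexed by $\vec\sigma\in\{-1,+1\}^{T_1}$, all sharing the same known means of arms $k\ge2$. Averaging the total regret over $\vec\sigma$ and exchanging the phase-sum with the average gives
\begin{align}
\max_{\vec\sigma}\E_{\vec\sigma}[R_T^\pi] \ \ge\ \frac{1}{2^{T_1}}\sum_{\vec\sigma}\sum_{p}\E_{\vec\sigma}[R^{(p)}] \ =\ \sum_{p}\Big(\tfrac12\E^{(p)}_{+}[R^{(p)}] + \tfrac12\E^{(p)}_{-}[R^{(p)}]\Big), \nonumber
\end{align}
where $\E^{(p)}_{\pm}$ averages over the signs of the \emph{other} phases with $\sigma_p$ fixed. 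Because the noise is independent across epochs and $\mu_{1,p}$ affects only the arm-$1$ rewards inside phase $p$, the observations the policy collects in the other phases are independent of $\sigma_p$; they act as external randomness that carries no information distinguishing the two phase-$p$ alternatives. After marginalizing them out, the induced phase-$p$ rule is a (randomized) non-anticipating policy for a stationary $\banditset_1$ problem on $m_p$ epochs, so $\tfrac12(\E^{(p)}_{+}+\E^{(p)}_{-})[R^{(p)}] \ge \tfrac{1}{8\sqrt2 e}\sqrt{m_p}$ by Lemma~\ref{lem:lowerbound-no-arm-periodic}.

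Summing and using $m_p \ge \floor{T/T_1} \ge T/(2T_1)$ (valid for $T\ge T_1$, since $\floor{x}\ge x/2$ for $x\ge1$),
\begin{align}
\sup_{\nu\in\banditset_2(T_1)}\E_\nu[R_T^\pi] \ \ge\ \max_{\vec\sigma}\E_{\vec\sigma}[R_T^\pi] \ \ge\ \frac{1}{8\sqrt2 e}\, T_1\sqrt{\floor{T/T_1}} \ \ge\ \frac{1}{8\sqrt2 e}\cdot\frac{\sqrt{TT_1}}{\sqrt2} \ =\ \frac{1}{16e}\sqrt{TT_1}, \nonumber
\end{align}
which is the claimed bound; note that the factor $\sqrt2$ lost to the floor is exactly what converts the Lemma~\ref{lem:lowerbound-no-arm-periodic} constant $\tfrac{1}{8\sqrt2 e}$ into $\tfrac{1}{16e}$.

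The step I expect to be the main obstacle is the rigorous \emph{reduction} in the second paragraph: justifying that marginalizing over the other phases collapses the phase-$p$ term into a genuine single-phase two-point problem, so that Lemma~\ref{lem:lowerbound-no-arm-periodic} applies verbatim. This hinges on (i) cross-phase conditional independence of observations from $\mu_{1,p}$, and (ii) the fact that the Lemma~\ref{lem:lowerbound-no-arm-periodic} bound holds against randomized policies (standard for minimax/Le~Cam arguments via Fubini). A secondary technicality is ensuring every constructed instance actually lies in $\banditset_2(T_1)$, i.e.\ that arm~$1$ has minimum period \emph{exactly} $T_1$ (this fails, e.g., for the all-equal sign pattern). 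I would resolve it by taking the gaps $\Delta_p$ to be distinct and nonzero, which forces arm~$1$ to assume $T_1$ distinct values and hence minimum period $T_1$ for every $\vec\sigma$; the distinctness can be imposed by perturbations negligible relative to each $\Delta_p\asymp 1/\sqrt{m_p}$, so that the supremum over $\banditset_2(T_1)$ still dominates $\tfrac{1}{16e}\sqrt{TT_1}$ in the limit of vanishing perturbation.
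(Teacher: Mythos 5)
Your proposal is correct and takes essentially the same route as the paper: decompose the horizon into the $T_1$ residue classes of arm~1's period, apply Lemma~\ref{lem:lowerbound-no-arm-periodic} to each phase subproblem of horizon at least $\floor{T/T_1}$, and sum over phases, losing a factor $\sqrt{2}$ to the floor to turn $\frac{1}{8\sqrt{2}e}$ into $\frac{1}{16e}$. The paper's proof states this decomposition into independent $\banditset_1$ sub-instances in one line, whereas you additionally spell out the two technicalities it leaves implicit — that cross-phase observations carry no information about the phase-$p$ alternative (so the per-phase two-point bound applies to the induced randomized policy), and that the constructed instances can be perturbed to have minimum period exactly $T_1$.
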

Obviously, for policies that don't have the knowledge of $T_1$, the regret can only be larger.

Now we consider the most sophisticated case.
Let $\banditset_3(T_1,...,T_K)$ be the class of MAB instances where all arms are periodic with minimum periods $T_1,...,T_K$.
When $T_k\geq2$ for some $k\ge2$, $\banditset_3(T_1,...,T_K)$ does not degenerate to $\banditset_2(T_1)$, and hence the lower bound in Lemma~\ref{lem:lowerbound-one-arm-periodic} does not readily apply to the general cases in our periodic MAB model.
To tackle this issue, we introduce a metric in the union $\banditset_{2}(T_1) \cup \banditset_{3}(T_1,...,T_K)$.
We show that $\banditset_{3}$ is \emph{dense} in $\banditset_{2} \cup \banditset_{3}$ under the metric.
Moreover, as a functional of $\nu\in \banditset_{2} \cup \banditset_{3}$, the regret $\E_{\nu}[R_T^{\pi}]$ is \emph{continuous} with respect to the metric.
Therefore, by slightly perturbing the mean reward $\mu_{k,t}$ for certain arms $k\ge2$ in any instance $\nu_1 \in \banditset_{2}(T_1)$,
we can obtain a new instance $\nu_2 \in \banditset_{3}(T_1,...,T_K)$ which has the required periods,
and this perturbation does not affect the regret significantly.
Finally, due to symmetry, we may derive the following result by rotating over $K$ arms.
\begin{theorem}
\label{thm:lower-bound}
For any policy $\pi$ that knows $T_k$ for $k \in \setk$, $\displaystyle{\sup_{\nu\in\banditset_3(T_1,...,T_K)}\E_{\nu}[R_T^\pi] \geq \frac{1}{32e} \sqrt{T\max_{k \in \setk}\{T_k\}}}$.
\end{theorem}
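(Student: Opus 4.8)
The plan is to transfer the lower bound of Lemma~\ref{lem:lowerbound-one-arm-periodic}, proved for the family $\banditset_2(T_1)$ in which only one arm is genuinely periodic, to the fully periodic family $\banditset_3(T_1,\dots,T_K)$, via a density-and-continuity argument in the space of instances. First, by the symmetry of the problem under relabeling of arms, I would assume without loss of generality that the distinguished arm carries the largest period, i.e.\ $T_1=\max_{k\in\setk}T_k$; permuting the labels produces an instance with identical regret for the correspondingly permuted policy, so this is the ``rotation over $K$ arms.'' Next, I would note that revealing the stationary means $\mu_k$, $k\ge 2$, to the policy can only decrease its regret, so any period-aware $\pi$ may be regarded as a member of the better-informed class of Lemma~\ref{lem:lowerbound-one-arm-periodic} that simply ignores the extra information. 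Hence Lemma~\ref{lem:lowerbound-one-arm-periodic} already gives $\sup_{\nu\in\banditset_2(T_1)}\E_\nu[R_T^\pi]\ge \frac{1}{16e}\sqrt{TT_1}$ for every such $\pi$, and it remains only to show that the supremum over $\banditset_3$ is essentially no smaller.

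I would equip the instance space with the metric $d(\nu,\nu')=\max_{k\in\setk,\,t\in\sett}\lvert \mu_{k,t}-\mu'_{k,t}\rvert$, into which both $\banditset_2(T_1)$ and $\banditset_3$ embed. The decisive structural fact is that the minimal period is a \emph{discontinuous} functional of the mean sequence: a constant (period-one) arm lies at arbitrarily small $d$-distance from a genuinely $T_k$-periodic arm. Exploiting this, given any $\nu_1\in\banditset_2(T_1)$ I would perturb each stationary arm $k\ge 2$ by $\mu_{k,t}\mapsto \mu_{k,t}+\eta\,h_k(t)$, where $h_k$ is a fixed bounded function whose minimal period is \emph{exactly} $T_k$ --- for definiteness the centered indicator $h_k(t)=\Inb{t\equiv 0\ (\mathrm{mod}\,T_k)}-1/T_k$, which sidesteps the degeneracy of a sine sampled at $T_k=2$ --- and $\eta>0$ is small enough that all means stay in $[0,1]$. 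For every $\eta>0$ the resulting $\nu_2$ lies in $\banditset_3(T_1,\dots,T_K)$, while $d(\nu_1,\nu_2)\le \eta\max_k\lVert h_k\rVert_\infty\to 0$, so $\banditset_3$ is dense in $\banditset_2(T_1)\cup\banditset_3$.

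Continuity of $\nu\mapsto\E_\nu[R_T^\pi]$ I would establish by change of measure: for unit-variance Gaussian rewards the Kullback--Leibler divergence between the laws of the length-$T$ histories under $\nu$ and $\nu'$ is at most $\tfrac12\sum_{t}\E[(\mu_{\pi_t,t}-\mu'_{\pi_t,t})^2]\le \tfrac{T}{2}\,d(\nu,\nu')^2$, so by Pinsker's inequality the total-variation distance between these laws is $O(\sqrt{T}\,d(\nu,\nu'))$; since $\max_k\mu_{k,t}$ is $1$-Lipschitz in the means and the cumulative reward is bounded by $T$, the regret changes by at most $O(T^{3/2}d(\nu,\nu'))$, which tends to $0$ with $\eta$. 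Combining, for any slack $\epsilon>0$ I would pick $\nu_1$ with $\E_{\nu_1}[R_T^\pi]\ge \frac{1}{16e}\sqrt{TT_1}-\epsilon$ and then $\eta$ so small that the associated $\nu_2\in\banditset_3$ satisfies $\E_{\nu_2}[R_T^\pi]\ge \E_{\nu_1}[R_T^\pi]-\epsilon$, giving $\sup_{\nu\in\banditset_3}\E_\nu[R_T^\pi]\ge \frac{1}{16e}\sqrt{TT_1}-2\epsilon$. Choosing $\epsilon$ with $2\epsilon\le \frac{1}{32e}\sqrt{TT_1}$ (possible since the slack is free) yields $\sup_{\nu\in\banditset_3}\E_\nu[R_T^\pi]\ge \frac{1}{32e}\sqrt{TT_1}=\frac{1}{32e}\sqrt{T\max_k T_k}$; the generous constant $\tfrac{1}{32e}$ is precisely what absorbs the perturbation slack.

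I expect the main obstacle to be the continuity step combined with the requirement that the perturbation realize the exact minimal period $T_k$ simultaneously for all arms without altering arm $1$'s period or pushing means outside $[0,1]$. The fact that minimal period is a discontinuous functional is exactly what makes the density claim both available and delicate, and the change-of-measure bound must be pushed through over the entire dependent history rather than step by step, which is where the care lies.
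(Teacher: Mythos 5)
Your proposal is correct and follows the same skeleton as the paper's proof: reduce by arm symmetry to the arm with the largest period, invoke Lemma~\ref{lem:lowerbound-one-arm-periodic} (observing that a policy knowing only the periods is a member of the better-informed class covered by that lemma, so its lower bound applies), and then transfer the bound from $\banditset_2(T_1)$ to $\banditset_3(T_1,\dots,T_K)$ by a density-plus-continuity perturbation argument, with the gap between $\frac{1}{16e}$ and $\frac{1}{32e}$ absorbing the perturbation loss. Your density construction (a small centered-indicator perturbation of each stationary arm $k\ge 2$) is the same idea as the paper's, which adds a small constant at a single phase of each such arm; both constructions share the trivial corner case that the sign of the perturbation must be chosen so the means stay in $[0,1]$. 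The one substantive difference is the continuity step. The paper writes $\E_{\nu}[R_T^\pi]$ as an integral of the pseudo-regret against the joint law of the action--reward history and invokes dominated convergence; this is qualitative, and as written it is somewhat informal both about the term coming from the change in the gaps $\Delta_{a_t,t}$ between the two instances and about why the resulting convergence is uniform in $\pi$. You instead prove an explicit modulus of continuity, $\abs{f_\pi(\nu)-f_\pi(\nu')} \le O(T^{3/2})\, d(\nu,\nu')$, by combining the divergence decomposition of the history law with Pinsker's inequality (for the change of measure) and a separate $O(T)\,d(\nu,\nu')$ term for the change in the gaps. This is quantitative and manifestly uniform in $\pi$, since the KL bound does not depend on the policy, which makes the ``choose the perturbation small enough'' step concrete; the paper's route avoids any change-of-measure computation but delivers only qualitative continuity. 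Either implementation completes the theorem, and your bookkeeping of the $\epsilon$ slack against the constant $\frac{1}{32e}$ is correct.
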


Theorem \ref{thm:lower-bound} shows that our MAB model incurs a regret lower bound $\Omega(\sqrt{T\max_{k}\{T_k\}})$.
Note that
$\sqrt{T\max_{k}\left\{T_k\right\}}\ge \frac{1}{\sqrt{K}}\sqrt{T\sum_{k=1}^{K}T_k}$.
Therefore, when $K$ does not scale with $T$ and is treated as a constant,
it matches the regret rate in Theorem~\ref{thm:upper_bound} and implies that our policy is near-optimal.

\section{Conclusion}
\label{sec:conclusion}

In this paper, we study a non-stationary MAB problem with periodic rewards.
The rewards of the arms may have different lengths of periods, which are unknown initially.
We design an algorithm that first identifies the lengths of the periods and then learns the best arm at each epoch.
The regret upper bound of the algorithm is $\tilde O(\sqrt{T\sum_{k=1}^K T_k})$, matching the lower bound
$\Omega(\sqrt{T\max_{k}\{T_k\}})$ up to a factor of $\sqrt{K}$.
Our study opens a wide range of interesting directions in online learning with non-stationary dynamics.

\begin{itemize}
\item The length of the period of an arm is always an integer in our setup.
However, the complex representation \eqref{eq:exp-rep} of the rewards is valid for any periodic functions.
It provides a more flexible framework as the period may not be a multiple of the sampling rate ($t=1,2,\dots$ in MAB problems) in practice.
We may need to construct confidence bounds for the parameters in the complex representation directly. This is left for future study.

\item
Our policy estimates periods and learns rewards separately in stages one and two respectively.
Ideally, an algorithm that integrates the learning of periods and rewards as well as exploitation may perform better in practice.
The main challenge of the integrated approach is the analysis of the Fourier transform, which doesn't seem to be tractable for dependent and nested observations,
and hence it inspires a separate stage one in our algorithm.
Nevertheless, our analysis shows that the separated design already nearly matches the regret lower bound, and hence the benefit of integration is not substantial.
Moreover, because the data collected in stage one is reused in stage two, the major concern of a separated design, i.e., the inefficient use of data, has been addressed.
We leave the design of an integrated algorithm for future research.

\end{itemize}

\bibliography{ref}
\clearpage
\appendix
\renewcommand{\theequation}{\thesection-\arabic{equation}}
\setcounter{equation}{0}

\section{Table of Notations}
\label{sec:appendix-notation}
\begin{table}[ht]
\singlespacing
\small
\begin{tabular}{ll}
\toprule
Notation & Definition\\
\midrule
\\[-2.5ex] \multicolumn{2}{c}{\textbf{Problem Formulation}} \\
$K,~\setk$ &  the number of arms, and the set $\{1,...,K\}$ \\
$T,~\sett$ & the decision horizon, and the set $\{1,...,T\}$ \\
$T_k$ & the period of arm $k$\\
$\hat T_k$ & the estimated period of arm $k$\\
$Y_{k,t}$ & the random reward of arm $k$ at epoch $t$ \\
$\mu_{k,t}$ & the mean reward of arm $k$ at epoch $t$\\
$\epsilon_{t}$ & the independent sub-Gaussian noise at epoch $t$ \\
\( \sigma \) & an upper bound on the sub-Gaussian norm of the noise $\epsilon_{t}$ for $t\in\Tscr$ \\
\( \pi_t \) & the arm chosen at epoch $t$ \\
$\pi_t^*$ & \(\argmax_{k \in \Kscr} \{\mu_{k,t}\}\), the optimal arm at epoch $t$\\
$R_T^\pi$ & $R_{T}^{\pi} \coloneqq \sum_{t=1}^T \left(\max_{k \in \setk}\mu_{k,t}-\mu_{\pi_t,t}\right)$, the pseudo-regret of policy \( \pi \) \\
\midrule
\\[-2.5ex] \multicolumn{2}{c}{\textbf{Algorithm~\ref{alg:frequency-identification} in Stage One}} \\
$n$ & the number of epochs that each arm is pulled during stage one \\
$\tilde{a}(v)$ & the discrete Fourier transform of the sequence $a_1$, \dots, $a_n$\\
\( \tau_k \) & the threshold applied to the periodogram of arm $k$ \\
$\bar \epsilon_v$ & a high-probability upper bound on the noise of the entire periodogram\\
$g$  & a parameter defining the width of the neighborhood \\
$A_j,~U_{1},~U_2$ & auxiliary quantities used to control the leakage of the periodogram \\
$\Lambda$ & the event that the lengths of periods of all arms are correctly estimated \\
$\textsc{LCM}$ & the least common multiple of a set of positive integers\\
\midrule
\\[-2.5ex] \multicolumn{2}{c}{\textbf{Algorithm~\ref{alg:learning} in Stage Two}} \\
$\hat{d}, ~ d$ &  $\hat{d} \coloneqq \sum_{k = 1}^{K}\hat{T}_k$, $d \coloneqq \sum_{k = 1}^{K}T_k$ \\
$S,~\sets$ & the number of screening rounds, and the set $\{1,...,S\}$ \\
$\bar\Psi$ & the index set of epochs in stage one, $\bar\Psi\coloneqq \{1,...,nK\}$ \\
$\Psi^{(s)}(t)$ & the index sets of trials made in round $s$ up to epoch $t$ \\
$\banditcount{k}{t}(\Psi)$ & the number of epochs within $\Psi$ that arm $k$ has been pulled at the same phase as $t$ \\
$\banditmean_{k,p}^{(s)}$ &  the estimated mean reward of arm $k$ at phase $p$ in round $s$ \\
$w_{k,p}^{(s)}$ & the width of the confidence interval of arm $k$ at phase $p$ in round $s$\\
$\Ascr_s$ & the set of active arms in round $s$ \\
$\Escr$ & the event that the estimated mean achieves a desired accuracy  \\
$\tpi,~\tilde{\cdot}$ & the oracle policy and the associated analogous quantities \\
\bottomrule
\end{tabular}
\vspace{3mm}
\caption{A summary table of notations used in the paper.}
\label{tab:TableOfNotationForMyResearch}
\end{table}

\clearpage

\section{Proofs of Main Results}
\label{sec:appendix-proof}

\subsection{Proofs in Section \ref{sec:phaseone-dft}}
\label{sec:appendix-proof-phaseone}
\begin{proof}[\textbf{Derivation of Equation \eqref{eq:dft-sinform}:}]
To simplify $\tilde{\mu}_{k,j}(v)$ defined in \eqref{eq:dft-deterministic}, we note the following
\begin{align}
\tilde{\mu}_{k,j}(v) ~=~ & \frac{1}{n} \sum_{t=1}^n b_{k,j}\exp\left(2\pi i \left( \frac{j}{T_k}-v \right)t\right)  \nonumber \\
=~ & \frac{b_{k,j}}{n} \exp\left(2\pi i \left( \frac{j}{T_k}-v \right)\right) \sum_{t=0}^{n-1}\exp\left(2\pi i \left( \frac{j}{T_k}-v \right)t\right) \nonumber \\
=~ & \frac{b_{k,j}}{n} \exp\left(2\pi i \left( \frac{j}{T_k}-v \right)\right)\frac{1-\exp\left(2\pi i \left( \frac{j}{T_k}-v \right)n\right)}{1-\exp\left(2\pi i \left( \frac{j}{T_k}-v \right)\right)}. \label{eq:appendix-eq6proof-1}
\end{align}
The numerator of the last term in \eqref{eq:appendix-eq6proof-1} can be rewritten as
\begin{align}
& 1-\exp\left(2\pi i \left( \frac{j}{T_k}-v \right)n\right) \nonumber \\
=~& \exp\left(\pi i \left( \frac{j}{T_k}-v \right)n\right) \left(\exp\left(-\pi i \left( \frac{j}{T_k}-v \right)n\right) -\exp\left(\pi i \left( \frac{j}{T_k}-v \right)n\right) \right) \nonumber \\
=~& - 2i\exp\left(\pi i \left( \frac{j}{T_k}-v \right)n\right) \sin\left( \pi  \left( \frac{j}{T_k}-v \right)  n\right). \nonumber
\end{align}
Similarly, the denominator of the last term in \eqref{eq:appendix-eq6proof-1} can be rewritten as
\begin{align}
1-\exp\left(2\pi i \left( \frac{j}{T_k}-v \right)\right) = - 2i\exp\left(\pi i \left( \frac{j}{T_k}-v \right)\right) \sin\left( \pi  \left( \frac{j}{T_k}-v \right) \right). \nonumber
\end{align}

Therefore, we obtain Equation \eqref{eq:dft-sinform} by simplifying \eqref{eq:appendix-eq6proof-1}
\begin{align}
\tilde{\mu}_{k,j}(v) ~=~ & \frac{b_{k,j}}{n} \exp\left(2\pi i \left( \frac{j}{T_k}-v \right)\right)  \frac{\exp\left(\pi i \left( \frac{j}{T_k}-v \right)n\right) \cdotp  \sin\left( \pi  \left( \frac{j}{T_k}-v \right)  n\right)}{ \exp\left(\pi i \left( \frac{j}{T_k}-v \right)\right) \cdotp \sin\left( \pi  \left( \frac{j}{T_k}-v \right)  \right)} \nonumber \\
~=~ & \frac{b_{k,j}}{n} \exp\left(2\pi i \left( \frac{j}{T_k}-v \right)  \frac{n+1}{2}  \right)  \frac{  \sin\left( \pi  \left( \frac{j}{T_k}-v \right)  n\right)}{ \sin\left( \pi  \left( \frac{j}{T_k}-v \right)  \right)},\nonumber
\end{align}
and we can immediately get $\displaystyle{\lim_{n\to\infty} |\tilde{\mu}_{k,j}(v)| = 0}$.
\end{proof}

\subsection{Proofs in Section~\ref{sec:analysis-phaseone}}
\label{sec:appendix-proof-analysis-phaseone}
The main techniques used in the following proofs are concentration inequalities for random variables. We start with a technical lemma (Theorem 1 in \citet{nagy2012}) which will be used in the proof of Lemma \ref{lem:noise-bound}.
\begin{lemma} \label{lem:berstein_generalized}
Suppose $\omega \in (0,\pi]$ and $p_n$ is a polynomial of complex number with degree no more than $n$, then the following holds for all $\theta \in (-\omega,+\omega)$,
\begin{align}
\left|p_n^\prime(e^{i\theta})\right| \le \frac{n}{2}\left(1+\frac{\sqrt{2}\cos(\theta/2)}{\sqrt{ \cos(\theta)-\cos(\omega) }}\right)  \sup_{\phi \in [-\omega, \omega]} \left|p_n(e^{i\phi})\right|. \nonumber
\end{align}
In particular, when $\omega=\pi$, then ${\sup_{|z| \le 1} |p_n^\prime(z)| \le n\sup_{|z| \le 1} |p_n(z)|}$.
\end{lemma}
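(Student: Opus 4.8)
The plan is to prove this Bernstein--Videnskii-type inequality in two conceptually separate pieces that match the two summands $\tfrac{n}{2}\cdot 1$ and $\tfrac{n}{2}\cdot\frac{\sqrt2\cos(\theta/2)}{\sqrt{\cos\theta-\cos\omega}}$ appearing in the bound. First I would pass from the complex derivative to a real angular derivative: setting $h(\theta)=p_n(e^{i\theta})$, the chain rule gives $h'(\theta)=i e^{i\theta}p_n'(e^{i\theta})$, so $|p_n'(e^{i\theta})|=|h'(\theta)|$, and likewise $\sup_{\phi\in[-\omega,\omega]}|p_n(e^{i\phi})|=\sup_{\phi\in[-\omega,\omega]}|h(\phi)|$. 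Thus it suffices to bound the angular derivative of the analytic trigonometric polynomial $h(\theta)=\sum_{k=0}^n a_k e^{ik\theta}$ on the arc $(-\omega,\omega)$.

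The key device is a \emph{frequency-balancing} step. Writing $u(\theta)=e^{-in\theta/2}h(\theta)=\sum_{k=0}^n a_k e^{i(k-n/2)\theta}$ turns $h$ into a trigonometric polynomial whose frequencies are symmetric about $0$ and lie in $[-n/2,n/2]$, i.e. a (possibly half-integer) trigonometric polynomial of exponential type $n/2$. Since $|e^{-in\theta/2}|=1$ we have $|u|=|h|$ pointwise, and differentiating the identity $h=e^{in\theta/2}u$ gives $h'(\theta)=\tfrac{in}{2}e^{in\theta/2}u(\theta)+e^{in\theta/2}u'(\theta)$. The triangle inequality then yields $|h'(\theta)|\le \tfrac{n}{2}|u(\theta)|+|u'(\theta)|$, and bounding $|u(\theta)|$ by $\sup_{\phi}|u(\phi)|=\sup_{\phi}|h(\phi)|$ already isolates the additive ``$\tfrac{n}{2}\cdot 1$'' term.

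It then remains to bound $|u'(\theta)|$ for the balanced polynomial $u$ of type $n/2$, which is precisely the \emph{Videnskii inequality} on the arc: $|u'(\theta)|\le \tfrac{n}{2}\,\frac{\sqrt2\cos(\theta/2)}{\sqrt{\cos\theta-\cos\omega}}\,\sup_{\phi\in[-\omega,\omega]}|u(\phi)|$. Combining this with the balancing bound reproduces the stated factor $\tfrac{n}{2}\bigl(1+\frac{\sqrt2\cos(\theta/2)}{\sqrt{\cos\theta-\cos\omega}}\bigr)$. The ``in particular'' case $\omega=\pi$ then drops out for free: there $\cos\theta-\cos\pi=1+\cos\theta=2\cos^2(\theta/2)$, so the Videnskii factor collapses to $\frac{\sqrt2\cos(\theta/2)}{\sqrt2\,\cos(\theta/2)}=1$ on $(-\pi,\pi)$, the bound becomes $|h'(\theta)|\le n\sup|h|$, and the maximum modulus principle upgrades the supremum over the unit circle to the supremum over the closed disk.

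The hard part is establishing the Videnskii estimate for $u'$. I would prove it through the conformal substitution $\sin(\theta/2)=\sin(\omega/2)\sin(\psi/2)$, which maps $\theta\in(-\omega,\omega)$ onto $\psi\in(-\pi,\pi)$ and whose Jacobian is, using $\sin^2(\omega/2)-\sin^2(\theta/2)=\tfrac12(\cos\theta-\cos\omega)$, exactly $\frac{d\psi}{d\theta}=\frac{\sqrt2\cos(\theta/2)}{\sqrt{\cos\theta-\cos\omega}}$ -- the very weight appearing in the claim. Under this map $\cos\theta$ becomes an affine function of $\cos\psi$, so the even part of $u$ pulls back to a genuine trigonometric polynomial of degree $\le n/2$ in $\psi$, to which the classical Bernstein inequality applies directly. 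The genuine obstacle is the \emph{odd part} of $u$: since $\sin\theta$ acquires a factor $\sqrt{1-\sin^2(\omega/2)\sin^2(\psi/2)}$ under the substitution, it does not pull back to a polynomial, and this non-polynomial weight must be controlled. This is the technical heart of the argument; I would handle it by the standard Videnskii machinery -- reducing to algebraic polynomials on the interval $[\cos\omega,1]$ and invoking the Bernstein--Markov inequality with the Chebyshev weight -- after which the even and odd contributions reassemble into the claimed estimate.
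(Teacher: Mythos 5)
First, a point of comparison: the paper does not prove this lemma at all --- it is invoked verbatim as Theorem~1 of \citet{nagy2012}, so your proposal attempts strictly more than the paper does. Your outer reduction is correct and cleanly executed: passing to $h(\theta)=p_n(e^{i\theta})$, balancing via $u(\theta)=e^{-in\theta/2}h(\theta)$, the identity $h'(\theta)=\tfrac{in}{2}e^{in\theta/2}u(\theta)+e^{in\theta/2}u'(\theta)$ with the triangle inequality isolating the additive $\tfrac{n}{2}$ term, the Jacobian computation $\tfrac{d\psi}{d\theta}=\tfrac{\sqrt{2}\cos(\theta/2)}{\sqrt{\cos\theta-\cos\omega}}$ for the substitution $\sin(\theta/2)=\sin(\omega/2)\sin(\psi/2)$, and the $\omega=\pi$ specialization via the maximum modulus principle are all sound.

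The genuine gap is that the entire analytic content of the lemma --- the Videnskii estimate $|u'(\theta)|\le \tfrac{n}{2}\cdot\tfrac{\sqrt{2}\cos(\theta/2)}{\sqrt{\cos\theta-\cos\omega}}\sup_{[-\omega,\omega]}|u|$ for the balanced function $u$ --- is never established but delegated to ``standard Videnskii machinery,'' and the sketch you give of that machinery runs into concrete obstructions. (a) $u$ is complex-valued while the classical Videnskii inequality is stated for real trigonometric polynomials; this is fixable by the usual rotation trick (fix $\theta_0$, choose $|c|=1$ with $cu'(\theta_0)=|u'(\theta_0)|$, apply the real inequality to $\mathrm{Re}(cu)$), but you do not address it. (b) More seriously, when $n$ is odd the frequencies $k-n/2$ of $u$ are half-integers, so $u$ is not a trigonometric polynomial at all (it is $2\pi$-\emph{anti}periodic) and classical Videnskii does not apply; the obvious repair of rescaling $\theta=2\phi$ produces a degree-$n$ polynomial on the arc $[-\omega/2,\omega/2]$ and yields the weight $\cos(\theta/4)/\sqrt{\sin^{2}(\omega/4)-\sin^{2}(\theta/4)}$, which is strictly larger than the required $\cos(\theta/2)/\sqrt{\sin^{2}(\omega/2)-\sin^{2}(\theta/2)}$ (by a factor approaching $2$ on short arcs), so the stated constant is lost. (c) Even for integer frequencies, your even/odd decomposition would bound $|u'|\le|u_{\mathrm{even}}'|+|u_{\mathrm{odd}}'|$ and apply the weighted bound to each part separately, which again doubles the constant; and for odd $n$ the decomposition into polynomials in $\cos\theta$ does not exist. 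As written, then, the proposal is a correct reduction plus an unproved citation of essentially the full strength of the result. To close it honestly you should either quote a version of Videnskii's inequality that genuinely covers complex polynomials of half-integer order with this exact constant --- which puts you at parity with the paper's one-line citation of \citet{nagy2012} --- or supply the odd-part/half-integer argument rather than gesture at it.
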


\begin{proof}[\textbf{Proof of Lemma \ref{lem:noise-bound}}:]
The DFT of the noise can be expressed as
\begin{align}
\tilde{\epsilon}_k(v)=\frac{1}{n}\sum_{t=1}^{n}\epsilon_t\exp\left( -2\pi i v t\right) = \frac{1}{n}\sum_{t=1}^{n}\epsilon_t\cos\left( 2\pi v t\right)-\frac{i}{n}\sum_{t=1}^{n}\epsilon_t\sin\left( 2\pi v t\right). \nonumber
\end{align}
For any $\delta>0$, by the union bound, we have
\begin{align}\label{eq:sin-cos-decomp}
\PR\left( |\tilde{\epsilon}_k(v)| > \delta \right) & ~=~ \PR\left( |\tilde{\epsilon}_k(v)|^2 > \delta^2 \right)\notag\\
& ~\leq~  \PR\left( \left|  \frac{1}{n}\sum_{t=1}^{n}\epsilon_t\cos\left( 2\pi v t\right) \right|^2> \frac12 \delta^2 \right) + \PR\left( \left|  \frac{1}{n}\sum_{t=1}^{n}\epsilon_t\sin\left( 2\pi v t\right) \right|^2> \frac12 \delta^2 \right).
\end{align}
We analyze the first term of \eqref{eq:sin-cos-decomp} in the following and the second term can be done analogously. For a given $v$, $\sum_{t=1}^{n}\epsilon_t\cos\left( 2\pi v t\right)$ is the sum of independent sub-Gaussian random variables with parameter $\sigma^2$, and thus $\sum_{t=1}^{n}\epsilon_t\cos\left( 2\pi v t\right)$ is a sub-Gaussian random variable with parameter $\sum_{t=1}^{n}\sigma^2\cos^2\left( 2\pi v t\right)$. Hence, the property of sub-Gaussian random variable in Assumption \ref{asp:sub-Gaussian} gives
\begin{align}
\PR\left( \left|  \frac{1}{n}\sum_{t=1}^{n}\epsilon_t\cos\left( 2\pi v t\right) \right|^2> \frac12 \delta^2 \right) \le 2\exp\left(  -\frac{n^2\delta^2}{4\sigma^2 \sum_{t=1}^n\cos^2\left( 2\pi v t\right)}  \right) \leq 2\exp\left( -\frac{n\delta^2}{4\sigma^2 }  \right). \nonumber
\end{align}
By combining the bounds for the two terms in \eqref{eq:sin-cos-decomp}, for a given $v \in [0,1/2]$, we have
\begin{align}\label{eq:bound-noise-v}
\PR\left( |\tilde{\epsilon}_k(v)| > \delta \right) \leq 4\exp\left( -\frac{n\delta^2}{4\sigma^2 } \right).
\end{align}

Note that $\tilde{\epsilon}_k(v)$ can be viewed as an $n$-degree polynomial of $\exp(-2\pi i v)$, and let $p_n$ denote this polynomial. For any $v_1, v_2\in[0,1/2]$, we have
\begin{align}
\abs{\tilde{\epsilon}_k(v_1)-\tilde{\epsilon}_k(v_2)} &~=~ \abs{p_n(\exp(-2\pi i v_1))-p_n(\exp(-2\pi i v_2))} \nonumber\\
&~=~ \left|\int_{\exp(-2\pi i v_2)}^{\exp(-2\pi i v_1)} p^\prime_n(z) dz\right| \nonumber\\
& ~\le~ \abs{\exp\left(-2\pi i v_1\right)-\exp(-2\pi i v_2)} \sup_{\abs{z}\leq1}\abs{p_n^\prime(z)}  \nonumber\\
& ~\le~ \abs{\exp\left(-2\pi i v_1\right)-\exp(-2\pi i v_2)}n \sup_{\abs{z}\leq1}\abs{p_n(z)}, \nonumber
\end{align}
where the last step follows from Lemma~\ref{lem:berstein_generalized}.
The maximum modulus principle of analytic functions implies that $\sup_{\abs{z}\leq1}\abs{p_n(z)}$ must be attained at the boundary $\{z:\abs{z}=1\}$ as $\sup_{\abs{z}\leq1}\abs{p_n(z)}=\sup_{\abs{z}=1}\abs{p_n(z)}=\sup_{v\in[0,1]}\abs{\tilde{\epsilon}_k(v)}$. Moreover, since $|\tilde{\epsilon}_k(v)| = |\tilde{\epsilon}_k(1-v)|$, we have
\begin{align}
\abs{\tilde{\epsilon}_k(v_1)-\tilde{\epsilon}_k(v_2)} & ~\leq~ \abs{\exp\left(-2\pi i v_1\right)-\exp(-2\pi i v_2)} n \NB \nonumber \\
& ~\leq~ 2\abs{\sin(\pi (v_1-v_2))}n \NB \nonumber \\
& ~\leq~ 2n\pi \abs{v_1-v_2} \NB .\label{eq:derivative-noise}
\end{align}

Suppose that we divide the frequency domain $[0,1/2]$ into $L\in \N^+$ equal intervals, and we let $v_{l}^{\text{mid}}$ denote the middle point of the interval $[\frac{l-1}{2L}, \frac{l}{2L}]$ for $l=1,...,L$. We also suppose that $\NB$ is attained at $v_{\epsilon}^*$ which locates in the interval $[\frac{l^*-1}{2L}, \frac{l^*}{2L}]$, and thus we have $\left|v_{\epsilon}^*-v_{l^*}^{\text{mid}}\right| < \frac{1}{4L}$. By applying the inequality \eqref{eq:derivative-noise}, we find the following
\begin{align}
& |\tilde{\epsilon}_k(v_{\epsilon}^*)| - |\tilde{\epsilon}_k(v_{l^*}^{\text{mid}})| \le \left| \tilde{\epsilon}_k(v_{\epsilon}^*-v_{l^*}^{\text{mid}}) \right| \le \frac{n\pi}{2L} \NB \nonumber \\
\implies & \NB = |\tilde{\epsilon}_k(v_{\epsilon}^*)| \le \max_{l=1,...,L}|\tilde{\epsilon}_k(v_{l}^{\text{mid}})| + \frac{n\pi}{2L} \NB \nonumber \\
\implies & \NB \leq \left(1-\frac{n\pi}{2L}\right)^{-1} \max_{l=1,...,L}|\tilde{\epsilon}_k(v_{l}^{\text{mid}})|.  \nonumber
\end{align}
Then, we apply the result from \eqref{eq:bound-noise-v} and the union bound to obtain
\begin{align}
\PR\left(\NB \ge \delta \right) &\leq \PR\left(\max_{l=1,...,L}|\tilde{\epsilon}_k(v_{l}^{\text{mid}})| \ge \left( 1-\frac{n\pi}{2L} \right)\delta \right) \nonumber \\
&\leq 4L \exp\left(- \frac{n\delta^2}{4\sigma^2}\left( 1-\frac{\pi n}{2L} \right)^2 \right) \label{eq:bound-noise-L}
\end{align}

The lemma is proved by choosing $L=12n$ as following
\begin{align}
\PR\left(   \sup_{v\in[0,1/2]} |\tilde{\epsilon}_k(v)|  \geq \delta \right) ~\leq~ 48n \exp\left(- \frac{n\delta^2}{4\sigma^2}\left( 1-\frac{\pi }{24} \right)^2 \right). \nonumber
\end{align}
Specifically, if we choose the upper bound $\delta$ as $\bar\epsilon_v = \frac{2\sigma H}{1-\pi/24}\sqrt{\frac{\log(n)}{n}}$ defined in Step \ref{step:threshold} of Algorithm \ref{alg:frequency-identification}, then we have ${\PR \bigg(\NB  \geq \bar\epsilon_v  \bigg) \leq \frac{48}{n^{H^2-1}}}$.
\end{proof}

\begin{proof}[\textbf{Proof of Lemma \ref{lem:bound-leakage}}:]
Recall that we defined $\Vscr_k = [0,1/2] \cap \{\cup_{j:|b_{k,j}|>0}[\frac{j}{T_k}-\frac{g}{n}, \frac{j}{T_k}+\frac{g}{n}]\}$ and $\overline{\Vscr}_k = [0,1/2]\setminus \Vscr_k$, where $\Vscr_k$ represents the union of neighborhoods of all present frequencies.
To prove Lemma \ref{lem:bound-leakage}, it is more natural to start from investigating the leakage in sets $\Uscr_k \coloneqq [0,1/2] \cap \{\cup_{j:}[\frac{j}{T_k}-\frac{g}{n}, \frac{j}{T_k}+\frac{g}{n}]\}$ and $\overline{\Uscr}_k = [0,1/2]\setminus \Uscr_k$. Note that $\Uscr_k$ differs from $\Vscr_k$ by removing the requirement $|b_{k,j}|>0$, so $\Uscr_k$ represents the union of neighborhoods around \emph{all} possible locations of present frequencies and $\Vscr_k$ is a subset of $\Uscr_k$, i.e., $\Vscr_k \subseteq \Uscr_k$.

Given the expression of $\tilde{\mu}_{k,j}(v)$ in \eqref{eq:dft-sinform} and the definition of $B_{k}$ in \eqref{eq:def-Bk}, we can show
\begin{align}\label{eq:Rtheta-rep}
\abs{\tilde{\mu}_{k,j}(v)} \le |b_{k,j}| \left| \frac{\sin\left(\pi (j/T_k-v)n\right)}{n\sin\left(\pi(j/T_k-v)\right)}\right| \le B_k \left| \frac{\sin\left(\pi (j/T_k-v)n\right)}{n\sin\left(\pi(j/T_k-v)\right)}\right|.
\end{align}
Define a function $\displaystyle{R(\theta) \coloneqq \left|\frac{\sin\left(\pi n\theta\right)}{n\sin\left(\pi\theta\right)}\right|}$ and then we have \eqref{eq:reduce-to-R}. Note that $R(\theta)$ will play a critical role in this proof.
\begin{equation}\label{eq:reduce-to-R}
|\tilde{\mu}_k(v)| \le \sum_{j=0}^{T_k-1}|\tilde{\mu}_{k,j}(v)|\le  B_k \sum_{j=0}^{T_k-1}R(j/T_k-v).
\end{equation}

We first examine the leakage for $v\in\bar\Uscr_k$. Suppose that $v \in [\frac{j'}{T_k}+\frac{g}{n}, \frac{j'+1}{T_k}-\frac{g}{n}]$ for some $j'\in\left\{0,\dots,\lfloor \frac{T_k-1}{2}\rfloor\right\}$. We decompose the sum $\sum_{j=0}^{T_k-1}R(j/T_k-v)$ in \eqref{eq:reduce-to-R} into three terms as following, and we bound each term using properties $R(\theta)=R(-\theta)$ and $R(\theta)=R(1-\theta)$.
\begin{align}
\sum_{j=0}^{j'}R(j/T_k-v) = \sum_{j=0}^{j'}R(v-j/T_k) \le \sum_{j=0}^{j'}\sup\left\{ R(\theta):\theta\in\left[\frac{j}{T_k}+\frac{g}{n}, \frac{j+1}{T_k}-\frac{g}{n}\right] \right\}. \label{eq:leakage-bound-p1}
\end{align}
\begin{align}
\sum_{j=j'+1}^{j'+\lfloor\frac{T_k-1}{2}\rfloor+1}R(j/T_k-v) \le \sum_{j=0}^{\lfloor\frac{T_k-1}{2}\rfloor}\sup\left\{ R(\theta):\theta\in\left[\frac{j}{T_k}+\frac{g}{n}, \frac{j+1}{T_k}-\frac{g}{n}\right] \right\}. \label{eq:leakage-bound-p2}
\end{align}
\begin{align}
\sum_{j=j'+\lfloor\frac{T_k-1}{2}\rfloor+2}^{T_k-1}R(j/T_k-v) &~\le~ \sum_{\lfloor\frac{T_k-1}{2}\rfloor+1}^{T_k-j'-2}\sup\left\{ R(\theta):\theta\in\left[\frac{j}{T_k}+\frac{g}{n}, \frac{j+1}{T_k}-\frac{g}{n}\right] \right\} \nonumber \\
&~=~ \sum_{\lfloor\frac{T_k-1}{2}\rfloor+1}^{T_k-j'-2}\sup\left\{ R(1-\theta):\theta\in\left[\frac{j}{T_k}+\frac{g}{n}, \frac{j+1}{T_k}-\frac{g}{n}\right] \right\} \nonumber \\
&~=~ \sum_{j=j'+1}^{T_k-\lfloor\frac{T_k-1}{2}\rfloor-2} \sup\left\{ R(\theta):\theta\in\left[\frac{j}{T_k}+\frac{g}{n}, \frac{j+1}{T_k}-\frac{g}{n}\right] \right\}. \label{eq:leakage-bound-p3}
\end{align}
Combining \eqref{eq:leakage-bound-p1}, \eqref{eq:leakage-bound-p2} and \eqref{eq:leakage-bound-p3} together, we can further develop \eqref{eq:reduce-to-R} as
\begin{align}
|\tilde{\mu}_k(v)| \le B_k \sum_{j=0}^{T_k-1}R(j/T_k-v) \le 2B_k \sum_{j=0}^{\lfloor\frac{T_k-1}{2}\rfloor}\sup\left\{ R(\theta):\theta\in\left[\frac{j}{T_k}+\frac{g}{n}, \frac{j+1}{T_k}-\frac{g}{n}\right] \right\}. \label{eq:reduce-to-R-version1}
\end{align}

To analyze the expression \eqref{eq:reduce-to-R-version1}, we need to examine $R(\theta)$ closely. Note that $R(\theta)$ is bounded. Furthermore, the numerator $|\sin\left(\pi n\theta\right)|$ has a period $\frac{1}{n}$, and the denominator $|n\sin(\pi\theta)|$ is monotonically  increasing for $\theta \in [0, 1/2]$.
Therefore, we can make the following two remarks. (i) Given $0 \le \theta_1 < \theta_2$ and $\theta_2 + \frac{1}{n} \le \frac{1}{2}$, then $\sup\left\{R(\theta): \theta \in [\theta_1, \theta_1+\frac{1}{n}]\right\} > \sup\left\{R(\theta): \theta \in [\theta_2, \theta_2+\frac{1}{n}]\right\}$, i.e, for two intervals of the same width $\frac{1}{n}$ in the domain $[0,1/2]$, the maximum of $R(\theta)$ in the left interval is larger than that in the right interval. (ii) Given $0 \le \theta_1 < \theta_2 \le \frac{1}{2}$, then $\sup\left\{R(\theta): \theta \in [\theta_1, \theta_1+\frac{1}{n}]\right\} \ge \sup\left\{R(\theta): \theta \in [\theta_1, \theta_2]\right\}$.
We continue the analysis by applying these two properties of $R(\theta)$.

When $T_k$ is even, for $j = 0,...,\frac{T_k}{2}-1$, we have
\begin{align}
\label{eq:rtheta-bound-1}
\begin{split}
\sup\left\{ R(\theta):\theta\in\left[\frac{j}{T_k}+\frac{g}{n}, \frac{j+1}{T_k}-\frac{g}{n}\right] \right\} &~\leq~ \sup\left\{ R(\theta):\theta\in\left[\frac{j}{T_k}+\frac{g}{n}, \frac{j}{T_k}+\frac{g+1}n\right] \right\} \\
&~\leq~ \sup\left\{ R(\theta):\theta\in \left[\frac{(2j+1)g}{n}, \frac{(2j+1)g+1}{n}\right] \right\}
\end{split}
\end{align}
where the second step follows from that $\frac{j}{T_k}+\frac{g}{n} \ge \frac{(2j+1)g}{n}$ since $T_k < \frac{n}{2g}$ given by Assumption \ref{asp:period}. We also have $\frac{T_k}{2}-1 \le \lfloor \frac{n-1}{4g}-1\rfloor = \lfloor\frac{n-4g-1}{4g}\rfloor$.

When $T_k$ is odd, inequality \eqref{eq:rtheta-bound-1} holds directly for $j=0,...,\frac{T_k-1}{2}-1$. For $j=\frac{T_k-1}{2}$, although the corresponding interval $\left[\frac{T_k-1}{2T_k}+\frac{g}{n}, \frac{T_k+1}{2T_k}-\frac{g}{n}\right]$ is not fully contained in $[0,1/2]$, inequality \eqref{eq:rtheta-bound-1} still holds since $R(\theta)$ is symmetric around $\theta=1/2$. We also have  $\frac{T_k-1}{2} \le \lfloor\frac{1}{2} \big(\frac{n-1}{2g} -1\big)\rfloor=\lfloor\frac{n-2g-1}{4g}\rfloor$.

Combing these two cases, we can further derive \eqref{eq:reduce-to-R-version1} as
\begin{align}
|\tilde{\mu}_k(v)| \le 2B_k\sum_{j=0}^{\lfloor\frac{(n-2g-1)}{4g}\rfloor}\sup\left\{ R(\theta):\theta\in\left[\frac{(2j+1)g}{n}, \frac{(2j+1)g+1}{n}\right] \right\}. \nonumber
\end{align}
Recall that we defined ${A_j = \sup \left\{ \frac{|\sin(\pi \nu)|}{\pi \nu}: \nu \in [j,j+1] \right\} }$ and  ${U_1 = \sum_{j=0}^{\lfloor{\frac{n-2g -1}{4g }\rfloor}} A_{(2j+1)g}}$, and note the fact that $\sin(x)\ge 2x/\pi$ for $x\in [0,\pi/2]$. Then, we can bound the leakage for $v\in\bar\Uscr_k$ as
\begin{align}
|\tilde{\mu}_k(v)| &\le \pi B_k\sum_{j=0}^{\lfloor\frac{(n-2g-1)}{4g}\rfloor}\sup\left\{  \frac{|\sin(\pi n\theta)|}{\pi n\theta}:\theta\in\left[\frac{(2j+1)g}{n}, \frac{(2j+1)g+1}{n}\right] \right\} \nonumber \\
& = \pi B_k\sum_{j=0}^{\lfloor{\frac{n-2g-1}{4g}\rfloor}} A_{(2j+1)g} = \pi B_k U_1. \nonumber
\end{align}

The analysis of the leakage in $\Uscr_k$ can be conducted in an analogous way as above. In addition to further exploiting the properties of $R(\theta)$, we also utilizing the symmetry of the neighborhood to achieve a finer result.
Suppose that $v \in [\frac{j'}{T_k}-\frac{g}{n}, \frac{j'}{T_k}+\frac{g}{n}]$ for some $j'\in\left\{0,\dots,\lfloor \frac{T_k}{2}\rfloor\right\}$. Let $v' = v - \frac{j'}{T_k}$ and then $v' \in [-\frac{g}{n}, \frac{g}{n}]$. We also assume that $v'$ falls in the interval $[\frac{j_0}{n}, \frac{j_0+1}{n}]$ for some $j_0 \in \{-g,...,g-1\}$. The leakage in the neighborhood of $j'/T_k$ is contributed by frequency components $j/T_k$ with $j\neq j'$. Hence, we decompose $\sum_{j \neq j'}R(j/T_k-v)$ into three terms as following,
\begin{align}
\sum_{j=0}^{j'-1}R(j/T_k-v) = \sum_{j=0}^{j'-1}R(v-j/T_k) \le \sum_{j=1}^{j'} \sup\left\{ R(\theta):\theta\in\left[\frac{j}{T_k}+\frac{j_0}{n}, \frac{j}{T_k}+\frac{j_0+1}{n}\right] \right\}. \label{eq:leakage-bound-p4}
\end{align}
\begin{align}
\sum_{j=j'+1}^{j'+\lfloor\frac{T_k}{2}\rfloor}R(j/T_k-v) ~\le~& \sum_{j=1}^{\lfloor\frac{T_k}{2}\rfloor} \sup\left\{ R(\theta):\theta\in\left[\frac{j}{T_k}-\frac{j_0+1}{n}, \frac{j}{T_k}-\frac{j_0}{n}\right] \right\} \nonumber  \\
~\le~& \sum_{j=1}^{\lfloor\frac{T_k}{2}\rfloor} \sup\left\{ R(\theta):\theta\in\left[\frac{2jg-j_0-1}{n}, \frac{2jg-j_0}{n}\right] \right\} \le \frac{\pi}{2} \sum_{j=1}^{\lfloor \frac{n-1}{4g} \rfloor}A_{2jg-j_0-1}. \label{eq:leakage-bound-p5}
\end{align}
\begin{align}
\sum_{j=j'+\lfloor\frac{T_k}{2}\rfloor+1}^{T_k-1}R(j/T_k-v) &~\le~ \sum_{\lfloor\frac{T_k}{2}\rfloor+1}^{T_k-j'-1} \sup\left\{ R(\theta):\theta\in\left[\frac{j}{T_k}-\frac{j_0+1}{n}, \frac{j}{T_k}-\frac{j_0}{n}\right] \right\} \nonumber \\
&~=~ \sum_{\lfloor\frac{T_k}{2}\rfloor+1}^{T_k-j'-1} \sup\left\{ R(1-\theta):\theta\in\left[\frac{j}{T_k}-\frac{j_0+1}{n}, \frac{j}{T_k}-\frac{j_0}{n}\right] \right\} \nonumber \\
&~=~ \sum_{j=j'+1}^{T_k-\lfloor\frac{T_k}{2}\rfloor-1} \sup\left\{ R(\theta):\theta\in\left[\frac{j}{T_k}+\frac{j_0}{n}, \frac{j}{T_k}+\frac{j_0+1}{n}\right] \right\}. \label{eq:leakage-bound-p6}
\end{align}
Combining \eqref{eq:leakage-bound-p4} and \eqref{eq:leakage-bound-p6} together, we have
\begin{align}
& \sum_{j=0}^{j'-1}R(j/T_k-v) + \sum_{j=j'+\lfloor\frac{T_k}{2}\rfloor+1}^{T_k-1}R(j/T_k-v) \nonumber \\
~\le~& \sum_{j=1}^{\lfloor\frac{T_k}{2}\rfloor} \sup\left\{ R(\theta):\theta\in\left[\frac{j}{T_k}+\frac{j_0}{n}, \frac{j}{T_k}+\frac{j_0+1}{n}\right] \right\} \nonumber \\
~\le~& \sum_{j=1}^{\lfloor\frac{T_k}{2}\rfloor} \sup\left\{ R(\theta):\theta\in\left[\frac{2jg+j_0}{n}, \frac{2jg+j_0+1}{n}\right] \right\} \le \frac{\pi}{2} \sum_{j=1}^{\lfloor \frac{n-1}{4g} \rfloor}A_{2jg+j_0}. \label{eq:leakage-bound-p4andp6}
\end{align}

Note that $A_j$ is monotonically decreasing in $j$ such that  $\frac{1}{(j+\frac{1}{2})\pi} \le A_j \le \frac{1}{j\pi}$.
Moreover, $A_{j-1}+A_{j+1} \ge 2A_j$ for all $j \ge 2$. Recall that we defined ${U_2 = \sum_{j=1}^{\lfloor{\frac{n-1}{4g }\rfloor}} A_{2jg-1}}$.
We merge \eqref{eq:leakage-bound-p5} and \eqref{eq:leakage-bound-p4andp6} together, apply the monotonicity and the convexity of $A_j$, and then we obtain
\begin{align}
\sum_{j \neq j'}R(j/T_k-v) \le \frac{\pi}{2} \sum_{j=1}^{\lfloor \frac{n-1}{4g} \rfloor}A_{2jg-j_0-1} + \frac{\pi}{2} \sum_{j=1}^{\lfloor \frac{n-1}{4g} \rfloor}A_{2jg+j_0} \le \pi \sum_{j=1}^{\lfloor \frac{n-1}{4g} \rfloor}A_{2jg-1} = \pi U_2, \nonumber
\end{align}
which leads to  $|\sum_{j\not=j'}\tilde{\mu}_{k,j}(v)| \le \sum_{j\not=j'}|\tilde{\mu}_{k,j}(v)| \le B_k\sum_{j \neq j'} R(v-j/T_k) \le \pi B_k U_2$.

We show that $U_1 \ge U_2$ by revisiting their definitions as following
\begin{align*}
U_1 &= \sum_{j=0}^{\lfloor{\frac{n-2g-1}{4g}\rfloor}} A_{(2j+1)g} \ge \sum_{j=0}^{\lfloor{\frac{n-4g-1}{4g}\rfloor}} A_{(2j+1)g}  = \sum_{j=1}^{\lfloor{\frac{n-1}{4g}\rfloor}} A_{(2j-1)g} \ge \sum_{j=1}^{\lfloor{\frac{n-1}{4g}\rfloor}} A_{2jg-1} = U_2.
\end{align*}
The last inequality is due to $(2j-1)g<2jg-1$ given $g \ge 2$ in Assumption \ref{asp:period}.

So far, we have proven
\begin{align}
\label{eq:bound-leakage-Uk}
\sup_{v\in\bar\Uscr_k}\left|\sum_{j=0}^{T_{k}-1}\tilde{\mu}_{k,j}(v)\right|  \le \pi B_k U_1 \text{ and }
\sup_{v\in{\Uscr}_k}\left|\sum_{j:|\frac{j}{T_k}-v|\geq \frac{g}{n}}\tilde{\mu}_{k,j}(v)\right|  \le \pi B_k U_2 \le \pi B_k U_1.
\end{align}
Since $\Vscr_k\subseteq\Uscr_k$, we immediately have $\sup_{v\in{\Vscr}_k}\left|\sum_{j:|\frac{j}{T_k}-v|\geq \frac{g(n)}{n}}\tilde{\mu}_{k,j}(v)\right|  \le \pi B_k U_2$.
For $v\in\bar\Vscr_k$, we can bound the leakage in $\bar\Vscr_k \bigcap \bar\Uscr_k$ and $\bar\Vscr_k \bigcap \Uscr_k$ separately, i.e.,  $\sup_{v\in \bar\Vscr_k \bigcap \bar\Uscr_k}\left|\sum_{j=0}^{T_{k}-1}\tilde{\mu}_{k,j}(v)\right|  \le \pi B_k U_1$ and $\sup_{v\in \bar\Vscr_k \bigcap  \Uscr_k}\left|\sum_{j:|\frac{j}{T_k}-v|\geq \frac{g}{n}}\tilde{\mu}_{k,j}(v)\right|  \le \pi B_k U_2 \le \pi B_k U_1$. Therefore, $\sup_{v\in\bar\Vscr_k}\left|\sum_{j=0}^{T_{k}-1}\tilde{\mu}_{k,j}(v)\right|  \le \pi B_k U_1$. The lemma follows.
\end{proof}

\begin{proof}[\textbf{Proof of Lemma \ref{lem:Bound-B-Above}}:]
Suppose that $B_k$ is attained at $j'/T_k$. We have
\begin{align}
\tilde{y}_{k}(j'/T_k) &= \tilde{\mu}_{k,j'}(j'/T_k) + \sum_{j\not= j'} \tilde{\mu}_{k,j}(j'/T_k) + \tilde{\epsilon}_k(j'/T_k) \nonumber \\
\implies \quad  B_k = |\tilde{\mu}_{k,j'}(j'/T_k)| &\le |\tilde{y}_{k}(j'/T_k)| + \bigg|\sum_{j\not= j'} \tilde{\mu}_{k,j}(j'/T_k)\bigg| + |\tilde{\epsilon}_k(j'/T_k)| \nonumber \\
&\leq  \PB + \pi B_k U_2 + \NB, \nonumber
\end{align}
where the last inequality follows from Lemma \ref{lem:bound-leakage}. Therefore, the lemma follows.
\end{proof}

\begin{proof}[\textbf{Proof of Lemma \ref{lem:threshold}}:]
We can show the following holds with probability at least $\ProbNB$,
\begin{align}
\sup_{v\in {\overline{\Vscr}_k}}|\tilde{y}_k(v)| &~\le~ \sup_{v\in {\overline{\Vscr}_k}}|\tilde{\epsilon}_k(v)|+\sup_{v\in {\overline{\Vscr}_k}}|\tilde{\mu}_k(v)| \nonumber\\
&~\le~ \sup_{v\in {\overline{\Vscr}_k}}|\tilde{\epsilon}_k(v)|+\pi B_k U_1 \nonumber \\
&~\le~ \sup_{v\in {\overline{\Vscr}_k}}|\tilde{\epsilon}_k(v)|+ \frac{\pi U_1}{1-\pi U_2}\left(\NB + \PB \right) \nonumber\\
&~\le~ \bar\epsilon_v  +\frac{\pi U_1}{1-\pi U_2} \left( \bar\epsilon_v + \PB \right) =: \tau_{k}.  \label{eq:derive the threshold}
\end{align}
The second inequality follows from Lemma~\ref{lem:bound-leakage},
the third inequality follows from Lemma~\ref{lem:Bound-B-Above},
and the last inequality holds with probability at least $\ProbNB$ by Lemma~\ref{lem:noise-bound} and the definition of $\bar\epsilon_v$.
\end{proof}

\begin{proof}[\textbf{Proof of Lemma \ref{lem:no-false-negative}}:]
The periodogram at a present frequency $j'/T_k$ can be lower bounded by
\begin{align}
\left|\tilde{y}_{k}(j'/T_k)\right| &= \bigg|\tilde{\mu}_{k,j'}(j'/T_k) + \sum_{j\not= j'} \tilde{\mu}_{k,j}(j'/T_k) + \tilde{\epsilon}_k(j'/T_k)\bigg| \nonumber \\
&\ge |\tilde{\mu}_{k,j'}(j'/T_k)| - \bigg|\sum_{j\not= j'} \tilde{\mu}_{k,j}(j'/T_k)\bigg| - \NB \nonumber \\
&\ge b_{k} -\pi B_k U_2 - \bar\epsilon_v, \label{eq:decomp-y}
\end{align}
with probability no less than $\ProbNB$ according to definition of $b_k$ in \eqref{eq:def-bk} and Lemma \ref{lem:noise-bound} and \ref{lem:bound-leakage}.

Next we examine the periodogram in set $\Vscr_k$ and $\overline{\Vscr}_k$ respectively.
We find the following
\begin{align}
& \sup_{v\in \Vscr_k}|\tilde{y}_k(v)| \leq |\tilde{\mu}_{k,j'}(v)| + \bigg|\sum_{j\neq j'}\tilde{\mu}_{k,j}(v)\bigg| +  \NB \leq B_k + \pi B_k U_2 + \bar\epsilon_v, \nonumber \\
& \sup_{v\in \overline\Vscr_k}|\tilde{y}_k(v)| \leq \bigg|\sum_{j=0}^{T_k-1}\tilde{\mu}_{k,j}(v)\bigg| + \NB \leq \pi B_k U_1 + \bar\epsilon_v, \nonumber
\end{align}
hold with probability no less than $\ProbNB$ by using the definition of $B_k$ in \eqref{eq:def-Bk} and applying Lemma \ref{lem:noise-bound} and \ref{lem:bound-leakage} again.
Hence, with that probability as well, we have
\begin{align}
\PB \le \max\bigg\{\sup_{v\in \Vscr_k}|\tilde{y}_k(v)|, \sup_{v\in \overline\Vscr_k}|\tilde{y}_k(v)|\bigg\} \le \max\left\{\pi U_1,\pi U_2+1\right\}B_k + \bar\epsilon_v . \label{eq:upper-bound-yk}
\end{align}
By plugging \eqref{eq:upper-bound-yk} into the definition of $\tau_k$ in \eqref{eq:derive the threshold}, we can derive an upper bound on $\tau_k$ as
\begin{align}\label{eq:upper-bound-tau}
\tau_k \le \left(\frac{2\pi U_1}{1 - \pi U_2}+1\right)\bar\epsilon_v + \frac{\pi U_1}{1-\pi U_2}\max\left\{\pi U_1,\pi U_2+1\right\}B_k.
\end{align}

Note that condition \eqref{eq:guarantee} specified in the lemma leads to the following
\begin{align}
b_k -\pi B_k U_2 - \bar\epsilon_v \geq \left(\frac{2\pi U_1}{1 - \pi U_2}+1\right)\bar\epsilon_v + \frac{\pi U_1}{1-\pi U_2}\max\left\{\pi U_1, \pi U_2+1\right\}B_k. \nonumber
\end{align}
Therefore, by comparing \eqref{eq:decomp-y} and \eqref{eq:upper-bound-tau}, we can show that $|\tilde{y}_k(j'/T_k)| > \tau_k$ holds for all present frequencies with  probability no less than $\ProbNB$.
\end{proof}

\begin{proof}[\textbf{Proof of Lemma \ref{lem:local-noise-bound}}:]
The following analysis relies on certain results developed in the proof of Lemma \ref{lem:noise-bound}.
Suppose that we divide each neighborhood $[\frac{j}{T_k}-\frac{g}{n},\frac{j}{T_k}+\frac{g}{n}]$ for $j=1,...,\lfloor\frac{T_k}{2}\rfloor$ into $L_1 \in \N^+$ equal intervals, and thus there are totally $\lfloor\frac{T_k}{2}\rfloor L_1$ intervals. We let $v_1$ denote where $\NBtight$ is attained, let $v_2$ denote the midpoint of the interval where $v_1$ falls in, and let $\Gscr_k$ denote the set of the mid-points of all the intervals.
Since $v_1$ is at most $\frac{g}{nL_1}$ away from $v_2$, we can show the following according to the inequality \eqref{eq:derivative-noise},
\begin{align}
\NBtight = |\tilde{\epsilon}_k(v_1)| &~\le~ |\tilde{\epsilon}_k(v_2)| +  2n\pi|v_1-v_2| \NB  \nonumber \\
&~\le~ \max_{v \in \Gscr_k}|\tilde{\epsilon}_k(v)| +  \frac{2\pi g}{L_1} \NB. \nonumber
\end{align}

For any $w_1, w_2\in(0,1)$ satisfying $w_1+w_2=1$, by the union bound, we have
\begin{align}
\PR\bigg( \NBtight >\delta \bigg) &~\leq~ \PR\bigg(\max_{v \in \Gscr_k}|\tilde{\epsilon}_k(v)| > w_1\delta \bigg) + \PR\bigg(\frac{2\pi g}{L_1} \NB > w_2\delta  \bigg). \label{eq:lemma5-proof1}
\end{align}
By using the union bound again together with the inequality \eqref{eq:bound-noise-v}, we can show
\begin{align}
\PR\bigg(\max_{v \in \Gscr_k}|\tilde{\epsilon}_k(v)| > w_1\delta \bigg) \leq 4\lfloor\frac{T_k}{2}\rfloor L_1 \exp\left(  -\frac{n\delta^2w_1^2}{4\sigma^2}  \right) \leq \frac{nL_1}{g} \exp\left(  -\frac{n\delta^2w_1^2}{4\sigma^2}  \right)\label{eq:lemma5-proof2}
\end{align}
given $T_k<\frac{n}{2g}$ from Assumption \ref{asp:period}. By applying inequality \eqref{eq:bound-noise-L} for any $L_2 \in \N^+$, we have
\begin{align}
\PR\bigg(\frac{2\pi g}{L_1} \NB > w_2\delta  \bigg) \le 4L_2\exp\left( -\frac{n\delta^2w_2^2L_1^2}{16\pi^2g^2\sigma^2}\left( 1-\frac{\pi n}{2L_2}  \right)^2  \right). \label{eq:lemma5-proof3}
\end{align}

We calibrate parameters and choose $L_1=200g$, $L_2=50n$, $w_1 = 0.965$, and $w_2 = 0.035$. By substituting \eqref{eq:lemma5-proof2} and \eqref{eq:lemma5-proof3} into \eqref{eq:lemma5-proof1}, we achieve
\begin{align}
\PR\bigg( \NBtight >\delta \bigg) \leq 200n \exp\left(  -\frac{0.233n\delta^2}{\sigma^2}  \right) + 200n\exp\left( -\frac{0.291n\delta^2}{\sigma^2}  \right) \nonumber
\end{align}
which completes the proof.
\end{proof}

\begin{proof}[\textbf{Proof of Lemma \ref{lem:narrow-down-neighborhood}}:]
The idea applied in this proof is similar to that used in Lemma \ref{lem:no-false-negative}.
By applying Lemma \ref{lem:local-noise-bound} on the noise term in \eqref{eq:decomp-y}, the periodogram at a present frequency $j'/T_k$ can be lower bounded by
\begin{align}
\left|\tilde{y}_{k}(j'/T_k)\right| \ge |\tilde{\mu}_{k,j'}(j'/T_k)| - \bigg|\sum_{j\not= j'} \tilde{\mu}_{k,j}(j'/T_k)\bigg| - \NBtight \ge |b_{k,j'}| -\pi B_k U_2 - \frac{3}{4}\bar\epsilon_v \label{eq:lemma6-lowerbound}
\end{align}
with probability no less than $\ProbNBtight$.

Next we examine the periodogram in the region $\Hscr_{k,j'}\coloneqq[\frac{j'}{T_k}-\frac{g}{n},\frac{j'}{T_k}+\frac{g}{n}] \setminus [\frac{j'}{T_k}-\frac{1}{n},\frac{j'}{T_k}+\frac{1}{n}]$. Recall the definition of function $R(\theta)$ in \eqref{eq:Rtheta-rep}, and we note that for $v \in \Hscr_{k,j'}$,
\begin{align}
R(j'/T_k-v)\le \sup_{\theta\in [\frac{1}{n},\frac{2}{n}]}R(\theta) = \sup_{\theta\in [\frac{1}{n},\frac{2}{n}]} \left|\frac{\sin(\pi n\theta)}{n\sin(\pi\theta)}\right| \le  \frac{1}{n\sin(2\pi /n)}\le \frac{1}{4}, \nonumber
\end{align}
where the last step follows from that $n > 2gT_k \geq 4$ given in Assumption \ref{asp:period} and $\sin(x)\ge 2x/\pi$ for $x\in[0,\pi/2]$. By applying Lemma \ref{lem:local-noise-bound} again, we find that
\begin{align}
\sup_{v\in\Hscr_{k,j'}} |\tilde{y}_k(v)| &\leq |\tilde{\mu}_{k,j'}(v)| + \bigg|\sum_{j\neq j'}\tilde{\mu}_{k,j}(v)\bigg| +  \NBtight \nonumber \\
& \leq |b_{k,j'}| R(j'/T_k-v) + \pi B_k U_2 + \frac{3}{4}\bar\epsilon_v  \le \frac{1}{4}|b_{k,j'}| + \pi B_k U_2 + \frac{3}{4}\bar\epsilon_v \label{eq:lemma6-upperbound}
\end{align}
holds with the same probability no less than $\ProbNBtight$.

Note that the condition specified in the lemma leads to the following
\begin{align}
b_k \geq 2 \bar\epsilon_v + \frac{8\pi}{3}U_2B_k &~\implies~ |b_{k,j'}| \geq 2 \bar\epsilon_v + \frac{8\pi}{3}U_2B_k \nonumber \\
&~\implies~  |b_{k,j'}| -\pi B_k U_2 - \frac{3}{4}\bar\epsilon_v  \ge \frac{1}{4}|b_{k,j'}| + \pi B_k U_2 + \frac{3}{4}\bar\epsilon_v. \nonumber
\end{align}
By comparing \eqref{eq:lemma6-lowerbound} and \eqref{eq:lemma6-upperbound}, we conclude that $|\tilde{y}_k(v_{j'}^{*})| \ge \left|\tilde{y}_{k}(j'/T_k)\right| \ge \sup_{v\in\Hscr_{k,j'}} |\tilde{y}_k(v)|$. Therefore, $v_{j'}^{*} = \mathop{\arg\sup}_{v \in [\frac{j'}{T_k}-\frac{g}{n}, \frac{j'}{T_k}+\frac{g}{n}]} |\tilde{y}_k(v)|$ is not attained in $\Hscr_{k,j'}$ but in $[\frac{j'}{T_k}-\frac{1}{n},\frac{j'}{T_k}+\frac{1}{n}]$.
\end{proof}

\begin{proof}[\textbf{Proof of Theorem \ref{thm:freq-identification}}:]
Given Assumption \ref{asp:sub-Gaussian} - \ref{asp:bBratio}, by applying union bound we have Lemma \ref{lem:threshold}, Lemma \ref{lem:no-false-negative} and Lemma \ref{lem:narrow-down-neighborhood} hold together with probability no less than $\ProbNBall$.
In particular, we have $\sup_{v\in{\overline{\Vscr}_k}}|\tilde{y}_k(v)| \le \tau_k$, $|\tilde{y}_k(v_{j}^{*})| \ge |\tilde{y}_k(j/T_k)| > \tau_k$ and $v_{j}^{*} \in [\frac{j}{T_k}-\frac{1}{n},\frac{j}{T_k}+\frac{1}{n}]$, which guarantee that all present frequencies of $j/T_k$ with $|b_{k,j}|>0$ can be exactly found out by Algorithm \ref{alg:frequency-identification} with probability no less than $\ProbNBall$.
Therefore, the period $T_k$ can be correctly estimated accordingly.
Theorem \ref{thm:freq-identification} implies immediately by applying the above result of a single arm with union bound for $K$ arms.

We also remark on the values of parameters $n$, $g$ and $H$ used in Algorithm \ref{alg:frequency-identification}.
Choosing a large sample size $n$ obviously benefits the period estimation in stage one, but it leaves less time for stage two to exploit rewards.
Parameter $g$ is used in determining $U_1$, $U_2$ and the width of the neighborhood. A large $g$ relaxes the sample size requirement for condition \eqref{eq:assumptionbBratio}, but it limits the ability to estimate a long period as $T_k < \frac{n}{2g}$ is assumed in Assumption \ref{asp:period}.
Parameter $H$ is used in setting $\bar\epsilon_v$ and thus is also related to condition \eqref{eq:assumptionbBratio}.
A smaller $H$ makes \eqref{eq:assumptionbBratio} easier to satisfy, but it lowers the probability that all periods are correctly estimated.
By considering these trade-offs as a whole, we carefully derive a set of values in \eqref{eq:parameter} which can significantly ease the restriction in \eqref{eq:assumptionbBratio}, and we also note that Algorithm \ref{alg:frequency-identification} performs well empirically with this setting.
\end{proof}

\subsection{Proofs in Section~\ref{sec:analysis-phasetwo}}
\label{sec:appendix-proof-analysis-phasetwo}

The proofs of following lemmas lead to an upper bound on the regret of the oracle policy $\tpi$, and the proof outline is related to \cite{auer2002using} and \citet{li2017provably}. Recall that the oracle knows the periods of all arms in advance and applies the true values $T_k$ instead of the estimations $\hat{T}_k$ in Algorithm \ref{alg:learning}. As a consequence, we keep in mind that $\banditcount{k}{t}$, $\banditmean_{k,t}^{(s)}$ and $\banditwidth_{k,t}^{(s)}$ originally defined in \eqref{eq:def-countC}, \eqref{eq:alg2-s-mean} and \eqref{eq:alg2-s-ci} are modified as $\banditcountoracle{k}{t}$, $\banditmeanoracle_{k,t}^{(s)}$ and $\banditwidthoracle_{k,t}^{(s)}$ given in \eqref{eq:def-countC-oracle}, \eqref{eq:alg2-s-mean-oracle} and \eqref{eq:alg2-s-ci-oracle} respectively.

\begin{proof}[\textbf{Proof of Lemma \ref{lem:independent}}:]
An epoch $t$ can only be added to $\Psi^{(s)}(t)$ in Step \ref{step:wide-ci-end} of Algorithm \ref{alg:learning}, and this action of set expansion only depends on rewards observed in $\bar\Psi$ and at epochs $j \in \cup_{s'<s}\Psi^{(s')}(t)$ as well as the confidence intervals $\banditwidthoracle_{k,t}^{(s)}$ for $k \in \setk$. The definition \eqref{eq:alg2-s-ci-oracle} shows that $\banditwidthoracle_{k,t}^{(s)}$ does not depend on the values of the rewards observed from $\Psi^{(s)}(t)$. Hence, we conclude the proof.
\end{proof}

The conditional independence property established in Lemma \ref{lem:independent} allows us to apply concentration inequalities in the regret analysis. Next, we provide an additional technical lemma to facilitate the proof of Lemma \ref{lem:confidence_set}.

\begin{lemma}\label{lem:subGaussian-inequality}
Suppose that $\{X_t:t \in \N^{+}\}$ are independent sub-Gaussian random variables with parameter $\sigma$. Let $q_t = \frac{1}{t}\sum_{j=1}^{t} X_j$.  For any $\delta_0 \in (0,1)$, then $\PR\left(\exists\, t \geq 1, \abs{q_t} \geq \sqrt{\frac{4\sigma^2}{t} \log\left(\frac{4t}{\delta_0}\right)}\right) \leq \delta_0$.
\end{lemma}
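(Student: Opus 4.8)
The plan is to reduce to a fixed-$t$ sub-Gaussian tail bound and then close an infinite union bound over the time index, the threshold in the statement being chosen precisely so that the union converges. First I would note that, since the $X_j$ are independent and sub-Gaussian with parameter $\sigma$, the partial sum $\sum_{j=1}^t X_j$ is sub-Gaussian with parameter $\sigma\sqrt{t}$, so the average $q_t$ is sub-Gaussian with parameter $\sigma/\sqrt{t}$. The standard tail inequality (of the form stated in Assumption~\ref{asp:sub-Gaussian}) then gives, for every fixed $t$ and every $x>0$,
\[
\PR\left(|q_t| \geq x\right) \leq 2\exp\left(-\frac{t x^2}{2\sigma^2}\right).
\]

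Next I would plug in the specific threshold $x_t \coloneqq \sqrt{\frac{4\sigma^2}{t}\log(4t/\delta_0)}$ from the statement. A one-line computation gives $t x_t^2/(2\sigma^2) = 2\log(4t/\delta_0)$, whence
\[
\PR\left(|q_t| \geq x_t\right) \leq 2\exp\left(-2\log(4t/\delta_0)\right) = 2\left(\frac{\delta_0}{4t}\right)^2 = \frac{\delta_0^2}{8t^2}.
\]
The particular shape of the threshold --- the factor $4$ and the linear-in-$t$ term inside the logarithm --- is exactly what makes the per-$t$ failure probability decay like $1/t^2$, so that it remains summable over the unbounded horizon. Summing via the union bound over all $t \geq 1$ yields
\[
\PR\left(\exists\, t \geq 1:\ |q_t| \geq x_t\right) \leq \sum_{t=1}^\infty \frac{\delta_0^2}{8t^2} = \frac{\delta_0^2}{8}\cdot\frac{\pi^2}{6} = \frac{\pi^2 \delta_0^2}{48}.
\]
To finish, it suffices to verify $\frac{\pi^2\delta_0^2}{48} \leq \delta_0$: since $\delta_0 \in (0,1)$ we have $\delta_0^2 \leq \delta_0$, and since $\pi^2/48 < 1$ the right-hand side is at most $\delta_0$, giving the claim.

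As for the main obstacle: there is essentially none --- this is a textbook peeling/union-bound over time, and all steps are routine. The only point requiring care is matching the constants so that the infinite union bound actually closes, i.e.\ confirming that the $\log(4t/\delta_0)$ threshold produces a convergent $\sum_t t^{-2}$ tail and that the leftover constant $\pi^2/48$ falls below $1$; this is what upgrades the fixed-$t$ deviation estimate into a bound holding uniformly over all $t \geq 1$, which is the form needed later for the confidence-set argument of Lemma~\ref{lem:confidence_set}.
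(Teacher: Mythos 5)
Your proof is correct, and it takes a genuinely different route from the paper. You close the argument with a plain union bound over individual times $t$: the fixed-$t$ Chernoff bound plus the specific threshold $\sqrt{\tfrac{4\sigma^2}{t}\log(4t/\delta_0)}$ give a per-$t$ failure probability $\delta_0^2/(8t^2)$, which is summable, and the leftover constant $\pi^2\delta_0^2/48$ indeed sits below $\delta_0$ on $(0,1)$. The paper instead treats whole dyadic blocks $2^j \le t < 2^{j+1}$ at once: it applies Doob's submartingale inequality to $\exp(\lambda Q_t)$ to get a maximal inequality $\PR\left(\exists\, t \le n,\ |Q_t| \ge \eta\right) \le 2\exp\left(-\tfrac{\eta^2}{2n\sigma^2}\right)$, then peels over blocks, with each block contributing $\delta_0/2^{j+1}$ and the total summing to exactly $\delta_0$. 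The trade-off: your argument is more elementary (no martingale machinery) and in fact yields the slightly stronger bound $\pi^2\delta_0^2/48$, but it leans on the generosity of the constant $4\sigma^2$ in the threshold --- had the lemma been stated with $2\sigma^2$, your per-$t$ probabilities would decay only like $1/t$ and the union over all $t$ would diverge, whereas the maximal-inequality-plus-peeling argument degrades gracefully (the block contributions would still decay geometrically, like $2^{-j/2}$). So the paper's technique is the more robust template for anytime confidence bounds, while yours is the shortest correct path for the statement as given.
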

\begin{proof}
Define $Q_n = \sum_{j=1}^{n} X_j$.  Since $Q_n$ is the sum of independent sub-Gaussian random variables, $Q_n$ is sub-Gaussian with parameter $\sqrt{n}\sigma$, and $\E[\exp(\lambda Q_n)] \leq \exp\left(\frac{1}{2}n \sigma^2 \lambda^2 \right)$ holds for all $\lambda\in \R$ according to the sub-Gaussian property described in Assumption \ref{asp:sub-Gaussian}. For any $\eta > 0$, we have the following by choosing $\lambda =\frac{\eta}{n\sigma^2}$,
\begin{align}
\PR\left(\exists t \leq n, Q_t \geq \eta \right) &= \PR\left(\max_{t \leq n}\exp(\lambda Q_t) \geq \exp(\lambda\eta)\right) \nonumber \\
&\leq \frac{\E\left[\exp(\lambda Q_n)\right]}{\exp(\lambda\eta)} \leq \exp\left(\frac{1}{2}n \sigma^{2}\lambda^{2} - \lambda \eta \right) = \exp\left(-\frac{\eta^{2}}{2n \sigma^{2}}\right). \nonumber
\end{align}
The first inequality holds from Doob's submartingale inequality and the fact that $Q_t$ is a submartingale with respect to the filtration generated by $X_1,...,X_n$.

By symmetry, we also have $\PR\left(\exists t \leq n, Q_t \leq -\eta\right) \leq \exp\left(-\frac{\eta^{2}}{2n \sigma^{2}}\right)$, and therefore
\begin{align}
\PR\left(\exists t \leq n, \abs{Q_t} \geq \eta \right) \leq 2\exp\left(-\frac{\eta^{2}}{2n \sigma^{2}}\right). \label{eq:lem-subGaussian-inequality-1}
\end{align}
Then, we can show
\begin{align}
& \PR\left(\exists\, t \geq 1, \abs{Q_t} \geq \sqrt{4\sigma^2 t \log\left(\frac{4t}{\delta_0}\right)}\right) \nonumber \\
\leq~ & \sum_{j = 0}^{\infty} \PR\left(2^{j} \leq t < 2^{j + 1},\abs{Q_t} \geq \sqrt{4\sigma^2 t \log\left(\frac{4t}{\delta_0}\right)}\right) \nonumber \\
\leq~ & \sum_{j = 0}^{\infty} \PR\left(1 \leq t \leq 2^{j + 1},\abs{Q_t} \geq \sqrt{4\sigma^2 \cdot 2^{j} \cdot \log\left(\frac{4 \cdot 2^j}{\delta_0}\right)}\right) \nonumber \\
\leq~ & \sum_{j = 0}^{\infty} 2\exp\left( -\frac{1}{2 \cdot 2^{j+1}\sigma^2} \cdot 4\sigma^2 \cdot 2^j \cdot \log\left(\frac{4 \cdot 2^j}{\delta_0}\right) \right) = \sum_{j = 0}^{\infty} \frac{\delta_0}{2^{j+1}} = \delta_0, \nonumber
\end{align}
where the first inequality follows the union bound and the last inequality applies the knowledge of \eqref{eq:lem-subGaussian-inequality-1}. The lemma follows immediately by dividing $Q_t$ by $t$.
\end{proof}

\begin{proof}[\textbf{Proof of Lemma \ref{lem:confidence_set}}:]
Recall that $\banditmeanoracle_{k,t}^{(s)}$ given in \eqref{eq:alg2-s-mean-oracle} computes the average of sample rewards selected from $\Psi^{(s)}(t) \cup \bar\Psi$. Let $\banditmeanoracle_{k,t}^{(s)}\big(\bar\Psi\big)$ and $\banditmeanoracle_{k,t}^{(s)}\big(\Psi^{(s)}(t)\big)$ denote the average of sample rewards selected from $\bar\Psi$ and $\Psi^{(s)}(t)$ separately, i.e.,
\begin{align}
\banditmeanoracle_{k,t}^{(s)}\big(\bar\Psi\big) &~=~ \frac{1}{\banditcountoracle{k}{t}\left(\bar\Psi\right)} \sum_{\substack{j \in \bar\Psi: \\ \tpi_j=k, ~ j \equiv t (\mathrm{mod}\, T_k) }} Y_{k,j}, \nonumber \\
\banditmeanoracle_{k,t}^{(s)}\big(\Psi^{(s)}(t)\big) &~=~ \frac{1}{\banditcountoracle{k}{t}\left(\Psi^{(s)}(t)\right)} \sum_{\substack{j \in {\Psi^{(s)}(t)}: \\ \tpi_j=k, ~ j \equiv t (\mathrm{mod}\, T_k) }} Y_{k,j}. \nonumber
\end{align}
Then, $\tilde{m}_{k,t}^{(s)}$ can be expressed as a linear combination
\begin{align}
\banditmeanoracle_{k,t}^{(s)} = \frac{\banditcountoracle{k}{t}\left(\bar\Psi\right)}{\banditcountoracle{k}{t}\left(\Psi^{(s)}(t) \cup \bar\Psi\right)} \banditmeanoracle_{k,t}^{(s)}\big(\bar\Psi\big)
 + \frac{\banditcountoracle{k}{t}\left(\Psi^{(s)}(t)\right)}{\banditcountoracle{k}{t}\left(\Psi^{(s)}(t) \cup \bar\Psi\right)} \banditmeanoracle_{k,t}^{(s)}\big(\Psi^{(s)}(t)\big). \label{eq:banditmeanoracle-linearexpression}
\end{align}
To simplify notations, we also introduce

\begin{align}
\banditwidthoracle_{k,t}^{(s)}\big(\bar\Psi\big) &~=~ \sqrt{\frac{4\sigma^2}{\banditcountoracle{k}{t}\left(\bar\Psi\right)}\log\left( \frac{8{d} \banditcountoracle{k}{t}\left(\bar\Psi\right)}{\delta} \right)}, \nonumber \\
\banditwidthoracle_{k,t}^{(s)}\big(\Psi^{(s)}(t)\big) &~=~ \sqrt{\frac{4\sigma^2}{\banditcountoracle{k}{t}\left( \Psi^{(s)}(t)\right)}\log\left( \frac{8{d} \banditcountoracle{k}{t}\left(\Psi^{(s)}(t)\right)}{\delta} \right)},  \nonumber
\end{align}
and then the confidence width $\banditwidthoracle_{k,t}^{(s)}$ given in \eqref{eq:alg2-s-ci-oracle} can also be expressed as a linear combination
\begin{align}
\banditwidthoracle_{k,t}^{(s)} &~=~ \frac{\banditcountoracle{k}{t}\left(\bar\Psi\right)}{\banditcountoracle{k}{t}\left(\Psi^{(s)}(t) \cup \bar\Psi\right)} \banditwidthoracle_{k,t}^{(s)}\big(\bar\Psi\big)
 + \frac{\banditcountoracle{k}{t}\left(\Psi^{(s)}(t)\right)}{\banditcountoracle{k}{t}\left(\Psi^{(s)}(t) \cup \bar\Psi\right)} \banditwidthoracle_{k,t}^{(s)}\big(\Psi^{(s)}(t)\big). \label{eq:banditwidthoracle-linearexpression}
\end{align}

We first examine samples selected from $\bar\Psi$. Since the rewards $\{Y_{k,j}: j\in \bar\Psi\}$ observed in stage one are independent sub-Gaussian random variables with mean $\mu_{k,t}$, by applying Lemma \ref{lem:subGaussian-inequality} with $\delta_0$ chosen as $\frac{\delta}{2d}$, then for all arm $k \in \setk$, round $s \in \sets$ and phase $p=1,...,T_k$, we can show
\begin{align}
\label{eq:sug-gaussian-bound-stage-one}
\PR\bigg(\bigcup_{\substack{t \in {\Psi^{(s)}(T)}: \\ \tpi_t=k,~ t \equiv p (\mathrm{mod}\, T_k)}} \left\{\left|\banditmeanoracle_{k,t}^{(s)}\big(\bar\Psi\big) -\mu_{k,t} \right| \ge \banditwidthoracle_{k,t}^{(s)}\big(\bar\Psi\big) \right\} \bigg) \le \frac{\sigma}{2d}.
\end{align}
Next we examine samples selected from $\Psi^{(s)}(t)$. Lemma \ref{lem:independent} states that the rewards $\{Y_{k,j}: j \in {\Psi^{(s)}(t)}, \tpi_j=k, j \equiv t (\mathrm{mod}\, T_k) \}$ are conditionally independent sub-Gaussian random variables with mean $\mu_{k,t}$. Again by applying Lemma \ref{lem:subGaussian-inequality} with $\delta_0$ chosen as $\frac{\delta}{2d}$, we have the following conditional probability bound
\begin{align}
\label{eq:sug-gaussian-bound-stage-two}
\PR\bigg(\bigcup_{\substack{t \in {\Psi^{(s)}(T)}: \\ \tpi_t=k,~ t \equiv p (\mathrm{mod}\, T_k)}} \bigg\{&\left|\banditmeanoracle_{k,t}^{(s)}\big(\Psi^{(s)}(t)\big) -\mu_{k,t} \right| \ge \banditwidthoracle_{k,t}^{(s)}\big(\Psi^{(s)}(t)\big) \nonumber \\
 &~\bigg|~ \Psi^{(s)}(t), \tpi_j \text{ for } j \in \Psi^{(s)}(t), Y_{\tpi_\tau,\tau} \text{ for } \tau \in \bar\Psi \bigg\} \bigg) \le \frac{\sigma}{2d}.
\end{align}
Taking expectation of both sides, the above bound holds for the unconditional probability as well.

Given the linear combination expressions of $\banditmeanoracle_{k,t}^{(s)}$ and $\banditwidthoracle_{k,t}^{(s)}$ in \eqref{eq:banditmeanoracle-linearexpression} and \eqref{eq:banditwidthoracle-linearexpression}, by applying union bound and using \eqref{eq:sug-gaussian-bound-stage-one} and \eqref{eq:sug-gaussian-bound-stage-two}, we have
\begin{align}
& \PR \bigg(\bigcup_{\substack{t \in {\Psi^{(s)}(T)}: \\ \tpi_t=k,~ t \equiv p (\mathrm{mod}\, T_k)}} \left\{ \left| \banditmeanoracle_{k,t}^{(s)} -\mu_{k,t} \right| \geq \banditwidthoracle_{k,t}^{(s)} \right\}  \bigg)\label{eq:lem-confidence_set-1}\\
~\le~ &\PR\bigg(\bigcup_{\substack{t \in {\Psi^{(s)}(T)}: \\ \tpi_t=k,~ t \equiv p (\mathrm{mod}\, T_k)}} \left\{\left|\banditmeanoracle_{k,t}^{(s)}\big(\bar\Psi\big) -\mu_{k,t} \right| \ge \banditwidthoracle_{k,t}^{(s)}\big(\bar\Psi\big) \right\} \bigg) \nonumber \\
& + \PR\bigg(\bigcup_{\substack{t \in {\Psi^{(s)}(T)}: \\ \tpi_t=k,~ t \equiv p (\mathrm{mod}\, T_k)}} \bigg\{\left|\banditmeanoracle_{k,t}^{(s)}\big(\Psi^{(s)}(t)\big) -\mu_{k,t} \right| \ge \banditwidthoracle_{k,t}^{(s)}\big(\Psi^{(s)}(t)\big) \bigg\} \bigg) \le \frac{\delta}{d}. \nonumber
\end{align}
Taking the union bound of \eqref{eq:lem-confidence_set-1} over $p$, $k$ and $s$, we have:
\begin{align}
\PR \bigg(\bigcup_{\substack{s \in \sets, ~k \in \setk, \\ p = 1,...,T_k}} \bigcup_{\substack{t \in {\Psi^{(s)}(T) }: \\ \tpi_t=k,~ t \equiv p (\mathrm{mod}\, T_k)}} \left\{ \left| \banditmeanoracle_{k,t}^{(s)} -\mu_{k,t} \right| \geq \banditwidthoracle_{k,t}^{(s)} \right\}  \bigg) \leq \frac{\delta}{d} S\sum_{k=1}^{K}T_k = \delta S, \nonumber
\end{align}
which leads to the lemma.
\end{proof}

\begin{proof}[\textbf{Proof of Lemma \ref{lem:single-step-regret}}:]
We prove part 1 by induction. The lemma holds for $s'=1$ and suppose that we have $\pi_t^* \in \Ascr_{s'}$ as well. When Algorithm \ref{alg:learning} proceeds to round $s'+1$, we know from Step \eqref{step:eliminate-arms-begin} that a narrow confidence bound less than $2^{-s'}\sigma$ is obtained for arms of round $s'$. Given event $\mathcal{E}$, we have $\left|\banditmeanoracle_{k,t}^{(s')} - \mu_{k,t} \right| \leq \banditwidthoracle_{k,t}^{(s')} \leq 2^{-s'}\sigma$ for all $k \in \Ascr_{s'}$. Then, the optimality of $\pi_t^* \in \Ascr_{s'}$ implies
 \begin{align}
\banditmeanoracle_{\pi_t^*,t}^{(s')} \geq \mu_{\pi_t^*,t} - 2^{-s'}\sigma \geq \mu_{k,t} - 2^{-s'}\sigma \geq \banditmeanoracle_{k,t}^{(s')} - 2^{1-s'}\sigma \nonumber
\end{align}
for all $k \in \Ascr_{s'}$, which guarantees that $\pi_t^*$ is selected to next round $s'+1$ by Step \ref{step:eliminate-arms}. Therefore, the lemma holds for $s'+1$ with $\pi_t^* \in \Ascr_{s'+1}$ and the induction follows.

Suppose $\tpi_{t}$ is chosen at Step \ref{step:wide-ci} in round $s$. If $s=1$, part 2 of the lemma holds obviously according to Assumption \ref{asp:mean-reward}. If $s\ge2$, since part 1 showed $\pi_t^* \in \Ascr_{s}$, the condition of Step \ref{step:eliminate-arms-begin} in round $s-1$ implies $\left|\banditmeanoracle_{k,t}^{(s-1)} - \mu_{k,t} \right| \leq 2^{1-s}\sigma$ for both $k=\tpi_t$ and $k=\pi_t^*$, and Step \ref{step:eliminate-arms} in stage $s-1$ implies $\banditmeanoracle_{\tpi_t,t}^{(s-1)} \ge \banditmeanoracle_{\pi_t^*,t}^{(s-1)} - 2^{2-s}\sigma$. By combining these inequalities together, we can prove part 3 as
\begin{align}
\mu_{\tpi_t,t} \ge \banditmeanoracle_{\tpi_t,t}^{(s-1)} - 2^{1-s}\sigma \ge \banditmeanoracle_{\pi_t^*,t}^{(s-1)} - 3\cdot2^{1-s}\sigma \ge \mu_{\pi_t^*,t} - 4\cdot2^{1-s}\sigma. \nonumber
\end{align}

If $\tpi_t$ is chosen in Step~\ref{step:narrow-ci}, then we have $\banditmeanoracle_{\tpi_t,t}^{(s)} \ge \banditmeanoracle_{\pi_t^*,t}^{(s)}$ and $\left|\banditmeanoracle_{k,t}^{(s)} - \mu_{k,t} \right| \leq \frac{\sigma}{\sqrt{T}}$ for both $k=\tpi_t$ and $k=\pi_t^*$. Therefore, part 4 follows through a similar argument as that used in the proof above,
\begin{align}
\mu_{\tpi_t,t} \ge \banditmeanoracle_{\tpi_t,t}^{(s)} - \frac{\sigma}{\sqrt{T}} \ge \banditmeanoracle_{\pi_t^*,t}^{(s-1)} - \frac{\sigma}{\sqrt{T}} \ge \mu_{\pi_t^*,t} - \frac{2\sigma}{\sqrt{T}}. \nonumber
\end{align}
We complete the proof.
\end{proof}

Lemma \ref{lem:intermediate-bound} a technical result that is used in the proof of Lemma~\ref{lem:oracle-bound}.
\begin{lemma}\label{lem:intermediate-bound}
For all $s \in \sets$, then $\displaystyle{\sum_{t\in\Psi^{(s)}(T)} \banditwidthoracle_{\tpi_t,t}^{(s)} \leq  4\sigma\sqrt{|\Psi^{(s)}(T)| d\log\left( \frac{8T}{\delta} \right) \log\left(\frac{T}{d} \right)} }$.
\end{lemma}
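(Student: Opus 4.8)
The plan is to collapse the two-term confidence width into a single $N^{-1/2}$-type factor, and then bound the sum not directly but through the sum of \emph{squared} widths via Cauchy--Schwarz; the second logarithmic factor $\log(T/d)$ will emerge from a harmonic sum over effective arms.

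Fix $t\in\Psi^{(s)}(T)$ and abbreviate the three counts appearing in \eqref{eq:alg2-s-ci-oracle} by $N_1:=\banditcountoracle{\tpi_t}{t}(\bar\Psi)$, $N_2:=\banditcountoracle{\tpi_t}{t}(\Psi^{(s)}(t))$ and $N:=\banditcountoracle{\tpi_t}{t}(\Psi^{(s)}(t)\cup\bar\Psi)$. Since $\bar\Psi=\{1,\dots,nK\}$ and $\Psi^{(s)}(t)\subseteq\{nK+1,\dots\}$ are disjoint, the counts are additive, $N=N_1+N_2$, and the two weights in \eqref{eq:alg2-s-ci-oracle} sum to one. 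Pulling $\sqrt{\log(8dN/\delta)}\ge\max_i\sqrt{\log(8dN_i/\delta)}$ out of both terms and using $\sqrt{N_1}+\sqrt{N_2}\le\sqrt{2N}$ gives
\[
\banditwidthoracle_{\tpi_t,t}^{(s)}\ \le\ 2\sqrt{2}\,\sigma\sqrt{\tfrac{1}{N}\log\tfrac{8dN}{\delta}},\qquad\text{so}\qquad \big(\banditwidthoracle_{\tpi_t,t}^{(s)}\big)^2\le \tfrac{8\sigma^2}{N}\log\tfrac{8dN}{\delta}.
\]
Because every count is at most $T$ and $d\le T$ under the parameter choice \eqref{eq:parameter}, I would bound $\log(8dN/\delta)\le\log(8T^2/\delta)\le 2\log(8T/\delta)$, absorbing the data-reuse term into a constant.

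Writing $N_t$ for the value of $N$ at epoch $t$, I would then apply Cauchy--Schwarz as
\[
\sum_{t\in\Psi^{(s)}(T)}\banditwidthoracle_{\tpi_t,t}^{(s)}\ \le\ \sqrt{\,|\Psi^{(s)}(T)|\sum_{t\in\Psi^{(s)}(T)}\big(\banditwidthoracle_{\tpi_t,t}^{(s)}\big)^2\,}\ \le\ 4\sigma\sqrt{\log\tfrac{8T}{\delta}}\,\sqrt{|\Psi^{(s)}(T)|\sum_{t\in\Psi^{(s)}(T)}\tfrac1{N_t}}.
\]
This is the crucial routing: it converts the problem into controlling the harmonic-type sum $\sum_t 1/N_t$, which is what produces the factor $\log(T/d)$. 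To bound it I would group the epochs of $\Psi^{(s)}(T)$ by effective arm $(k,p)$ (arm $k$ at phase $p\equiv t\ (\mathrm{mod}\ T_k)$), of which there are at most $d$. The key observation is that stage one pulls each arm $k$ for $n\ge T_k$ consecutive epochs, so every residue class mod $T_k$ is visited and $\banditcountoracle{k}{t}(\bar\Psi)\ge1$ for every phase; hence at the $m$-th round-$s$ pull of effective arm $(k,p)$ one has $N_t=\banditcountoracle{k}{t}(\bar\Psi)+(m-1)\ge m$. Thus $\sum_m 1/N_t\le\sum_{m=1}^{M_{k,p}}1/m\le 1+\log M_{k,p}$, where $M_{k,p}$ is the number of such pulls and $\sum_{k,p}M_{k,p}=|\Psi^{(s)}(T)|$. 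Summing over the at most $d$ effective arms and applying Jensen's inequality (concavity of $\log$) yields $\sum_t 1/N_t\le d\log(e\,|\Psi^{(s)}(T)|/d)\le d\log(eT/d)$, and substituting back gives the stated bound with constant $4$.

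The main obstacle is the two-term, data-reuse structure of $\banditwidthoracle_{\tpi_t,t}^{(s)}$: because the width is a convex combination of a stage-one half-width and a round-$s$ half-width, the first reduction must collapse it to a single $N^{-1/2}$ factor without losing more than a constant, and the observation $\banditcountoracle{k}{t}(\bar\Psi)\ge1$ is exactly what guarantees $N_t\ge m$ and keeps the harmonic sum well defined. The second delicate point is the deliberate detour through $\sum_t(\banditwidthoracle_{\tpi_t,t}^{(s)})^2$ rather than $\sum_t\banditwidthoracle_{\tpi_t,t}^{(s)}$: summing the widths directly (via $\sum_m 1/\sqrt m\le 2\sqrt{M_{k,p}}$ and then Cauchy--Schwarz across arms) would yield only a single logarithm and no $\log(T/d)$, so the stated \emph{product} of two logarithms arises specifically from combining the squared-width bound with the harmonic sum $\sum_m 1/m$ and Jensen's inequality.
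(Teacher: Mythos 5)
Your route is, in spirit, the same as the paper's: pull out $\sqrt{|\Psi^{(s)}(T)|}$ by Cauchy--Schwarz/Jensen, group the epochs by effective arm $(k,p)$, bound a per-arm harmonic sum by a logarithm, and finish with Jensen over the at most $d$ effective arms. The differences are cosmetic: the paper treats the two terms of $\banditwidthoracle_{\tpi_t,t}^{(s)}$ separately and keeps the factor $\log(8dx/\delta)\log(x)$ together as a single concave function, whereas you collapse the width to one $N^{-1/2}$ term and split off $\log(8dN/\delta)\le 2\log(8T/\delta)$ early. Your collapsing step and the constant bookkeeping $\bigl(2\sqrt{2}\bigr)^2\cdot 2=16$, hence the outer constant $4$, are correct.

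The gap is in the final substitution: it does not give the stated bound. Using only $\banditcountoracle{k}{t}\left(\bar\Psi\right)\ge 1$ you get $N_t\ge m$, so $\sum_{m=1}^{M_{k,p}}1/N_t\le 1+\log M_{k,p}$, and Jensen then yields $\sum_t 1/N_t\le d\log\left(e|\Psi^{(s)}(T)|/d\right)\le d\log\left(eT/d\right)$. Substituting back gives
\begin{equation*}
\sum_{t\in\Psi^{(s)}(T)}\banditwidthoracle_{\tpi_t,t}^{(s)}\ \le\ 4\sigma\sqrt{|\Psi^{(s)}(T)|\,d\,\log\left(\tfrac{8T}{\delta}\right)\log\left(\tfrac{eT}{d}\right)},
\end{equation*}
which is strictly weaker than the claimed $\log(T/d)$, and your chain has no constant slack left to absorb the extra additive $1$ inside the logarithm (the factor $4$ is already fully spent; the only remaining slack, $\log(8dN/\delta)\le 2\log(8T/\delta)-\log(8/\delta)$, does not compensate in general). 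The missing ingredient --- and exactly what the paper's proof uses when its harmonic series starts at $j=2$ --- is that stage one pulls each arm for $n$ \emph{consecutive} epochs with $T_k<\frac{n}{2g}<\frac{n}{2}$ (Assumption \ref{asp:period}), so every phase is sampled at least \emph{twice} in $\bar\Psi$, i.e., $\banditcountoracle{k}{t}\left(\bar\Psi\right)\ge 2$. Then $N_t\ge m+1$, so $\sum_{m=1}^{M_{k,p}}1/N_t\le\sum_{j=2}^{M_{k,p}+1}\frac1j\le\log\left(M_{k,p}+1\right)$, and Jensen gives $\sum_t 1/N_t\le d\log\bigl((|\Psi^{(s)}(T)|+d)/d\bigr)\le d\log(T/d)$, where the last step uses $|\Psi^{(s)}(T)|+d\le T$ (valid since $d\le\sum_k \frac{n}{2g}\le \frac{nK}{4}$ while $|\Psi^{(s)}(T)|\le T-nK$). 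With this one-line strengthening, your argument delivers the lemma exactly as stated.
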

\begin{proof}
Recall the definition of $\banditwidthoracle_{\tpi_t,t}^{(s)}$  in \eqref{eq:alg2-s-ci-oracle}.
To bound the first term, we have
\begin{align}
& \sum_{t\in \Psi^{(s)}(T)}\frac{\banditcountoracle{\tpi_t}{t}\left(\bar\Psi\right)}{\banditcountoracle{\tpi_t}{t}\left(\Psi^{(s)}(t) \cup \bar\Psi\right)}\sqrt{\frac{4\sigma^2}{\banditcountoracle{\tpi_t}{t}\left(\bar\Psi\right)}\log\left( \frac{8{d} \banditcountoracle{\tpi_t}{t}\left(\bar\Psi\right)}{\delta} \right)}\nonumber\\
~=~& \sum_{t\in \Psi^{(s)}(T)}2\sigma\sqrt{\frac{\banditcountoracle{\tpi_t}{t}\left(\bar\Psi\right)}{\banditcountoracle{\tpi_t}{t}^2\left(\Psi^{(s)}(t) \cup \bar\Psi\right)}\log\left( \frac{8{d} \banditcountoracle{\tpi_t}{t}\left(\bar\Psi\right)}{\delta} \right)}\nonumber\\
~\le~& 2\sigma\sqrt{|\Psi^{(s)}(T)| \sum_{t\in \Psi^{(s)}(T)}\frac{\banditcountoracle{\tpi_t}{t}\left(\bar\Psi\right)}{\banditcountoracle{\tpi_t}{t}^2\left(\Psi^{(s)}(t) \cup \bar\Psi\right)}\log\left( \frac{8{d} \banditcountoracle{\tpi_t}{t}\left(\bar\Psi\right)}{\delta} \right)} \label{eq:w-first-term-step1} \\
~\le~& 2\sigma\sqrt{|\Psi^{(s)}(T)| \sum_{t\in \Psi^{(s)}(T)}\frac{1}{\banditcountoracle{\tpi_t}{t}\left(\Psi^{(s)}(t) \cup \bar\Psi\right)}\log\left( \frac{8{d} \banditcountoracle{\tpi_t}{t}\left(\Psi^{(s)}(t) \cup \bar\Psi\right) }{\delta} \right)} ~. \label{eq:w-first-term-step2}
\end{align}
To obtain \eqref{eq:w-first-term-step1}, we apply Jensen's inequality: $\sum_{j=1}^{J}\sqrt{x_j} \le \sqrt{J\sum_{j=1}^J x_j}$.
To get \eqref{eq:w-first-term-step2}, we use the fact that  $\banditcountoracle{\tpi_t}{t}\left(\bar\Psi\right)\le \banditcountoracle{\tpi_t}{t}\left(\Psi^{(s)}(t) \cup \bar\Psi\right)$ since the counting function is non-decreasing when the argument  set expands.

We reorganize the sum term in the square root of \eqref{eq:w-first-term-step2} by grouping over $k$ and $p$ as following,
\begin{align}
& \sum_{t\in \Psi^{(s)}(T)}\frac{1}{\banditcountoracle{\tpi_t}{t}\left(\Psi^{(s)}(t) \cup \bar\Psi\right)}\log\left( \frac{8{d} \banditcountoracle{\tpi_t}{t}\left(\Psi^{(s)}(t) \cup \bar\Psi\right) }{\delta} \right) \nonumber \\
~=~& \sum_{k=1}^K\sum_{p=1}^{T_k} \sum_{\substack{t\in\Psi^{(s)}(T):  \\ \tpi_t=k,~ t\equiv p(\mathrm{mod}\, T_k)}}\frac{1}{\banditcountoracle{k}{p}\left(\Psi^{(s)}(t) \cup \bar\Psi\right)}\log\left( \frac{8{d} \banditcountoracle{k}{p}\left(\Psi^{(s)}(t) \cup \bar\Psi\right) }{\delta} \right) \nonumber \\
~\le~& \sum_{k=1}^K\sum_{p=1}^{T_k} \log\left( \frac{8{d} \banditcountoracle{k}{p}\left(\Psi^{(s)}(T) \cup \bar\Psi\right) }{\delta} \right) \sum_{\substack{t\in\Psi^{(s)}(T):  \\ \tpi_t=k,~ t\equiv p(\mathrm{mod}\, T_k)}}\frac{1}{\banditcountoracle{k}{p}\left(\Psi^{(s)}(t) \cup \bar\Psi\right)} \label{eq:w-first-term-step3} \\
~\le~& \sum_{k=1}^K\sum_{p=1}^{T_k} \log\left( \frac{8{d} \banditcountoracle{k}{p}\left(\Psi^{(s)}(T) \cup \bar\Psi\right) }{\delta} \right) \sum_{j=2}^{\banditcountoracle{k}{p}\left(\Psi^{(s)}(T) \cup \bar\Psi\right)}\frac{1}{j} \label{eq:w-first-term-step4} \\
~\le~& \sum_{k=1}^K\sum_{p=1}^{T_k} \log\left( \frac{8{d} \banditcountoracle{k}{p}\left(\Psi^{(s)}(T) \cup \bar\Psi\right) }{\delta} \right) \log\left(\banditcountoracle{k}{p}\left(\Psi^{(s)}(T) \cup \bar\Psi\right)\right) \label{eq:w-first-term-step5} \\
~\le~& d \log\left( \frac{8{d}}{\delta} \cdot \frac{1}{d}\sum_{k=1}^K\sum_{p=1}^{T_k}\banditcountoracle{k}{p}\left(\Psi^{(s)}(T) \cup \bar\Psi\right)\right) \log\left(\frac{1}{d}\sum_{k=1}^K\sum_{p=1}^{T_k}\banditcountoracle{k}{p}\left(\Psi^{(s)}(T) \cup \bar\Psi\right)\right) \label{eq:w-first-term-step6} \\
~\le~& d\log\left( \frac{8T}{\delta} \right) \log\left(\frac{T}{d} \right). \label{eq:w-first-term-step7}
\end{align}
To derive \eqref{eq:w-first-term-step3}, we use the fact that $\banditcountoracle{k}{p}\left(\Psi^{(s)}(t) \cup \bar\Psi\right) \le \banditcountoracle{k}{p}\left(\Psi^{(s)}(T) \cup \bar\Psi\right)$.
Noticing the counting nature of $\banditcountoracle{k}{p}$, we rewrite the last sum term in \eqref{eq:w-first-term-step3} to obtain \eqref{eq:w-first-term-step4}, where the corresponding summation index starts from $j=2$ because an arm was pulled at each phase for at least twice during stage one given $T_k < \frac{n}{2g} < \frac{n}{2}$ in Assumption \ref{asp:period}.
Hence, we use the upper bound on the harmonic series $\sum_{j=2}^{J}j^{-1} \le \log(J)$ to establish \eqref{eq:w-first-term-step5}.
We apply Jensen's inequality to obtain \eqref{eq:w-first-term-step6} because $\log(\frac{8dx}{\delta})\log(x)$ is a concave function on $x \ge 2$ given $d\ge2$ and $\delta \in (0,1)$ and $\sum_{k=1}^K\sum_{p=1}^{T_k}1=d$. Inequality \eqref{eq:w-first-term-step7} holds due to the fact that $\sum_{k=1}^K\sum_{p=1}^{T_k}\banditcountoracle{k}{p}\left(\Psi^{(s)}(T) \cup \bar\Psi\right)\le T$.

Finally, by plugging \eqref{eq:w-first-term-step7} back into \eqref{eq:w-first-term-step2}, we can show
\begin{align}
\sum_{t\in \Psi^{(s)}(T)}\frac{\banditcountoracle{\tpi_t}{t}\left(\bar\Psi\right)}{\banditcountoracle{\tpi_t}{t}\left(\Psi^{(s)}(t) \cup \bar\Psi\right)}\sqrt{\frac{4\sigma^2}{\banditcountoracle{\tpi_t}{t}\left(\bar\Psi\right)}\log\left( \frac{8{d} \banditcountoracle{\tpi_t}{t}\left(\bar\Psi\right)}{\delta} \right)} \le 2\sigma\sqrt{|\Psi^{(s)}(T)| d\log\left( \frac{8T}{\delta} \right) \log\left(\frac{T}{d} \right)} . \nonumber
\end{align}
The above derivation works through on the second term in \eqref{eq:alg2-s-ci-oracle} as well, and thus we also have
\begin{align}
&\sum_{t\in\Psi^{(s)}(T)}\frac{\banditcountoracle{\tpi_t}{t}\left(\Psi^{(s)}(t)\right)}{\banditcountoracle{\tpi_t}{t}\left(\Psi^{(s)}(t) \cup \bar\Psi\right)}\sqrt{\frac{4\sigma^2}{\banditcountoracle{\tpi_t}{t}\left( \Psi^{(s)}(t)\right)}\log\left( \frac{8{d} \banditcountoracle{\tpi_t}{t}\left(\Psi^{(s)}(t)\right)}{\delta} \right)} \le 2\sigma\sqrt{|\Psi^{(s)}(T)| d\log\left( \frac{8T}{\delta} \right) \log\left(\frac{T}{d} \right)} . \nonumber
\end{align}
Summing up the these two terms completes the proof.
\end{proof}

\begin{proof}[\textbf{Proof of Lemma \ref{lem:oracle-bound}}:]
Suppose that event $\mathcal{E}$ holds on a sample path, and thus the corresponding pseudo regret incurred by the oracle policy can be bounded as following
\begin{align}\label{eq:orcal-bound_regret_decomposition}
R_{T}^{\tpi} \leq  nK + |\Psi^{(1)}(T)| + \sum_{s = 2}^{S}\frac{8\sigma}{2^{s}} |\Psi^{(s)}(T)| + \frac{2\sigma}{\sqrt{T}}\left(T-nK-\sum_{s=1}^{S}|\Psi^{(s)}(T)|\right).
\end{align}
Note that the first term in \eqref{eq:orcal-bound_regret_decomposition} bounds the regret accumulated in stage one, and the other three terms bounds the regret incurred in stage two by applying parts 2 - 4 of Lemma \ref{lem:single-step-regret}.

We examine the second and third terms in \eqref{eq:orcal-bound_regret_decomposition} and combine them together as
\begin{align}
|\Psi^{(1)}(T)| + \sum_{s = 2}^{S}\frac{8\sigma}{2^{s}} |\Psi^{(s)}(T)| \le \max\left\{\frac{1}{4\sigma},1\right\} \sum_{s = 1}^{S}\frac{8\sigma}{2^{s}} |\Psi^{(s)}(T)| \le \bigg(\frac{2}{\sigma}+8\bigg)\sum_{s = 1}^{S}\frac{\sigma}{2^{s}} |\Psi^{(s)}(T)|. \label{eq:orcal-bound_regret_1}
\end{align}
Note that Step \ref{step:wide-ci-begin} of Algorithm \ref{alg:learning} ensures that $\banditwidthoracle_{\tpi_t,t}^{(s)} \ge 2^{-s}\sigma$ holds for any round $s \in \sets$. Hence, we continue deriving the inequality \eqref{eq:orcal-bound_regret_1} as following
\begin{align}
\eqref{eq:orcal-bound_regret_1} &~\leq~ \bigg(\frac{2}{\sigma}+8\bigg)\sum_{s = 1}^{S} \sum_{t\in\Psi^{(s)}(T)} \banditwidthoracle_{\tpi_t,t}^{(s)} \nonumber \\
&~\leq~ (8+32\sigma) \sum_{s = 1}^{S} \sqrt{|\Psi^{(s)}(T)| d\log\left( \frac{8T}{\delta} \right) \log\left(\frac{T}{d} \right)}  \label{eq:orcal-bound_regret_step2} \\
&~\leq~ (8+32\sigma) \sqrt{S\sum_{s = 1}^{S}|\Psi^{(s)}(T)| d\log\left( \frac{8T}{\delta} \right) \log\left(\frac{T}{d} \right)}  \label{eq:orcal-bound_regret_step3} \\
&~\leq~ (10+40\sigma) \sqrt{T d \log(T) \log\left( \frac{8T}{\delta} \right) \log\left(\frac{T}{d} \right)} ~.  \label{eq:orcal-bound_regret_step4}
\end{align}
We apply Lemma \ref{lem:intermediate-bound} to establish  \eqref{eq:orcal-bound_regret_step2} and apply Jensen's Inequality to develop \eqref{eq:orcal-bound_regret_step3}. The last step \eqref{eq:orcal-bound_regret_step4} is due to $\sum_{s = 1}^{S}|\Psi^{(s)}(T)| \le T$ and $\sqrt{S} \leq \sqrt{\frac{\log(T)}{\log 2}} \leq \frac{5}{4} \sqrt{\log(T)}$ as $S=\floor{\log_2 T}$.

We also have $nK \le \sqrt{TK}$ as $n = \lfloor\sqrt{T/K}\rfloor$, so the bound conditional on event $\mathcal{E}$ given in \eqref{eq:orcal-bound_regret_decomposition} can be further derived as
\begin{align}
R_{T}^{\tpi} &~\leq~ \sqrt{TK} + (10+40\sigma) \sqrt{T d \log(T) \log\left( \frac{8T}{\delta} \right) \log\left(\frac{T}{d} \right)} + 2\sigma \sqrt{T}. \nonumber
\end{align}
Given $\PR(\mathcal{E}) \ge 1- \delta S$ by Lemma \ref{lem:confidence_set}, if choosing $\delta = 8T^{-1}$, we have
\begin{align}
\E[R_{T}^{\tpi} \mathbf{1}_{\{\mathcal{E}\}}] &~\leq~ \sqrt{TK} + (10+40\sigma) \sqrt{T d \log(T) \log\left(T^2\right) \log\left(\frac{T}{d} \right)} + 2\sigma \sqrt{T} \nonumber \\
&~\leq~ \left(11\sqrt{2}+42\sqrt{2}\sigma\right) \sqrt{T d \log^2(T) \log\left(\frac{T}{d} \right)} \nonumber \\
\E[R_{T}^{\tpi} \mathbf{1}_{\{\mathcal{E}^{c}\}}] &~\leq~ T(1-\PR(\mathcal{E})) \le T\cdot\frac{8S}{T} = 8S = 8\floor{\log_2 T} \le 9\sqrt{2}\log(T). \nonumber
\end{align}
Therefore, the expected regret of the oracle policy can be bounded as
\begin{align}
\E[R_{T}^{\tpi}] = \E[R_{T}^{\tpi} \mathbf{1}_{\{\mathcal{E}\}}] + \E[R_{T}^{\tpi} \mathbf{1}_{\{\mathcal{E}^{c}\}}] \leq \left(20\sqrt{2}+42\sqrt{2}\sigma\right) \sqrt{T d \log^2(T) \log\left(\frac{T}{d} \right)}, \nonumber
\end{align}
which leads to the lemma $\E[R_{T}^{\tpi}] \leq \textit{Constant} \cdot \RegretBound$ where the \textit{Constant} is not related to $T$, $K$ or any $T_k$ for $k \in \setk$.
\end{proof}

\subsection{Proofs in Section~\ref{sec:lower-bound}}
\label{sec:proof-lower-bound}

The following proofs provide the regret lower bounds for three classes of bandits, namely $\banditset_1$, $\banditset_2(T_1)$ and $\banditset_3(T_1,...,T_K)$, discussed in Section \ref{sec:lower-bound}.
In particular, $\banditset_1$ is defined as the class of $K$-armed unit-variance Gaussian bandits where all arms are stationary,
$\banditset_2(T_1)$ is similar to $\banditset_1$ except that its first arm has a (minimum) period $T_1$,
and $\banditset_3(T_1,...,T_K)$ is the class where all arms are periodic with (minimum) periods $T_1,...,T_K$.
By these definitions, bandits in these classes satisfy Assumption \ref{asp:mean-reward} and \ref{asp:sub-Gaussian}, and we assume the periods of arms satisfy Assumption \ref{asp:period} as well.
For simplicity, we abbreviate $\banditset_2(T_1)$ as $\banditset_2$ and $\banditset_3(T_1,...,T_K)$ as $\banditset_3$ when it does not effect reading.

\begin{proof}[\textbf{Proof of Lemma~\ref{lem:lowerbound-no-arm-periodic}:}]
This Lemma is an easy implication of the regret lower bound $\Omega(\sqrt{TK})$ of the classic stationary MAB problem.
For completeness, we provide an independent proof here.
The techniques can be referred to Chapter 15 of \cite{Lattimore2020banditalgo} (hereafter LS2020).

We first study a simple case where $\banditset_{1}$ is the class of two-armed bandits.
Let $\Delta \in (0,0.5)$ be a parameter to be chosen later.
Consider two instances $\nu_1$ and $\nu_2$ in $\banditset_1$,
specifically $\nu_1$ with mean rewards $\{ \mu_1^{(\nu_1)}=0.5+\Delta,  ~ \mu_2^{(\nu_1)}=0.5 \}$
and $\nu_2$ with mean rewards $\{ \mu_1^{(\nu_2)}=0.5-\Delta,  ~ \mu_2^{(\nu_2)}=0.5 \}$,
where the superscript on $\mu$ denotes the MAB instance to which the arm belongs.

Let $C_1(T)$ count the number of epochs when the first arm is pulled during the horizon $T$, and $C_2(T)$ count that of the second arm.
For an arbitrary policy $\pi$, we have
\begin{align*}
\E_{\nu_1}[R^\pi_T] \geq \PR_{\nu_1}\left(C_1(T) \leq T/2 \right) \frac{T\Delta}{2}
\quad \text{and} \quad
\E_{\nu_2}[R^\pi_T] \geq \PR_{\nu_2}\left(C_1(T) > T/2 \right) \frac{T\Delta}{2} .
\end{align*}
Since $2\max\{a,b\} \ge a + b$, we have
\begin{align*}
\sup_{\nu\in\banditset_1}\E_{\nu}[R_T^\pi] \geq \frac{1}{2}\big(\E_{\nu_1}[R^\pi_T] + \E_{\nu_2}[R^\pi_T]\big)
\geq \big(\PR_{\nu_1}\left(C_1(T) \leq T/2\right) + \PR_{\nu_2}\left(C_1(T) > T/2 \right) \big) \frac{T\Delta}{4}.
\end{align*}
Then we apply the Bretagnolle-Huber inequality (Theorem 14.2 in LS2020) and get
\begin{align*}
\sup_{\nu\in\banditset_1}\E_{\nu}[R_T^\pi] \geq \exp\left( - D(\PR_{\nu_1},\PR_{\nu_2})\right) \frac{T\Delta}{8},
\end{align*}
where $D(\PR_{\nu_1},\PR_{\nu_2})$ is the Kullback-Leibler divergence (also known as the relative entropy) from $\PR_{\nu_1}$ to $\PR_{\nu_2}$.
We can further simplify $D(\PR_{\nu_1},\PR_{\nu_2})$ using Lemma 15.1 in LS2020 to obtain
\begin{align*}
D(\PR_{\nu_1},\PR_{\nu_2}) = & \E_{\nu_1}\left[C_{1}(T)\right] D\left( \Nscr(0.5 + \Delta,1), \Nscr(0.5 - \Delta,1) \right) + \E_{\nu_1}\left[C_{2}(T)\right] D\left( \Nscr(0.5,1), \Nscr(0.5,1) \right) \\
= & 2\E_{\nu_1}\left[C_{1}(T)\right]\Delta^2 \leq 2T\Delta^2.
\end{align*}
If we set $\Delta = \sqrt{\frac{1}{2T}}$, then we have
\begin{align}
\label{eq:lowerbound-2arm-stationary}
\sup_{\nu\in\banditset_1}\E_{\nu}[R_T^\pi] \geq \exp\left( -2T\Delta^2 \right)\frac{T\Delta}{8} = \frac{1}{8\sqrt{2}e}\sqrt{T}.
\end{align}

For $\banditset_{1}$ with bandits of $K>2$ arms, since $\mu_2,...,\mu_K$ are already known, we can always choose the best arm among $K\ge2$ and discard the suboptimal ones at the beginning of the decision horizon, i.e., it actually reduces to the case of $K=2$.
Hence, the lower bound in \eqref{eq:lowerbound-2arm-stationary} applies and this proves the result.
\end{proof}

\begin{proof}[\textbf{Proof of Lemma~\ref{lem:lowerbound-one-arm-periodic}:}]
{By keeping the best arm and discarding suboptimal ones among arms $k\ge2$, we only need to consider a reduced case of $\banditset_{2}$ with two arms.}

Given $T_1$ is known, we can treat the decision scenarios at the same phase of each cycle as an independent stationary MAB,
and hence each instance in $\banditset_2(T_1)$
can be decomposed into $T_1$ independent sub-instances, each of which belongs to $\banditset_1$ and takes a decision horizon at least of $\lfloor T/T_1 \rfloor$.
Applying Lemma~\ref{lem:lowerbound-no-arm-periodic}, we have
\begin{align}
\label{eq:lowerbound-1arm-periodic}
\sup_{\nu\in\banditset_2}\E_{\nu}[R_T^\pi] \geq T_1 \sup_{\nu'\in\banditset_1}\E_{\nu'}[R_{\lfloor T/T_1 \rfloor}^\pi] \ge \frac{T_1}{8 \sqrt{2}e} \sqrt{\lfloor T/T_1 \rfloor} \ge \frac{1}{16e} \sqrt{T T_1}.
\end{align}
The last inequality holds since Assumption \ref{asp:period} implies $\lfloor \frac{T}{T_1} \rfloor \ge \frac{T}{\sqrt{2}T_1}$.
This completes the proof.
\end{proof}

\begin{proof}[\textbf{Proof of Theorem~\ref{thm:lower-bound}:}]
Since all arms are symmetric, we only need to prove
\begin{align}
\label{eq:lowerbound-allarm-periodic-T1}
\sup_{\nu\in\banditset_3}\E_{\nu}[R_T^\pi] \geq \frac{1}{32e } \sqrt{T T_1}.
\end{align}

When $T_2=...=T_K=1$, $\banditset_3(T_1,...,T_K)$ degenerates to $\banditset_2(T_1)$, and hence we can  invoke Lemma \ref{lem:lowerbound-one-arm-periodic} to obtain \eqref{eq:lowerbound-allarm-periodic-T1}.
However, if we consider $\banditset_3$ with $T_k\geq2$ for some $k\ge2$, i.e., certain arms are indeed non-stationary, the above argument does not proceed.
Note that the expected regret $f_\pi(\nu) \coloneqq \E_{\nu}[R_T^\pi]$ can be viewed as a functional of $\nu  \in \banditset_{2} \cup \banditset_{3}$ with parameter $\pi$.
To establish a general lower bound, we introduce a metric for any pair of MAB instances $(\nu_1,\nu_2)$ in $\banditset_{2} \cup \banditset_{3}$ 
as
\begin{align}
\label{eq:bandit-metric}
\Mscr(\nu_1,\nu_2) =\left(\sum_{k=1}^{K}\sum_{p=1}^{T_k}\left(\mu_{k,p}^{(\nu_1)} - \mu_{k,p}^{(\nu_2)}\right)^2\right)^{\frac{1}{2}}.
\end{align}

We have the following two facts with the metric $\Mscr$:
\begin{enumerate}
\item $\banditset_{3}$ is \emph{dense} in $\banditset_{2} \cup \banditset_{3}$.
For any instance $\nu_1 \in \banditset_{2}(T_1)$,
by perturbing its mean rewards $\mu_{k,t}^{(\nu_1)}$ for certain arms $k\ge2$,
we can make these arms periodic and hence obtain a new instance $\nu_2 \in \banditset_{3}(T_1,...,T_K)$ which has the required periods.
For any $\delta>0$, we can make such perturbation slight enough to achieve $\Mscr(\nu_1,\nu_2) < \delta$.

\item $f_\pi(\nu)$ is \emph{continuous} in $\nu$ and this continuity is \emph{uniform} with respect to $\pi$.
Specifically, given $\nu_1 \in \banditset_{2} \cup \banditset_{3}$, for any $\epsilon>0$, there exists $\delta>0$ such that $\abs{f_\pi(\nu_1) - f_\pi(\nu_2)} < \epsilon$ holds for any $\nu_2 \in \banditset_{2} \cup \banditset_{3}$ with $\Mscr(\nu_1,\nu_2) <\delta$.
\end{enumerate}

For any policy $\pi$ that knows $T_k$'s, Lemma \ref{lem:lowerbound-one-arm-periodic} implies that some $\nu_1 \in \banditset_2$ yields $f_\pi(\nu_1) \geq \frac{1}{16e} \sqrt{T T_1}$.
As a result of above two facts, when choosing $\epsilon = \frac{1}{32e} \sqrt{T T_1}$ and slightly perturbing $\nu_1$ accordingly, we can always find a suitable $\nu_2 \in \banditset_{3}$ satisfying $\abs{f_\pi(\nu_1) - f_\pi(\nu_2)} < \epsilon$. Therefore, we have
\begin{align}
\label{eq:lowerbound-allarm-periodic}
    f_\pi(\nu_2) \geq f_\pi(\nu_1) - \abs{f_\pi(\nu_1) - f_\pi(\nu_2)} \geq \frac{1}{32e} \sqrt{T T_1},
\end{align}
which leads to Theorem~\ref{thm:lower-bound}.

The first fact is quite obvious.
For example, given a set of periods $\{T_1,...,T_K\}$ and any $\nu_1 \in \banditset_{2}(T_1)$,
we may create $\nu_2$ by setting its mean rewards as the following perturbation,
\begin{equation}
\mu_{k,t}^{(\nu_2)} =
\left\{
    \begin{array}{ll}
      \mu_{k,t}^{(\nu_1)} + \delta/\sqrt{2K}, & \quad  \text{for any arm $k\ge2$ with $T_k\ge2$ and any $t \equiv 1 (\mathrm{mod}\, T_k)$}, \\
      \mu_{k,t}^{(\nu_1)}, & \quad  \text{otherwise}. \\
    \end{array} \right. \nonumber
\end{equation}
Therefore, $\nu_2$ has the required periods, i.e., $\nu_2 \in \banditset_{3}(T_1,...,T_K)$, and satisfies $\Mscr(\nu_1,\nu_2) < \delta$.

Next we prove the second fact.
Given a bandit instance, we denote the random actions and rewards generated by a policy $\pi$ as $(\pi_1,Y_{\pi_1, 1},\dots ,\pi_T,Y_{\pi_T, T})$, and we denote the realized actions and rewards as $(a_1,y_1,\dots ,a_T,y_T)$.
The pseudo-regret can be expressed as
$R_T^\pi(\pi_1,Y_{\pi_1, 1},\dots ,\pi_T,Y_{\pi_T, T}) = \sum_{t = 1}^{T}\Delta_{a_t,t}$, where $\Delta_{a_t,t}\le 1$ is the gap between the optimal mean reward and the mean reward of arm $a_t$ pulled at epoch $t$.
We integrate $R_T^\pi $ over the distribution of \( (\pi_1,Y_{\pi_1, 1},\dots ,\pi_T,Y_{\pi_T, T}) \), which can be represented by
\begin{align*}
q(a_1,y_1,\dots ,a_T,y_T) =\prod_{t = 1}^{T}\pi(a_t|a_1,y_1,\dots ,a_{t - 1},y_{t - 1}) p(y_t -\mu_{a_t,t}),
\end{align*}
where $p(\cdot)$ is the PDF of the standard normal distribution.
Then we have
\begin{align*}
f_\pi(\nu) =&~\int R_T^\pi(a_1,y_1,\dots ,a_T,y_T) q(a_1,y_1,\dots ,a_T,y_T) d\psi(a_1,y_1,\dots ,a_T,y_T)\\
=&~\int \sum_{t = 1}^{T}\Delta_{a_t,t}\prod_{t = 1}^{T}\pi(a_t|a_1,y_1,\dots ,a_{t - 1},y_{t - 1}) p(y_t -\mu_{a_t,t}) \mathrm{d} \psi(a_1,y_1,\dots ,a_T,y_T),
\end{align*}
where $\psi$ is the product of the counting measures and Lebesgue measures on $(a_1,y_1,\dots ,a_T,y_T)$.

For two instances $\nu_1$ and $\nu_2$ in $\banditset_{2} \cup \banditset_{3}$, we have
\begin{align*}
\abs{f_\pi(\nu_1) - f_\pi(\nu_2)} \leq T \int \left|\prod_{t = 1}^{T} p\left(y_t -\mu^{(\nu_1)}_{a_t,t}\right) -\prod_{t = 1}^{T} p\left(y_t -\mu^{(\nu_2)}_{a_t,t}\right)\right| \mathrm{d} \psi.
\end{align*}
We then invoke Lebesgue's dominated convergence theorem.
For fixed $(a_1,y_1,\dots ,a_T,y_T)$, when $\Mscr(\nu_1,\nu_2) \to 0$, we must have
\begin{equation*}
\left|\prod_{t = 1}^{T} p\left(y_t -\mu^{(\nu_1)}_{a_t,t}\right) -\prod_{t = 1}^{T} p\left(y_t -\mu^{(\nu_2)}_{a_t,t}\right)\right| \to 0.
\end{equation*}
Moreover, it is easy to see that it is dominated by an integrable function because of the light tail of Gaussian distributions and the fact that $\mu_{a_t,t}^{(\nu_1)}, \mu_{a_t,t}^{(\nu_2)}\in[0,1]$ .
As a result, \( \abs{f_\pi(\nu_1) - f_\pi(\nu_2)}\to0 \) and the convergence is uniform with respect to $\pi$.
This proves the second fact.
\end{proof}

\section{Periodic MAB with the Same Period}
\label{sec:appendix-extended-results}
In the main text of this paper, we consider a very general periodic MAB model where we do not impose any assumption on the length of period or the structure of variation for the mean reward of any arm.
We develop a two-stage policy with a regret upper bound $\tilde{O}(\sqrt{T\sum_{k=1}^K T_k})$ and we show a general lower bound $\Omega(\sqrt{T\max_{k}\{ T_k\}})$ for any policy.
In practice, the DM may face special (or simplified) settings where the periodic evolution of mean rewards enjoy certain possess.
For example, all arms may have the same period.
It is a common case that the demands of various products tend to have the same seasonal pattern in the learning and earning problem of dynamic pricing.
Regardless of the pricing policy, the seasonality of the demand, in particular, its period, is typically exogenous and largely unaltered.

In this appendix, we investigate a specific setting of the periodic MAB model where the mean rewards of all arms vary over time with the same period, i.e., $T_1=...=T_K$.
We assume that Assumptions \ref{asp:mean-reward}, \ref{asp:sub-Gaussian} and \ref{asp:period} still hold.
Comparing to Algorithm \ref{alg:learning} designed for the general periodic MAB problem, it is possible to develop more efficient learning algorithms with smaller regret upper bound by exploiting the property of the same period, and it is also possible to establish a matching regret lower bound.
We may still use Algorithm \ref{alg:frequency-identification} to estimate the (common) length of periods.
Recall that it incurs a regret of $O(\sqrt{TK})$ in stage one and the regret caused by the wrong estimation is negligible in stage two.
Because estimating periods does not incur significant regret, to simplify the discussion, we may assume $T_k$'s are known in advance and focus on the learning of rewards.

\subsection{Same Period without Other Structures}
\label{sec:appendix-extended-results-SamePeriodDifferentSeasonality}
When $T_1=...=T_K$, the learning problem can be decomposed into $T_1$ subproblems, each of which is a classic stationary MAB problem with $K$ arms and learning horizon $T/T_1$.
Without additional seasonality structures, these subproblems are independent of each other, and hence need to be learned separately.
As a result, the optimal rate of regret $O(T_1\sqrt{KT/T_1 })=O(\sqrt{KTT_1})$ can be achieved when applying classic MAB algorithms, e.g., UCB, to each subproblem.
Note that in this case, the regret bound in Theorem~\ref{thm:upper_bound} matches the lower bound.

\subsection{Same Period with a Fixed Best Arm}
\label{sec:appendix-extended-results-SamePeriodSameSeasonality}
In addition to sharing a common period, suppose that the best arm remains as the same over time.
More precisely, there exists $k\in \Kscr$ such that $\mu_{k,t}=\max_{k'\in \Kscr}\mu_{k',t}$ for all $t=1,\dots,T_1$.
This is a frequently encountered situation in practice, for example, arms (selling of products) have the same seasonality and their mean rewards (demands) evolve in parallel.
Given the knowledge that the best arm is fixed, we propose the following sequential-elimination-based learning policy.

\begin{algorithm}
    \caption{sequential-elimination-based learning}
    \label{alg:learning-sequential-elimination}

    \begin{algorithmic}[1]
        \State Input: $T$, $K$, $\sigma$ and $T_1$
        \State Initialize $\Ascr_1 \gets \left\{1,\dots,K\right\}$
        \For{$s=1,2,...$}
            \State Choose $n_s = \ceil{2^{2+2s}\log(KTs^2)/T_1}T_1$ \label{alg:learning-sequential-elimination-numberpulled}
            \State Pull each arm $k \in \Ascr_s$ exactly $n_s$ times and let $\banditmean_{k}^{(s)}$ be the average of these rewards
            \State Update active set $\Ascr_{s+1} \gets \left\{k \in \Ascr_s: \banditmean_{k}^{(s)} \geq \max_{k' \in \Ascr_s} \banditmean_{k'}^{(s)} - 2^{-s}\sigma \right\}$
        \EndFor
    \end{algorithmic}
\end{algorithm}

Comparing to Algorithm \ref{alg:learning}, Algorithm \ref{alg:learning-sequential-elimination} relies on the similar idea of gradually eliminating clearly suboptimal arms. However, the implementation is simpler.
Since one arm always has the highest mean reward at each phase of the common period, we pool the information together to infer the optimal arm instead of tracking each individual phase separately.
Specifically, the number of each arm pulled in round $s$, i.e., $n_s$ chosen in Step \ref{alg:learning-sequential-elimination-numberpulled}, is a multiple of $T_1$, hence the estimator $\banditmean_{k}^{(s)}$ is for the average of mean rewards $\sum_{t=1}^{T_1}\mu_{k,t}/T_1$ in a period.
In the following, we show that Algorithm \ref{alg:learning-sequential-elimination} achieves a regret upper bound $\tilde{O}(\sqrt{TK})$, which matches the regret lower bound $\Omega(\sqrt{TK})$ for any policy in this setting.

\begin{proof}
The regret lower bound $\Omega(\sqrt{TK})$ is provided by the result of the classic stationary MAB model as it is a special case here.
For the regret upper bound, Algorithm \ref{alg:learning-sequential-elimination} is extended from Exercise 6.8 in LS2020 which applies to the classic stationary MAB problem,
and our regret analysis tackles the difficulty introduced by the periodic mean rewards.

Without loss of generality, we assume that arm 1 is the best arm.
Let $\bar{\mu}_k := \frac{1}{T_1}\sum_{t=1}^{T_1}\mu_{k,t}$ denote the average of mean rewards in a period for arm $k \in \setk$.
Let $\bar{\Delta}_k := \bar{\mu}_1 - \bar{\mu}_k$ denote the average suboptimal gap of arm $k$.
Given $\mu_{k,t} \in [0,1]$ imposed by Assumption \ref{asp:mean-reward}, we have $\bar{\Delta}_k \in [0,1]$.

We first study arm 1.
Since it is the best arm, we hope it is always kept in the active set, and we control the probability of accidental elimination using the definition of Algorithm \ref{alg:learning-sequential-elimination} as following.
\begin{align*}
\PR \left( 1 \notin \Ascr_{s+1},  ~1 \in \Ascr_{s} \right) & ~\leq~ \PR \left(1 \in \Ascr_{s}, ~ \exists k \in \Ascr_{s}\setminus\{1\}: ~ \banditmean_{k}^{(s)}- \banditmean_{1}^{(s)} > 2^{-s}\sigma  \right)\nonumber \\
& ~\leq~ \sum_{k=2}^{K} \PR \left(\banditmean_{k}^{(s)}- \banditmean_{1}^{(s)} > 2^{-s}\sigma  \right)\nonumber \\
& ~\leq~ \sum_{k=2}^{K} \PR \left( \frac{1}{2\sigma}\left( (\banditmean_{k}^{(s)}-\bar{\mu}_k) - (\banditmean_{1}^{(s)}-\bar{\mu}_1) \right) > \frac{1}{2}(\bar{\Delta}_k/\sigma + 2^{-s})  \right)\nonumber \\
& ~\leq~ K\exp\left(-\frac{n_s(\bar{\Delta}_k/\sigma + 2^{-s})^2}{4} \right) ,
\end{align*}
where the last inequality is due to the concentration for the mean of $2n_s$ independent unit-variance sub-Gaussian random variables.
For the event that arm 1 is eliminated, since we choose $n_s \ge 2^{2+2s}\log(KTs^2)$, we have
\begin{align}
\PR \left(\exists s: 1 \notin \Ascr_{s} \right) \leq \sum_{s=1}^{\infty}\PR \left( 1 \notin \Ascr_{s+1},  1 \in \Ascr_{s} \right) \leq K \sum_{s=1}^{\infty} \exp\left(-\frac{n_s2^{-2s}}{4} \right) \leq K \sum_{s=1}^{\infty} \frac{1}{KTs^2}  ~\leq~ \frac{2}{T}. \label{eq:alg3-term1}
\end{align}

Next we study the elimination of a suboptimal arm $k$.
Similarly, we can establish
\begin{align*}
\PR \left( k \in \Ascr_{s+1}, ~ 1 \in \Ascr_{s},  ~ k \in \Ascr_{s} \right) & ~\leq~ \PR \left(1 \in \Ascr_{s}, ~ k \in \Ascr_{s}, ~ \banditmean_{k}^{(s)}- \banditmean_{1}^{(s)} > -2^{-s}\sigma  \right)\nonumber \\
& ~\leq~ \exp\left(-\frac{n_s(\bar{\Delta}_k/\sigma - 2^{-s})^2}{4} \right).
\end{align*}
Define $s_k \coloneqq \min \left\{s \ge 1: 2^{-s+1} \le \bar{\Delta}_k/\sigma \right\}$, which is a round highly unlikely to be reached by arm $k$.
For the event that arm $k$ stays in the active set and is not eliminated in round $s_k$, we have
\begin{align}
\PR \left(k \in \Ascr_{s_k + 1} \right) & ~\leq~ \PR \left( k \in \Ascr_{s_k+1}, 1 \in \Ascr_{s_k},  k \in \Ascr_{s_k} \right) + \PR \left(1 \notin \Ascr_{s_k}\right) \nonumber \\
& ~\leq~ \exp\left(-\frac{n_{s_k}(\bar{\Delta}_k/\sigma - 2^{-s_k})^2}{4} \right) + \frac{2}{T} \nonumber \\
& ~\leq~ \exp\left(-\frac{n_{s_k} 2^{-2s_k}}{4} \right) + \frac{2}{T} \nonumber \\
& ~\leq~ \frac{3}{T}  . \label{eq:alg3-term2}
\end{align}
There is also a natural upper bound $\lceil T/(KT_1)\rceil$ for the number of screening rounds that arm $k$ can survive, because each round takes at least $KT_1$ epochs.
Here we make a mild assumption that the decision horizon is not too short, so at least one screen round can be completed for all arms, i.e., $T \geq KT_1 (1 + 4 \log(KT))$.

Let $C_k(T)$ count the total number of epochs that arm $k$ has been pulled.
We can derive the following by using \eqref{eq:alg3-term2} and $n_s \le 2^{2+2s}\log(KTs^2) + T_1$.
\begin{align}
\E \left[C_k(T)\right] & ~\leq~  T \PR \left(k \in \Ascr_{s_{k+1}} \right) ~+ \sum_{s=1}^{s_k \land \lceil  T/(KT_1)\rceil } n_s \nonumber \\
& ~\leq~ 3 ~+ \sum_{s=1}^{s_k \land \lceil  T/(KT_1)\rceil } 2^{2+2s}\log(KTs^2) + T_1 s_k . \nonumber
\end{align}
If $\bar\Delta_k \geq \sigma$, then $s_k = 1$. We have
\begin{align}
\E \left[C_k(T)\right] ~\leq~  3 + 4\log(KT) + T_1.  \label{eq:alg3-Ck-term1}
\end{align}
If $\bar\Delta_k < \sigma$, the definition of $s_k$ implies $\bar\Delta_k/\sigma < 2^{-s_k + 2}$, which leads to $2^{2s_k} < 16\sigma^2/\bar{\Delta}_k^2$ and $s_k < \log_{2}(4\sigma/\bar{\Delta}_k)$. We have
\begin{align}
\E \left[C_k(T)\right] & ~\leq~ 3 + \log(KT\ceil{T/(KT_1)}^2)\sum_{s=1}^{s_k} 2^{2+2s} + T_1 \log_{2}(4\sigma/\bar{\Delta}_k)  \nonumber \\
& ~\leq~ 3 + \frac{256\sigma^2}{3\bar\Delta_k^2} \log(KT^3) + T_1 \log_{2}(4\sigma/\bar{\Delta}_k) , \label{eq:alg3-Ck-term2}
\end{align}
where we bound the geometric series and use a simple fact $\ceil{T/(KT_1)} < T$ in the last step.
After combining \eqref{eq:alg3-Ck-term1} and \eqref{eq:alg3-Ck-term2}, we can bound the expected number of pulls as
\begin{align}
\E \left[C_k(T)\right] & ~\leq~ 3 + 4\log(KT) + \frac{256\sigma^2}{3\bar\Delta_k^2} \log(KT^3) + T_1 \left(1 \lor \log_{2}(4\sigma/\bar{\Delta}_k)\right) . \label{eq:alg3-Ck}
\end{align}

The proof relies on the regret decomposition (Lemma 4.5 in LS2020).
Due to the special design of Algorithm \ref{alg:learning-sequential-elimination} that
each arm is explored for a certain amount of complete periods, i.e., $n_s$ chosen in Step \ref{alg:learning-sequential-elimination-numberpulled} is a multiple of $T_1$, we may utilize the average suboptimal gap over a period to express the total regret incurred by pulling arm $k$ as $\bar\Delta_k \E[C_k(T)]$.
There is a trivial issue that the last period of the arm pulled at the end of $T$ may be incomplete if $T$ is not a multiple of $T_1$.
It can be simply fixed by extending the decision horizon $T$ to the nearest multiple of $T_1$, namely $\tilde{T} = \lceil T/T_1 \rceil T_1$, and then we can show
\begin{align}
\E[R^\pi_T] \leq \E[R_{\tilde T}^\pi] = \sum_{k =1}^K \bar\Delta_k \E[C_k(\tilde{T})] & \leq \sum_{k =1}^K \bar\Delta_k \E[C_k(T)] + \sum_{k = 1}^K \E[C_k(\tilde{T}) - C_k(T)], \nonumber \\
\E[R^\pi_T] & \leq \sum_{k =1}^K \bar\Delta_k \E[C_k(T)] + T_1. \label{eq:alg3-regret-decomposition}
\end{align}
Above inequalities hold because $\bar\Delta_k \leq  1$ and the total number of epochs added from $T$ to $\tilde{T}$, i.e., $\sum_{k = 1}^K \E[C_k(\tilde{T}) - C_k(T)]$, is less than $T_1$.

Finally, we bound the regret by plugging \eqref{eq:alg3-Ck} into \eqref{eq:alg3-regret-decomposition} as
\begin{equation}
\label{eq:alg3-regret-termall}
\begin{split}
\E[R^\pi_T]  ~\leq~ & \underbrace{\sum_{k =1}^K \bar\Delta_k \left(3 + 4\log(KT)\right)}_{\text{1st term}} ~+~ \underbrace{\sum_{k =1}^K \left( \frac{256\sigma^2}{3\bar\Delta_k} \log(KT^3) \right) \land \left(\bar\Delta_k \E[C_k(T)]\right)}_{\text{2nd term}}  \\
& ~+~ \underbrace{\sum_{k =1}^K \bar\Delta_k \left( T_1 \left(1 \lor \log_{2}(4\sigma/\bar{\Delta}_k)\right) \right) + T_1}_{\text{3rd term}} .
\end{split}
\end{equation}
For terms in \eqref{eq:alg3-regret-termall}, we have
\begin{align}
\label{eq:alg3-regret-term1}
\text{1st term} & \le  \left(3 + 4\log(KT)\right)\sum_{k =1}^K \bar\Delta_k \le  \left(3 + 4\log(KT)\right)K, \\
\label{eq:alg3-regret-term2}
\text{2nd term} & \le  \frac{16\sigma}{\sqrt{3}} \sqrt{\log(KT^3)} \sum_{k =1}^K \sqrt{\E[C_k(T)]}  \le  \frac{16\sigma}{\sqrt{3}} \sqrt{\log(KT^3)} \sqrt{TK}, \\
\label{eq:alg3-regret-term3}
\text{3rd term} & \le T_1 \sum_{k =1}^K \left(\bar\Delta_k \lor 4\sigma \right) + T_1 \le T_1 K (1 + 4\sigma)+ T_1.
\end{align}
When deriving \eqref{eq:alg3-regret-term2}, we use the following facts, $\min\{a/\bar{\Delta}_k, b\bar{\Delta}_k\} \le \sqrt{ab}$ for $a, b \ge 0$,
Jensen's inequality $\sum_{k =1}^K \sqrt{\E[C_k(T)]} \le  \sqrt{K\sum_{k =1}^K\E[C_k(T)]} $ and $\sum_{k =1}^K\E[C_k(T)]=T$.

The 1st term in \eqref{eq:alg3-regret-term1} and the 2nd term in \eqref{eq:alg3-regret-term2} are $\tilde{O}{(\sqrt{TK})}$.
The 3rd term in \eqref{eq:alg3-regret-term3} is also $\tilde{O}{(\sqrt{TK})}$ if $T_1 \le \sqrt{T/K}$, which is actually a milder requirement than Assumption \ref{asp:period}. Therefore, we conclude that Algorithm \ref{alg:learning-sequential-elimination} attains a regret upper bound $\tilde{O}{(\sqrt{TK})}$.
\end{proof}

\end{document}